\definecolor{darkgreen}{rgb}{0,0.5,0}
\theoremstyle{plain}
\newtheorem{theorem}{Theorem}
\newtheorem{proposition}[theorem]{Proposition}
\newtheorem{lemma}[theorem]{Lemma}
\theoremstyle{definition}
\newtheorem{definition}[theorem]{Definition}
\theoremstyle{remark}
\newtheorem{example}[theorem]{Example}
\newcommand{\bA}{\mathbf{A}}
\newcommand{\bB}{\mathbf{B}}
\newcommand{\bC}{\mathbf{C}}
\newcommand{\bG}{\mathbf{G}}
\newcommand{\bQ}{\mathbf{Q}}
\newcommand{\bS}{\mathbf{S}}
\newcommand{\bT}{\mathbf{T}}
\newcommand{\bU}{\mathbf{U}}
\newcommand{\bV}{\mathbf{V}}
\newcommand{\bW}{\mathbf{W}}
\newcommand{\bY}{\mathbf{Y}}
\newcommand{\bZ}{\mathbf{Z}}
\newcommand{\bb}{\mathbf{b}}
\newcommand{\bm}{\mathbf{m}}
\newcommand{\bp}{\mathbf{p}}
\newcommand{\bs}{\mathbf{s}}
\newcommand{\bx}{\mathbf{x}}
\newcommand{\cI}{\mathcal{I}}
\newcommand{\cM}{\mathcal{M}}
\newcommand{\cO}{\mathcal{O}}
\newcommand{\cP}{\mathcal{P}}
\newcommand{\eps}{\varepsilon}
\newcommand{\N}{\mathbb{N}}
\newcommand{\R}{\mathbb{R}}
\newcommand{\wt}{\widetilde}
\DeclareMathOperator*{\argmax}{argmax}
\DeclareMathOperator*{\argmin}{argmin}
\newcommand{\multiset}[1]{\{\!\{ #1 \}\!\}}
\newcommand{\type}{\mathrm{type}}
\newcommand{\obj}{\mathrm{obj}}
\newcommand{\covered}{{\normalfont\textsc{covered}}}
\newcommand{\OPT}{{\normalfont\textsc{OPT}}}
\newcommand{\ALG}{{\normalfont\textsc{ALG}}}
\newcommand{\OurTool}{{\normalfont\textsc{HARP}}}
\newcommand{\OurProblem}{{\normalfont\textsc{MOPGP}}}
\newcommand{\OurAlg}{{\normalfont\textsc{MultistepPlanning}}}
\newcommand{\OurAlgWithAdvice}{{\normalfont\textsc{MultistepPlanningWithAdvice}}}
\title{Optimizing Health Coverage in Ethiopia: A Learning-augmented Approach and Persistent Proportionality Under an Online Budget }
\author{
Davin Choo\thanks{These authors contributed equally}\\
John A. Paulson School of Engineering and Applied Sciences\\
Harvard University\\
\and
Yohai Trabelsi\footnotemark[1]\\
John A. Paulson School of Engineering and Applied Sciences\\
Harvard University\\
\and
Fentabil Getnet\\
National Data Management and Analytics Center for Health\\
Ethiopian Public Health Institute\\
\and
Samson Warkaye Lamma\\
National Data Management and Analytics Center for Health\\
Ethiopian Public Health Institute\\
\and
Wondesen Nigatu\\
Primary Healthcare and Community Engagement Lead Executive Office\\
Ministry of Health, Ethiopia\\
\and
Kasahun Sime\\
Primary Healthcare and Community Engagement Lead Executive Office\\
Ministry of Health, Ethiopia\\
\and
Lisa Matay\\
Department of Global Health and Population\\
Harvard T.H. Chan School of Public Health\\
\and
Milind Tambe\thanks{These authors jointly supervised this work.}\\
John A. Paulson School of Engineering and Applied Sciences\\
Harvard University\\
\and
St\'{e}phane Verguet\footnotemark[2]\\
Department of Global Health and Population\\
Harvard T.H. Chan School of Public Health\\
}
\begin{document}

\maketitle

\begin{abstract}
As part of nationwide efforts aligned with the United Nations' Sustainable Development Goal 3 on Universal Health Coverage, Ethiopia's Ministry of Health is strengthening health posts to expand access to essential healthcare services. However, only a fraction of this health system strengthening effort can be implemented each year due to limited budgets and other competing priorities, thus the need for an optimization framework to guide prioritization across the regions of Ethiopia. In this paper, we develop a tool, Health Access Resource Planner (\OurTool{}), based on a principled decision-support optimization framework for sequential facility planning that aims to maximize population coverage under budget uncertainty while satisfying region-specific proportionality targets at \emph{every} time step. We then propose two algorithms: (i) a learning-augmented approach that improves upon expert recommendations at any single-step; and (ii) a greedy algorithm for multi-step planning, both with strong worst-case approximation estimation. In collaboration with the Ethiopian Public Health Institute and Ministry of Health, we demonstrated the empirical efficacy of our method on three regions across various planning scenarios.    
\end{abstract}

\section{Introduction}
\label{sec:intro}

Ethiopia, the second most populous country in Africa, is home to over 130 million people across 12 regional states and two chartered cities, further divided into zones, woredas (districts), and kebeles (the smallest administrative units) \cite{unfpa2022ethiopia,worldometers2022}.
Roughly 76\% of the population lives in rural areas, where access to healthcare is limited compared to urban settings \cite{hendrix2023estimated}.
While the physical accessibility of health facilities has significantly improved, the issue is ensuring the quality of essential healthcare service delivery.

The Ethiopian healthcare system follows a three-tier structure comprising Primary Health Care Units (PHCUs), general hospitals, and specialized hospitals.
PHCUs, which consist of primary hospitals, health centers, and health posts and are responsible for 70–80\% of essential health services, have been expanded under the Ministry of Health’s (MOH) flagship Health Extension Program (HEP), launched in 2003/04 \cite{wang2016ethiopia}.

Unfortunately, persistent challenges remain as the evolution of the HEP is not aligned with the ever increasing demand of the community to get a comprehensive essential healthcare services (health promotion-rehabilitative care) at the health post.
This resulted in inadequate staffing and equipment, and declining and/or stagnating performance in maternal and child health indicators \cite{Teklu2020}.

To address these gaps, the MOH introduced the HEP Optimization Roadmap (2020--2035) \cite{moh2020roadmap}.
A central reform is the reclassification of health posts into three categories: mixed, basic, and comprehensive. 
Comprehensive health posts are designed to deliver a broader suite of services, including childbirth, postnatal care, and chronic disease treatment, and require substantial investments in infrastructure, staffing, and medical supplies.

The roadmap calls for the construction (or upgrading) of over 2,000 comprehensive health posts, with an estimated cost of USD 300k–400k per facility excluding staffing and equipment.
As of now, 260 are operational and 183 are under construction.
While initial decisions used WHO's AccessMod platform (\url{https://www.accessmod.org/}), they did not necessarily incorporate region-specific prioritization preferences (see below) or account for geographic and demographic factors known to drive service disparities \cite{getnet2025inequalities}.

\begin{figure}[htb]
     \centering
     \includegraphics[width=0.49\linewidth]{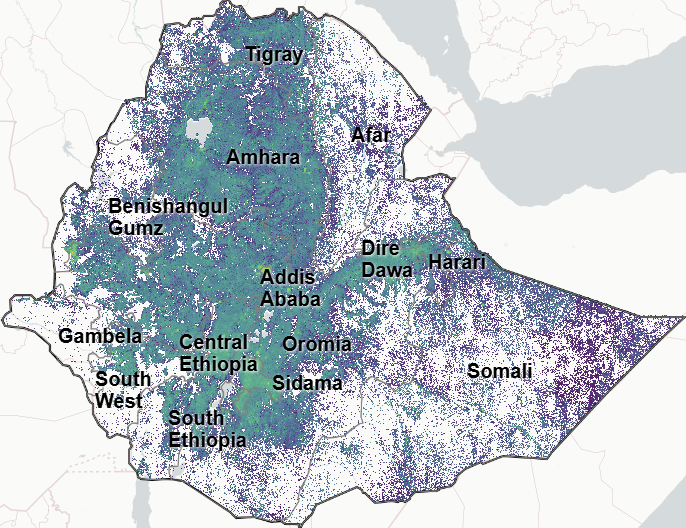}
     \includegraphics[width=0.49\linewidth]{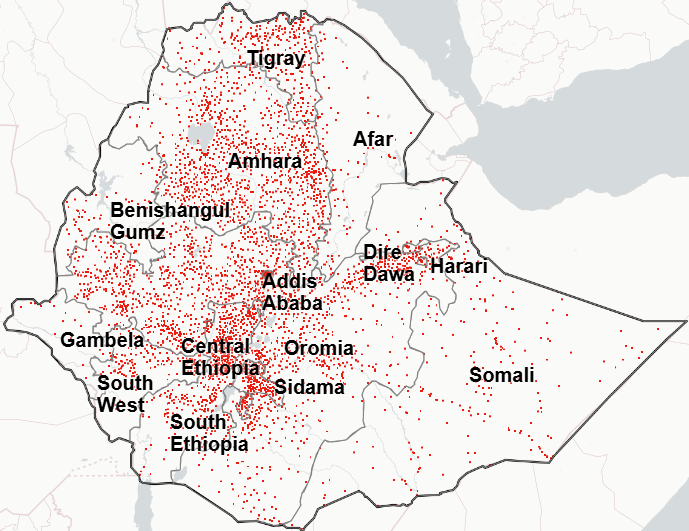}
     \caption{Map of Ethiopia overlaid with population estimates for 2026 (log scale). Left: Brighter yellow areas indicate higher population density. Right: Locations of primary hospitals, health centers, and comprehensive health posts that can provide essential health services are highlighted in red.}
     \label{fig:intro-figure}
\end{figure}

Given limited resources and growing health demands, there is a pressing need for a strategic, data-driven approach to guide equitable facility placement (e.g., where to build the next comprehensive health post given limited annual budget to augment the existing network of healthcare facilities).
This requires integrating diverse datasets, including geolocation of existing facilities, high-resolution population estimates, topographic and official district administrative shapefiles, and health service coverage indicators like skilled birth attendance and postnatal care.
Planning is further complicated by uncertainty in budgets and population forecasts, and the need to ensure fairness at every decision point.
Different regions may also impose varying distributional preferences: some may prioritize population density while others may target poorer districts.

\paragraph{Our contributions.}
In collaboration with the Ethiopian Public Health Institute and Ministry of Health, we formalize the problem and develop a Health Access Resource Planner
(\OurTool{}) tool to help regional planners navigate these challenges and customize allocation strategies to local priorities.

\begin{enumerate}
    \item In~\cref{sec:formulation}, we propose and study the new abstract problem of \underline{m}aximizing a non-decreasing submodular function under \underline{o}nline \underline{p}artition and \underline{g}lobal \underline{p}roportional constraints (\OurProblem{}), which captures the real-world challenge of prioritizing health facility upgrades in Ethiopia.
    Crucially, our framework accommodates region-specific distributional goals by encoding proportionality constraints, making it a flexible, shared decision-support tool for planners with possibly differing regional priorities.
    \item We design algorithms to solve \OurProblem{} with provable worst-case guarantees.
    These include a learning-augmented algorithm that integrates expert-provided selections, supporting regional planning workflows that start with partial plans and seek principled refinement.
    \item In ~\cref{sec:experiments}, \cref{sec:appendix-experiment-sidama} and in    \cref{sec:appendix-experiment}, we evaluate the \OurTool{} tool on some regions of Ethiopia using two possible prioritization rules.
    We show that our approach successfully prioritizes coverage according to the specified criteria while maintaining reasonable overall coverage.
\end{enumerate}
\OurTool{} enables regional planners to solve their specific instance of \OurProblem{} using a unified, principled approach.
While our empirical results offer insight into practical performance, we emphasize that all findings are intended as a proof of concept and not official policy recommendations.
\footnote{}
\section{Problem formulation}
\label{sec:formulation}

\textbf{Notation.}
We use lowercase letters for scalars, boldface for sets and sequences, and calligraphic letters for higher-order objects (e.g., sets of sets, graphs, matroids).
For a set $\bA$, let $|\bA|$ denote its cardinality, and $2^{\bA}$ its power set.
The disjoint union of disjoint sets $\bA$ and $\bB$ is denoted $\bA \uplus \bB$.
We use $\R_{\geq 0}$ for non-negative reals, $\N$ for natural numbers, $\N_{>0} = \N \setminus \{0\}$, and $[n] = \{1, \ldots, n\}$ for any $n \in \N_{>0}$.
For any sequence $\bs \in [r]^b$ and type $q \in [r]$, let $\#(q, \bs)$ denote the number of times $q$ appear in $\bs$, e.g., for $\bs = (1,2,1,3) \in [3]^4$, we have $\#(1, \bs) = 2$, $\#(2, \bs) = 1$, and $\#(3, \bs) = 1$.
\medskip

In the following, we will formally define \OurProblem{} in full generality and then contextualize to how it assists in facility location within any particular region of Ethiopia.

We model \OurProblem{} as a sequential subset selection task, where the goal is to maximize a non-decreasing submodular function (see \cref{sec:related-work} for a formal definition) subject to two constraints: (i) an \emph{online budget constraint} modeling incremental budget arrivals, and (ii) a \emph{global proportional constraint} enforcing type-level proportionality.
Let $\bV = \bV^{(1)} \uplus \ldots \uplus \bV^{(h)}$ denote the ground set partitioned over $h$ rounds.
Each element $e \in \bV$ is associated with a type $\type(e) \in [r]$, inducing an second partitioning $\bV = \bT_1 \uplus \ldots \uplus \bT_r$, where $\bT_q = \{ e \in \bV : \type(e) = q \}$.
At each round $t \in [h]$ (a year), a budget $b^{(t)} \leq |\bV^{(t)}|$ is revealed and we must irrevocably select a subset $\bS^{(t)} \subseteq \bV^{(t)}$ of size $|\bS^{(t)}| \leq b^{(t)}$.
We define the cumulative budget and selection up to round $t$ as $b^{(1:t)} = \sum_{\tau=1}^t b^{(\tau)}$ and $\bS^{(1:t)} = \uplus_{\tau=1}^t \bS^{(\tau)}$ respectively.
The objective is to maximize $f: 2^{\bV} \to \R$, a non-decreasing submodular function over the final selection $\bS = \bS^{(1:h)} \subseteq \bV$, subject to the follow two constraints:
\begin{enumerate}
    \item \textbf{Online budget constraint}: $|\bS^{(t)}| \leq b^{(t)}$ for all $t \in [h]$.
    \item \textbf{Global proportional constraint}: For given proportions $p_1, \ldots, p_r \in [0,1]$ with $\sum_{q=1}^r p_q \leq 1$, the selection should satisfy $|\bS \cap \bT_q| \geq p_q \cdot |\bS|$ for all types $q \in [r]$.
\end{enumerate}

Unfortunately, it may not be feasible to always meet proportional targets.
For instance, if $b^{(1)} = 1$ and we require each of $r \geq 2$ types to have at least $\frac{1}{10r}$ proportion, then no single-element selection in the first time step can meet this condition.
As such, we define the notion of \emph{satisfaction ratio} as
$
\alpha_q(\bS) = \frac{|\bS \cap \bT_q|}{p_q \cdot |\bS|}
$
where
$
\alpha_{\min}(\bS) = \min_{q \in [r]} \alpha_q(\bS)
$
and define the best possible such ratio under budget $b$ as $\beta(b) = \max_{\bA \subseteq \bV, |\bA| = b} \alpha_{\min}(\bA)$.
Note that the proportional constraints on all types \emph{can} be satisfied when $\beta(b) \geq 1$.

To ensure long-term proportional balance, we insist that any feasible solution is a type feasible selection.

\begin{definition}[Type feasible selection]
\label{def:type-feasible-selection}
A subset $\bS^{(1:h)} \subseteq \bV$ of size $b^{(1:h)}$ is \emph{type feasible} if, for all time steps $t \in [h]$:
\begin{enumerate}
    \item $\alpha_{\min}(\bS^{(1:t)}) = \beta(b^{(1:t)})$
    \item Among all such sets, it minimizes $|\bQ(\bS^{(1:t)})|$ where
    \[
    \bQ(\bS^{(1:t)}) = \bigg\{ q \in [r] : \alpha_q(\bS^{(1:t)}) = \beta(|\bS^{(1:t)}|)
    \text{ and } \bV^{(t)} \cap \bT_q \neq \emptyset \bigg\}
    \]

\end{enumerate}
\end{definition}

Intuitively, the set $\bQ(\bS^{(1:t)})$ contains all types with the minimum satisfaction ratio.
Since improving the satisfication ratio $\alpha_{\min}(\cdot)$ requires one to select at least one at least element of each type in $\bQ(\cdot)$, minimizing $|\bQ(\cdot)|$ serves to hedge against potential budget uncertainty.

We are now ready to formally define \OurProblem{}.

\begin{definition}[\OurProblem{}]
Given a non-decreasing submodular function $f: 2^\bV \rightarrow \R$, time horizon $h \in \N$, number of types $r \in \N_{>0}$, ground set $\bV = \bV^{(1)} \uplus \ldots \uplus \bV^{(h)} = \bT_1 \uplus \ldots \uplus \bT_r$, online budgets $\bb = \{ b^{(1)}, \ldots, b^{(h)} \}$, and type proportions $\bp = \{ p_1, \ldots, p_r \}$, the \OurProblem{}$(f, h, r, \bV, \bb, \bp)$ problem seeks to compute a type feasible selection $\bS \subseteq \bV$ such that $|\bS^{(t)}| \leq b^{(t)}$ for all time steps $t \in [h]$.
\end{definition}

Observe that \OurProblem{} is a generalization of the well-studied NP-hard maximum coverage problem for a suitably defined set function $f$ when $t = 1$, $r = 1$, and $p_1 = 0$ (i.e., no proportionality constraints).
As such, one cannot expect to be able to compute an optimum selection $\bS^* \subseteq \bV$ in polynomial time in general.
Instead, we will design and analyze efficient approximation algorithms that can compute some selection $\bS \subseteq \bV$ such that $f(\bS) \geq \alpha \cdot f(\bS^*)$ for some approximation ratio $0 \leq \alpha \leq 1$.

\paragraph{Contextualizing \OurProblem{} to the Ethiopian setting.}
For our problem, we use the 1km-by-1km grid population forecasts for each region in Ethiopia \cite{worldpop_eth_2015_2030}, as well as geographical information that enables computation of travel time \cite{weiss2020global}.

Let $\bU$ denote the space of grid cells considered.
For each year $t \in [h]$, the set $\bV^{(t)}$ correspond to all grid cells in the region for which we can build a comprehensive health post while types correspond to the districts within the different regions in Ethiopia.
Note that $\bV^{(1)}, \ldots, \bV^{(h)}$ are all defined over the same set of grid cells $\bU$ but correspond to building a facility at different time steps.
For any facility located at grid cell $c \in \bU$, let $\covered{}(c) \subseteq \bU$ be the set of grid cells that are reachable from $c$ within 2-hours of travel time, which depends on the topological characteristics of the terrain and may vary across locations and directions.
For a set of facility locations (i.e., subset of grid cells) $\bS$, we write $\covered{}(\bS) = \bigcup_{c \in \bS} \covered{}(c) \subseteq \bU$ to mean the union of the covered cells.

As the planning process occurs over a multi-year horizon where facilities continue to provide coverage in all future time steps once it has been built, we define the objective function $f$ is the number of people who can access a comprehensive health post within a walking time of at most two hours under the proposed selection.
To be precise,
\begin{equation}
\label{eq:ethiopia-objective-f}
f(\bS) = \sum_{t=1}^h \sum_{c \in \covered{}(\bS^{(1:t)})} w^{(t)}_c
\end{equation}
where $w^{(t)}_c$ is the population forecast for year $t$ for grid cell $c$.
We formally show that the above objective \cref{eq:ethiopia-objective-f} is submodular in \cref{sec:appendix-ethiopia-objective-f}.
\cref{tab:ethiopia-mapping} summarizes the mapping while \cref{example:formulation} illustrates our formulation visually.

\begin{table}[htb]
\centering
\begin{tabular}{@{}cc@{}}
\toprule
\textbf{Parameters for a region} & \textbf{Component in \OurProblem{}}\\
\midrule
5-year planning horizon & $h = 5$ time steps \\
Districts within a region & $r$ types \\
Annual building budget & Online budgets $b^{(t)}$ \\
Location, at year $t$ & The set $\bV^{(t)}$ \\
Posts built for year $t$ & $\bS^{(t)} \subseteq \bV^{(t)}$ with $|\bS^{(t)}| \leq b^{(t)}$\\
2-hour accessibility goal & Determines $c \in \covered{}(\bS^{(1:t)})$ \\
Forecast for cell $c$ in year $t$ & Population weight $w^{(t)}_c$ \\
Coverage objective & $f$ defined in \cref{eq:ethiopia-objective-f} \\
District prioritization & Proportions $p_1, \ldots, p_r$ \\
Persistent proportionality & Type feasibility in \cref{def:type-feasible-selection} \\
\bottomrule
\end{tabular}
\caption{Mapping the Ethiopian context to \OurProblem{}.}
\label{tab:ethiopia-mapping}
\end{table}
\begin{figure}[htb]
    \centering
    \includegraphics[width=\linewidth]{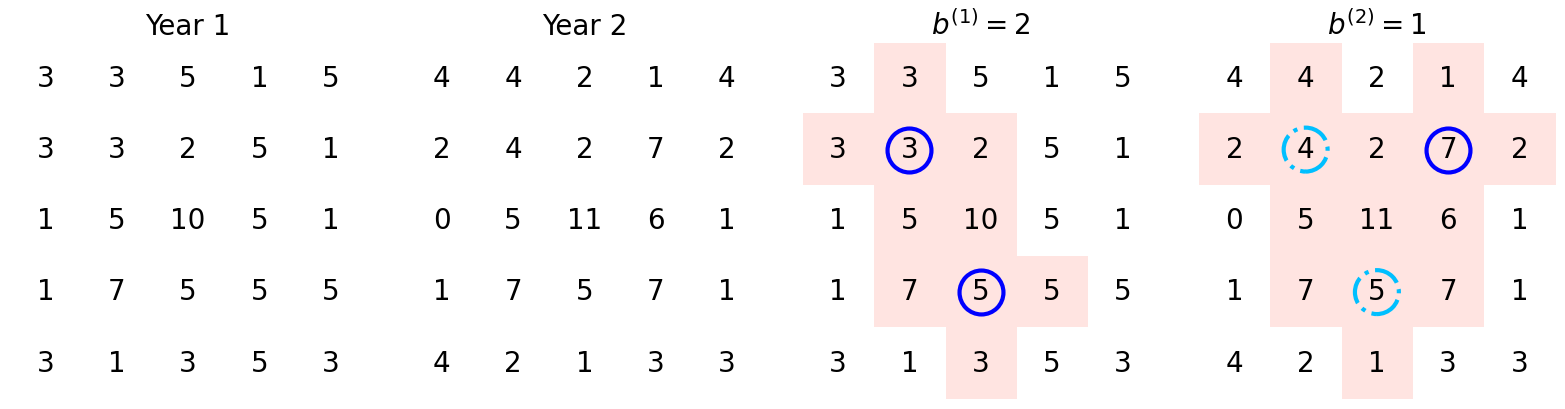}
    \caption{An example illustrating \OurProblem{} notation defined in Section~\ref{sec:formulation}.
    Health post locations are indicated by {\color{blue}blue} circles.
    Covered cells are colored.
    Observe that the two facilities built in Year 1 continue to provide coverage in Year 2.
    }
    \label{fig:example-formulation}
\end{figure}

\begin{example}
\label{example:formulation}
Consider the $5 \times 5$ population grid in \cref{fig:example-formulation} over $h = 2$ time steps, with $b^{(1)} = 2$ and $b^{(2)} = 1$.
Suppose a facility at any cell provides coverage for any cell that is of Manhattan distance 1 from it.
If we build two health posts at coordinate $\bS^{(1)} = \{(2,2), (4,3)\}$ at the first time step and another at $\bS^{(2)} = \{(2,4)\}$ at the second time step, then we see that $f(\bS) = f(\bS^{(1)} \uplus \bS^{(2)}) = 110$, where we obtain a coverage of $46$ from Year $1$ and $64$ in Year $2$.
Observe that the population can change over time and health posts continue to provide coverage in future time steps once built.
\end{example}

\section{Key concepts and related work}
\label{sec:related-work}

After relating our formulation of \OurProblem{} to the known concepts of submodular optimization and matroid theory, we review related work on optimization under these structures, highlighting limitations of existing methods in addressing the sequential and proportional constraints of \OurProblem{}.
Finally, we briefly introduce learning-augmented algorithms.
For more related work, see \cref{sec:appendix-related-work}

\medskip
\noindent
\textbf{Submodular Functions.}
Let $\bV$ be a finite ground set (e.g., the pairs (location, year) for building health posts) and $f : 2^{\bV} \to \R$ be a set function, e.g., a function that assigns a coverage value to any given set of locations where health posts are built.
We say that $f$ is non-decreasing if $f(\bA) \leq f(\bB)$ for all $\bA \subseteq \bB \subseteq \bV$, and $f$ is submodular if for all $\bA \subseteq \bB \subseteq \bV$ and $e \in \bV \setminus \bB$, $f(\bA \cup \{e\}) - f(\bA) \geq f(\bB \cup \{e\}) - f(\bB)$.
It is known that a greedy selection gives a $1 - 1/e$ approximation for maximizing non-decreasing submodular functions under a cardinality constraint \cite{nemhauser1978analysis}.
For notational convenience, we will write the marginal gain function $g$ with respect to $f$ as $g(\bA, \bB) = f(\bA \uplus \bB) - f(\bA)$ for any two disjoint subsets $\bA, \bB \subseteq \bV$.

\medskip
\noindent
\textbf{Matroids.}
Matroids encompass a wide range of constraints that enable efficient optimization algorithms that admit provable approximation guarantees.
In this work, we are particularly interested in the special class of partition matroid \cite{recski1973partitional} which has applications like load balancing, scheduling, and constrained subset selection.

\begin{definition}[Partition matroid]
Let $\bV = \bV_1 \uplus \ldots \uplus \bV_r$ be a partition of $\bV$ and $x_1, \ldots, x_r \in \N$ be upper bounds.
Then, the independence set $\cI$ of a partition matroid $\cM = (\bV, \cI)$ is defined as $\cI = \{ \bS \subseteq \bV : |\bS \cap \bV_q| \leq x_q, \forall q \in [r] \}$.
\end{definition}

In our Ethiopian setting, budget and proportionality constraints each induce a partition matroid over the set of candidate facilities, e.g., budget at time step $t$ imposes the constraint that any valid selection $\bS$ satisfies $|\bS^{(t)} \cap \bV^{(t)}| \leq b^{(t)}$.

\medskip
\noindent
\textbf{Online maximization under matroid constraints.}
A closely related problem to \OurProblem{} is that of Problem (1.6) of \cite{fisher1978analysis}, which we present using our notation in \cref{def:fisher-problem-definition}.

\begin{definition}[Maximizing a non-decreasing submodular function subject to $k$ matroid intersections over $h$ rounds]
\label{def:fisher-problem-definition}
Let $\bV = \bV^{(1)} \uplus \ldots \uplus \bV^{(h)}$ be a set of $h$ disjoint ground sets, $f: 2^{\bV} \to \R$ be a non-decreasing submodular set function, and $\cM_1 = (\bV, \cI_1), \ldots, \cM_k = (\bV, \cI_k)$ be $k$ matroids over $\bV$.
The goal is to find a subset $\bS \subseteq \bV$ that maximizes $f(\bS)$ subject to $\bS \cap \bV^{(t)} \in \cI_1 \cap \ldots \cap \cI_k$ for all $t \in [h]$.
\end{definition}

If budget uncertainty were absent and only final-round proportional constraints mattered, the problem could be modeled using two partition matroids ($k = 2$), allowing the local greedy algorithm of \cite{fisher1978analysis} to achieve a $1/(k+1) = 1/3$ approximation.
However, \OurProblem{} differs fundamentally from \cref{def:fisher-problem-definition} in that matroidal constraints -- driven by online budget arrivals -- are revealed incrementally over time.
This renders direct application of their method infeasible.

For the single-matroid ($k = 1$) case, \cite{calinescu2011maximizing} achieve a $1 - 1/e$ approximation via continuous greedy methods, while \cite{goundan2007revisiting} show that local greedy retains a $1/(\alpha + 1)$ approximation even when optimizing a proxy function $g$ that approximates the true objective $f$ within a factor of $\alpha \geq 1$.

To address evolving constraints, one may consider online optimization frameworks.
The submodular secretary problem \cite{babaioff2007matroids,bateni2013submodular} assumes fixed matroids and randomly arriving elements, whereas our setting reverses this: the ground set is fixed, but feasibility constraints (budgets) are revealed over time.
Recent work by \cite{cristi2024online,santiago2025constant} explores online feasibility arrival but focuses on linear objectives and does not support global, type-based fairness constraints.

Finally, fairness-aware submodular optimization \cite{celis2019classification, tsang2019group} has addressed proportionality constraints, but these methods typically require full input access and use continuous relaxations or local exchanges, making them unsuitable for our setting of irrevocable, sequential decisions under uncertainty.

\medskip
\noindent
\textbf{Representation-constrained coverage and online constraints}
Representation constraints have been studied in coverage problems.
For example, \cite{asudeh2023maximizing} consider a max-$k$-coverage problem with group fairness constraints but enforce rigid group-wise coverage parity, which can lead to infeasibility.
In contrast, our model allows flexible type-level bounds, maintaining feasibility and realism.
Recently, (\cite{cristi2024online}; \cite{santiago2025constant}) explored online problems where constraints, rather than elements, arrive over time, similar to our evolving feasibility model.
However, they focus on maximizing simple linear objectives and do not directly extend to submodular objectives or type-based distributional constraints.

\medskip
\noindent
\textbf{Learning-augmented algorithms.}
In practice, regional governments may already have existing facility selection plans (for any single fixed time step with budget $b$) 
that are constructed using domain knowledge, heuristics, or political considerations.
Such an advice selection often encodes valuable local preferences but may not be optimal under the formal objective.
Learning-augmented algorithms provide a formal framework to improve decision quality by incorporating advice or predictions, while maintaining worst-case guarantees when the advice is poor \cite{lykouris2021competitive}.
The goal is to design algorithms that are both \emph{consistent} (optimal when the advice is perfect) and \emph{robust} (competitive with the best advice-free baseline), ideally with performance degrading gracefully as advice quality worsens.
See \cite{mitzenmacher2022algorithms} for a survey and \cref{sec:appendix-related-work} for further examples.
Closest to us are the works of \cite{liu2024predicted,agarwal2024learning}, which aim to improve update times in dynamic settings.
In contrast, we aim to improve the approximation ratio achievable by any polynomial-time algorithm in a static selection setting.

\section{Algorithmic contributions}
\label{sec:algorithmic-contributions}

We present a learning-augmented\footnote{While ``learning-augmented'' originated from using machine-learned predictions, the framework more broadly encompasses any source of advice, such as expert-curated selections in our setting.} approach for optimizing facility locations at any single time step in \cref{sec:single-step} and analyze a greedy approach for multi-step planning under uncertainty in \cref{sec:multi-step}.
While our learning-augmented approach applies at every decision point, we only provide guarantees for a single fixed time step.\footnote{Single-step guarantees offer interpretable, modular assurances and are especially relevant in dynamic policy settings, where decisions are made incrementally over multiple time steps.}
In ~\cref{sec:impossibility}, we highlight the importance of \cref{def:type-feasible-selection} by showing that one cannot achieve non-trivial  guarantees for \OurProblem{} under online budgets without additional structure on the problem instance.
All proofs are deferred to the appendix.

\subsection{Learning-augmented single-step planning}
\label{sec:single-step}

Our goal in this section is to design a learning-augmented algorithm that uses such advice to produce selections from $\bV$ that are provably no worse and potentially better.
Here, we present the algorithm for the simple case of $r = 1$ type (\cref{alg:LA-one-type}) and show how to extend it to the more general partition matroid setting with $r \geq 1$ in \cref{sec:appendix-single-step}.

A key advantage of the learning-augmented approach is its practicality in real-world decision-making environments.
Rather than discarding or overriding expert input, our method builds on it --- treating existing recommendations as a starting point and refining them in a principled way.
This makes it particularly attractive to policymakers and planners, as it respects domain expertise accumulated over years of fieldwork, local knowledge, and institutional memory.
Importantly, our approach offers formal performance guarantees: it matches the expert plan when it is already optimal, and improves upon it when possible, without ever doing worse than the best advice-free alternative.
This balance between respecting human judgment and augmenting it with algorithmic rigor makes the method especially compelling in collaborative, policy-driven settings.

At the high-level, our algorithmic approach is to augment subsets of the advice selection with greedy selections\footnote{\OurProblem{} generalizes the NP-hard problem of maximum coverage, and greedy selections attain tight approximation guarantees.}
For any subset $\bA' \subseteq \bA$, we fill the remaining $b - |\bA'|$ slots by greedily selecting elements from $\bV \setminus \bA'$ to maximize marginal gain with respect to the non-decreasing submodular set function $g(\cdot, \bA') = f(\cdot \cup \bA') - f(\bA')$.
That is, we greedily choose elements conditioned on already chosen $\bA'$.

\begin{algorithm}[htb]
\caption{Learning-augmented single-step of \OurProblem{}}
\label{alg:LA-one-type}
\begin{algorithmic}[1]
\Statex \textbf{Input}: Elements $\bV$, non-decreasing submodular set function $f: \bV \to \R$ to maximize for, budget $b \geq 1$, advice selection $\bA \subseteq \bV$ of size $|\bA| = b$
\Statex \textbf{Output}: A selection $\bU \subseteq \bV$
\State Define subsets $\bA_0, \bA_1, \ldots, \bA_b$ of $\bA$ where $|\bA_i| = i$ for sizes $i \in \{0, 1, \ldots, b\}$ \Comment{$\bA_0 = \emptyset$ and $\bA_b = \bA$}
\For{$i \in \{0, 1, \ldots, b\}$}
    \State Initialize $\bB_i = \emptyset$
    \While{$|\bB_i| < b - i$}
        \State Add $e^* = \argmax\limits_{e \in \bV \setminus (\bA_i \cup \bB_i)} g(\bA_i \cup \bB_i, \{e\})$ to $\bB_i$
    \EndWhile
    \State Define $\bU_i = \bA_i \uplus \bB_i$ \Comment{$|\bU_i| = |\bA_i| + |\bB_i| = b$}
\EndFor
\State \Return $\bU_{i^*}$, where $i^* = \argmax_{i \in \{0, 1, \ldots, b\}} f(\bU_i)$
\end{algorithmic}
\end{algorithm}

\begin{restatable}{theorem}{LAonetypeguarantee}
\label{thm:LA-one-type-guarantee}
Consider \cref{alg:LA-one-type} and let $\OPT \subseteq \bV$ be an optimal selection.
Then,
\[
f(\bU)
\geq \max_{i \in \{0, 1, \ldots, b\}} f(\bA_i)
+ \frac{1 - \frac{1}{e}}{\left\lceil \frac{|\OPT \setminus \bA_i|}{b-i} \right\rceil} \cdot g(\OPT, \bA_i)
\]
\end{restatable}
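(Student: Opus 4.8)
\emph{Proof plan.} The plan is to analyze each candidate $\bU_i$ produced by \cref{alg:LA-one-type} separately, establish the per-index inequality $f(\bU_i) \geq f(\bA_i) + \frac{1-1/e}{\lceil |\OPT \setminus \bA_i|/(b-i)\rceil}\, g(\OPT,\bA_i)$, and then conclude using the fact that the algorithm outputs $\bU = \bU_{i^\ast}$ with $i^\ast = \argmax_i f(\bU_i)$, so that $f(\bU) \geq f(\bU_i)$ for \emph{every} $i$ and hence dominates the maximum over $i$ of the right-hand side. Two boundary indices are degenerate and handled up front: for $i = b$ we have $\bU_b = \bA_b = \bA$, so $f(\bU) \geq f(\bA_b)$ and the fraction is read as $0$ (the greedy phase is empty); and if $\OPT \subseteq \bA_i$ then $g(\OPT,\bA_i) = 0$ while $f(\bU_i) \geq f(\bA_i)$ holds by monotonicity since $\bU_i \supseteq \bA_i$. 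So fix $i$ with $0 \le i < b$ and $m := |\OPT \setminus \bA_i| \geq 1$, and set $\ell := b - i \geq 1$ and $k := \lceil m/\ell \rceil \geq 1$.

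The key structural observation is that, for this fixed $i$, the inner \textbf{while} loop of \cref{alg:LA-one-type} is precisely the classical greedy algorithm run for $\ell$ iterations on the set function $\psi(\bX) := g(\bA_i, \bX) = f(\bA_i \cup \bX) - f(\bA_i)$ over the ground set $\bV \setminus \bA_i$; note $\psi$ is non-decreasing submodular (a contraction of a submodular function by a fixed set) with $\psi(\emptyset) = 0$, and its greedy output $\bB_i$ satisfies $\psi(\bB_i) = f(\bU_i) - f(\bA_i)$. By the standard argument underlying \cite{nemhauser1978analysis} --- greedy run for $\ell$ steps attains at least a $1 - 1/e$ fraction of $\max\{\psi(\bT) : \bT \subseteq \bV \setminus \bA_i,\ |\bT| \le \ell\}$ --- we get $\psi(\bB_i) \geq (1 - 1/e)\,\psi(\bT)$ for every $\bT \subseteq \bV \setminus \bA_i$ with $|\bT| \le \ell$. (Here $|\bV| \geq |\bA| = b$ guarantees the greedy phase can indeed run for $\ell = b - i$ steps.) Now partition $\OPT \setminus \bA_i$ into $k$ blocks $\bO_1, \ldots, \bO_k$, each of size at most $\ell$, which is possible exactly because $k\ell \geq m$. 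Applying the greedy bound to each block and averaging,
\[
f(\bU_i) - f(\bA_i) \;=\; \psi(\bB_i) \;\geq\; (1 - 1/e)\max_{j \in [k]} \psi(\bO_j) \;\geq\; \frac{1 - 1/e}{k}\sum_{j=1}^{k}\psi(\bO_j).
\]

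To finish, I would collapse the sum using subadditivity of marginal gains: since the blocks are pairwise disjoint and disjoint from $\bA_i$, submodularity of $f$ gives $\psi(\bX) + \psi(\bY) \geq \psi(\bX \cup \bY)$ for disjoint $\bX, \bY \subseteq \bV \setminus \bA_i$ (apply $f(\bP) + f(\bQ) \geq f(\bP \cup \bQ) + f(\bP \cap \bQ)$ with $\bP = \bA_i \cup \bX$ and $\bQ = \bA_i \cup \bY$, noting $\bP \cap \bQ = \bA_i$), and inductively $\sum_{j=1}^k \psi(\bO_j) \geq \psi\big(\bigcup_j \bO_j\big) = f(\OPT \cup \bA_i) - f(\bA_i)$. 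Hence $f(\bU_i) \geq f(\bA_i) + \frac{1-1/e}{k}\big(f(\OPT \cup \bA_i) - f(\bA_i)\big)$. Since $\OPT$ is an optimal selection and $|\bA_i| = i \le b$, monotonicity gives $f(\OPT) \geq f(\bA_i)$, so $f(\OPT \cup \bA_i) - f(\bA_i) \geq f(\OPT \cup \bA_i) - f(\OPT) = g(\OPT, \bA_i)$, which yields the per-index inequality. Taking the maximum over $i$ and using $f(\bU) = f(\bU_{i^\ast}) \geq f(\bU_i)$ for all $i$ completes the proof.

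The step I expect to require the most care is pinning down the $\lceil m/\ell \rceil$ denominator exactly: one must (i) invoke the single-batch greedy guarantee in the ``best set of size \emph{at most} $\ell$'' form rather than ``exactly $\ell$'', making sure it remains valid even when $|\bV \setminus \bA_i|$ barely exceeds $\ell$ and when a target block has size strictly less than $\ell$, and (ii) verify the partition of $\OPT \setminus \bA_i$ into exactly $k$ blocks of size $\le \ell$ is legitimate. The remaining manipulations --- identifying the loop with greedy, subadditivity of marginals, and the monotonicity step relating $f(\OPT \cup \bA_i) - f(\bA_i)$ to $g(\OPT, \bA_i)$ --- are routine, and the boundary indices $i = b$ and $\OPT \subseteq \bA_i$ only need a sentence each about the natural conventions.
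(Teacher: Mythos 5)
Your proposal is correct and follows essentially the same route as the paper's proof: the paper's \cref{lem:LA-one-type-optimality} and \cref{lem:LA-one-type-guarantee-for-any-subset} carry out exactly your partition of $\OPT \setminus \bA_i$ into $\lceil |\OPT\setminus\bA_i|/(b-i)\rceil$ blocks, the subadditivity of marginal gains, and the $(1-1/e)$ greedy guarantee for the contracted function $g(\bA_i,\cdot)$, before taking the maximum over $i$. Your version merely reads the chain of inequalities in the opposite direction (bounding greedy from below via the best block rather than bounding $\OPT$'s marginal from above via the best size-$(b-i)$ subset), and is if anything slightly more explicit than the paper about the final step passing from $f(\OPT\cup\bA_i)-f(\bA_i)$ to $g(\OPT,\bA_i)$.
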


\begin{figure}[htb]
    \centering
    \includegraphics[width=\linewidth]{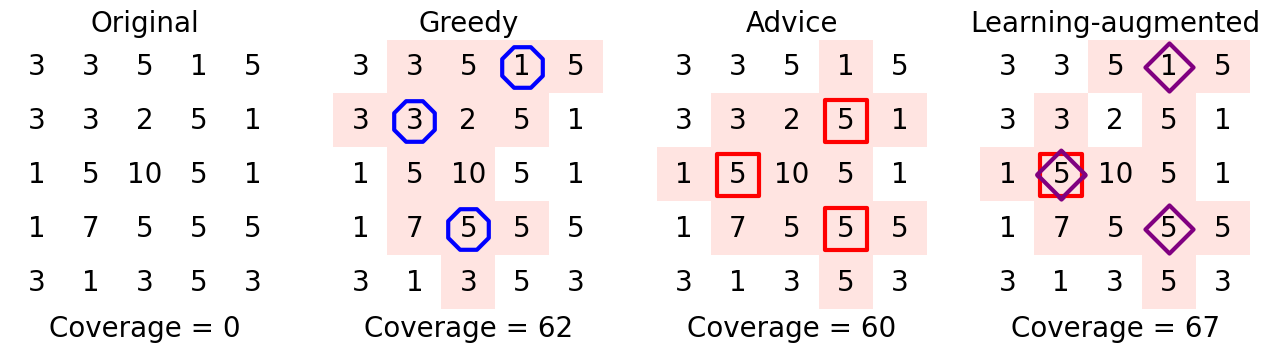}
    \caption{Using the facility at $(2,1)$ as the partial advice, \cref{alg:LA-one-type} produces a selection with improved coverage.
    Note: The top-left cell is coordinate $(0,0)$.}
    \label{fig:example-learning-augmented}
\end{figure}

\cref{alg:LA-one-type} is essentially an algorithm for size-constrained non-decreasing submodular maximization.
Observe that $\bU_0$ is the advice-free greedy selection $\bS$, and $\bU_b$ is the advice selection $\bA$, so we always have $f(\bU_{i^*}) \geq \max\{ f(\bU_0), f(\bA) \}$ and thus the output $\bU_{i^*} \subseteq \bV$ of our learning-augmented algorithm is at least $(1 - \frac{1}{e})$-robust and $1$-consistent.
This is because $f(\bU_{i^*}) \geq f(\bU_0) \geq (1 - \frac{1}{e}) \cdot f(\OPT)$ by submodularity of $f$, and $f(\bU_{i^*}) = f(\OPT)$ when $f(\bA) = f(\OPT)$.
Our approach of \cref{thm:LA-one-type-guarantee} naturally extends to $r \geq 1$ types for type-decomposable objective functions, but with the base guarantee of $f(\bU_0) \geq (1 - \frac{1}{e}) \cdot f(\OPT)$ degrading to $f(\bU_0) \geq \frac{1}{2} \cdot f(\OPT)$ due to the matroidal constraints; see \cref{sec:appendix-single-step} for details.

As a remark, we would like to point out that there is nothing inherently special about the choice of $\bA_0, \ldots, \bA_b$.
From the proof of \cref{thm:LA-one-type-guarantee} in the appendix, one can see that \cref{alg:LA-one-type} can be modified to work with any polynomial number of subsets of $\bA$ while still being polynomial time.

\begin{example}
\label{example:learning-augmented}
\cref{fig:example-learning-augmented} shows three possible selections of $b = 3$ facilities on the first grid of \cref{example:formulation}.
As before, suppose a facility at any cell provides coverage for any cell that is of Manhattan distance 1 from it.
Greedily selecting based on marginal gain produces the selection $\bG = [(3, 2), (0, 3), (1, 1)]$ with $f(\bG) = 62$.
Meanwhile, an advice selection of $\bA = [(2, 1), (1, 3), (3, 3)]$ has $f(\bA) = 60$.
Running \cref{alg:LA-one-type} with the advice facility at coordinate $(2, 1)$ produces the selection $\bU = [(2, 1), (3, 3), (0, 3)]$ with $f(\bU) = 67$, which also happens to be an optimum selection.
\end{example}

\subsection{Multi-step planning under budget uncertainty}
\label{sec:multi-step}

We now show that a variant of the local greedy algorithm from \cite{fisher1978analysis}, adapted to satisfy type feasibility, is both practical and provably robust.
A key difficulty in solving \OurProblem{} lies in the abstract nature of type feasibility: \cref{def:type-feasible-selection} specifies high-level desiderata but does not provide a directly actionable definition.
While the objective $f$ is non-decreasing and submodular, the proportional constraints across types introduce combinatorial complexity that does not decompose neatly.

To make the notion of type feasibility operational, we introduce a fixed tie-breaking rule $\sigma: [r] \to [r]$ over types and restrict our attention to a structured subclass of solutions, which we call $\sigma$-type feasible selections.

\begin{definition}[$\sigma$-type feasible selection]
\label{def:sigma-type-feasible-selection}
Let $\sigma: [r] \to [r]$ be an arbitrary total preference ordering over types.
A subset $\bS \subseteq \bV$ of size $b$ is said to be $\sigma$-type feasible if it is type feasible \emph{and} for $\bQ(\bS^{(1:t)})$, favors types according to $\sigma$: if $\sigma(q) < \sigma(q')$, then type $q$ is preferred over $q'$.
\end{definition}

This reduction justifies restricting our attention to the class of $\sigma$-type feasible solutions: they are not only easier to operationalize algorithmically, but also provably approximate any type-feasible solution up to a factor that depends on the granularity of type availability.

Our next result tells us that any optimum $\sigma$-type feasible selection $\OPT_\sigma$ is competitive against any optimum type feasible selection $\OPT$ if at least $k$ elements of each type is chosen per round when applying \OurAlg{}.

\begin{restatable}{theorem}{nosigma}
\label{thm:no-sigma}
Let $\OPT$ be an optimal type-feasible selection and let $\OPT_{\sigma}$ be an optimal $\sigma$-type feasible selection.
Fix an integer $k \in \N_{> 0}$.
If $|\OPT_{\sigma} \cap \bV^{(t)} \cap \bT_q| \geq k$ for all $t \in [h]$ and $q \in [r]$, then $f(\OPT) \leq \frac{k+1}{k} \cdot f(\OPT_{\sigma})$.
\end{restatable}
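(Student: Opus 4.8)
The plan is to show that imposing the fixed tie-breaking order $\sigma$ costs at most a $\tfrac{1}{k+1}$ fraction of the optimum, by exhibiting a \emph{single} $\sigma$-type feasible selection $\bS$ with $f(\bS) \geq \tfrac{k}{k+1}\, f(\OPT)$. Since $\OPT_{\sigma}$ is optimal among $\sigma$-type feasible selections, this gives $f(\OPT_{\sigma}) \geq f(\bS) \geq \tfrac{k}{k+1}\, f(\OPT)$, which rearranges to the claimed $f(\OPT) \leq \tfrac{k+1}{k}\, f(\OPT_{\sigma})$. The selection $\bS$ will be built to have exactly the same per-round, per-type count profile $\big(|\bS \cap \bV^{(t)} \cap \bT_q|\big)_{t,q}$ as $\OPT_{\sigma}$ -- hence $\bS$ is automatically $\sigma$-type feasible, because both conditions of \cref{def:type-feasible-selection} and the $\sigma$-preference in \cref{def:sigma-type-feasible-selection} depend on a selection only through these counts and through which types are available in each round (instance data) -- while reusing as many elements of $\OPT$ as possible.

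The heart of the argument is a structural lemma: for every round $t$ and type $q$, the counts $|\OPT \cap \bV^{(t)} \cap \bT_q|$ and $|\OPT_{\sigma} \cap \bV^{(t)} \cap \bT_q|$ differ by at most $1$. I would prove this by induction on $t$, unpacking $\beta(\cdot)$, $\alpha_{\min}(\cdot)$, and $\bQ(\cdot)$: since $\alpha_{\min}(\bS^{(1:t)}) = \beta(b^{(1:t)})$ is common to both selections and the number of elements a type needs to reach ratio $\beta(b^{(1:t)})$ is pinned down up to rounding, the cumulative counts $|\bS^{(1:t)} \cap \bT_q|$ can differ by at most one; the point is then that the fixed preference order $\sigma$ together with the $|\bQ(\cdot)|$-minimization forces the tie-breaking to be consistent across rounds, so the cumulative discrepancy cannot oscillate in sign, which is exactly what would be needed to create a per-round discrepancy exceeding $1$. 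I expect this lemma -- in particular ruling out oscillation and reconciling the $|\bQ(\cdot)|$-minimization with the $\sigma$-preference -- to be the main obstacle; the rest is bookkeeping.

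Granting the lemma, I construct $\bS$ cell by cell. For each $(t,q)$, insert into $\bS^{(t)} \cap \bT_q$ a uniformly random subset of $\OPT \cap \bV^{(t)} \cap \bT_q$ of size $\min\{\, |\OPT \cap \bV^{(t)} \cap \bT_q|,\; |\OPT_{\sigma} \cap \bV^{(t)} \cap \bT_q| \,\}$, and then top up with fresh elements of $\bV^{(t)} \cap \bT_q$ until the cell contains exactly $|\OPT_{\sigma} \cap \bV^{(t)} \cap \bT_q|$ elements; enough fresh elements exist since $\OPT_{\sigma}$ itself fits that many into the cell. By the lemma, whenever $\OPT$ has strictly more elements than $\OPT_{\sigma}$ in a cell the excess is exactly $1$, so each element of $\OPT$ in that cell is retained in $\bS$ with probability $c/(c+1) \geq k/(k+1)$, where $c = |\OPT_{\sigma} \cap \bV^{(t)} \cap \bT_q| \geq k$ by hypothesis (and with probability $1$ when $\OPT$ does not have an excess there); these retention events can be made independent across cells.

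Finally, by monotonicity $f(\bS) \geq f(\bS \cap \OPT)$, and $\bS \cap \OPT$ is a random subset of $\OPT$ in which each element survives independently with probability at least $\rho := k/(k+1)$. Using that the multilinear extension is concave along rays from the origin (a standard consequence of submodularity) and monotone in the sampling probabilities, we get $\E[f(\bS \cap \OPT)] \geq \rho\, f(\OPT) + (1-\rho)\, f(\emptyset) \geq \rho\, f(\OPT)$, where $f(\emptyset) \geq 0$ holds for the coverage objective \cref{eq:ethiopia-objective-f} (and in general may be assumed after normalization). Hence some realization of $\bS$ satisfies $f(\bS) \geq \tfrac{k}{k+1}\, f(\OPT)$, and since $\bS$ is $\sigma$-type feasible this completes the proof.
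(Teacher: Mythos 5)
Your overall architecture tracks the paper's more closely than you might expect: the paper also reduces the theorem to the structural claim that $\OPT$ exceeds $\OPT_\sigma$'s per-round, per-type quota $|\OPT_\sigma \cap \bV^{(t)} \cap \bT_q|$ by at most one element in every cell $(t,q)$ (\cref{lem:sigma-type-feasible-subset-of-type-feasible-selection}, phrased via a minimal removal set $\bW$ with $|\bW \cap \bV^{(t)} \cap \bT_q| \leq 1$), and then charges the removed elements against a $\tfrac{1}{k+1}$ fraction of $f(\OPT)$ --- deterministically, via $k$ disjoint copies of $\bW$ inside $\OPT \setminus \bW$ and submodularity, where you instead use random retention and the sampling inequality $\E[f(R)] \geq \rho f(\OPT)$. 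Your second half is sound: that inequality holds for monotone submodular $f$ with $f(\emptyset)\ge 0$ whenever each element's marginal retention probability is at least $\rho$ (independence is not even needed), and a selection with the same per-round, per-type count profile as $\OPT_\sigma$ is indeed $\sigma$-type feasible. The genuine gap is exactly where you predicted it: the structural lemma is asserted rather than proved, the two-sided version you state is false, and --- more importantly --- the one-sided version your retention bound actually needs can also fail under the theorem's hypothesis.

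The reason is that type feasibility at cumulative budget $B$ constrains only the \emph{multiset} of cumulative counts (the balanced profile, up to a choice of which types receive the ``extra'' elements), and $\OPT$, being merely type feasible, may make that choice differently and \emph{inconsistently} at consecutive checkpoints; the cumulative discrepancy for a fixed type can therefore flip from $-1$ to $+1$ across one round, concentrating a per-round excess of $2$. Concretely: take $r=3$ types with $p_1=p_2=p_3=\tfrac{1}{3}$, $\sigma$ the identity, all types amply available in each round, $k=2$, and $b^{(1)}=b^{(2)}=8$. Then $\OPT_\sigma$ has cumulative profiles $(3,3,2)$ and $(6,5,5)$, hence per-round profiles $(3,3,2)$ and $(3,2,3)$, so the hypothesis $|\OPT_\sigma \cap \bV^{(t)} \cap \bT_q| \geq 2$ holds. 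But the cumulative profiles $(3,2,3)$ and $(5,6,5)$ are equally type feasible (same $\alpha_{\min}$ and same $|\bQ(\cdot)|$ at both checkpoints), and an adversarial $f$ can make them the unique optimum, giving $\OPT$ the round-$2$ profile $(2,4,2)$: four type-$2$ elements in round $2$ against $\OPT_\sigma$'s two. Your construction then retains only $2$ of those $4$, a marginal retention probability of $\tfrac{1}{2} < \tfrac{2}{3} = \tfrac{k}{k+1}$, and the final bound no longer follows. (The same example stresses the paper's \cref{lem:sigma-type-feasible-subset-of-type-feasible-selection} as well --- its Case 2 asserts that a swap strictly decreases $|\bQ(\cdot)|$, which fails here because removing $e$ re-inserts $\type(e)$ into $\bQ$ --- so ruling out this oscillation, or strengthening the hypothesis to exclude it, really is the crux and cannot be waved through.)
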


Note that we typically have $k \geq 1$: the Ethiopian Ministry of Health is currently building $2,000$ comprehensive health posts across $\sim$$670$ rural districts \cite{OCHA_Ethiopia_AdminBoundaries_2025}, and we expect more facilities to be built in the future.

Our algorithm \OurAlg{} is presented in \cref{alg:ouralg}.
At each round $t \in [h]$, \OurAlg{} determines how many comprehensive health posts $x^{(t)}_q \in \N$ from each district $q \in [r]$ should be selected (line 3), based on the desired global proportion and elements selected so far in $\bS^{(1:t-1)}$.
Note that $\sum_{q \in [r]} x^{(t)}_q = b^{(t)} \leq |\bV^{(t)}|$.
These type-specific quotas define a partition matroid (line 4), over which the algorithm performs greedy selection (lines 5-7) by iteratively adding the element with highest marginal gain while respecting the matroid.
\cref{thm:apx-ratio-of-our-algo} shows the formal guarantees of \OurAlg{}.

\begin{algorithm}[htb]
\caption{\OurAlg{}}
\label{alg:ouralg}
\begin{algorithmic}[1]
\Statex \textbf{Input}: An $\OurProblem{}(f, h, r, \bV, \bb, \bp)$ instance and a tie-breaking ordering $\sigma: [r] \to [r]$
\Statex \textbf{Output}: A $\sigma$-type feasible selection $\bS = \bS^{(1:h)}$
\State Define $\bS^{(1)} = \ldots = \bS^{(h)} = \emptyset$ \Comment{Initialize selections}
\For{$t = 1, \ldots, h$}
    \State For $q \in [r]$, define $\#(q, \bs(b^{(1:0)}, b^{(1:0)})) = 0$ and
    \[
    x^{(t)}_q = \#(q, \bs(b^{(1:t)}, b^{(1:t)})) - \#(q, \bs(b^{(1:t-1)}, b^{(1:t-1)}))
    \]
    \State Let $\cM^{(t)} = (\bV^{(t)}, \cI^{(t)})$ be the partition matroid at
    \Statex\hspace{\algorithmicindent}time $t$, where $\cI^{(t)} = \{ \bS \subseteq \bV^{(t)}: | \bS \cap \bT_q | \leq x^{(t)}_q \}$.
    \For{$b^{(t)}$ times}
        \State Let $\bS = \bigcup_{\tau=1}^t \bS^{(\tau)}$ be the selection so far
        \State Add $e^* = \argmax\limits_{\substack{e \in \bV^{(t)} \setminus \bS\\ \bS^{(t)} \cup \{e\} \in \cI^{(t)}}} g(\bS, \{e\})$ to $\bS^{(t)}$
    \EndFor
\EndFor
\State Output $\bS^{(1)} \uplus \ldots \uplus \bS^{(h)}$
\end{algorithmic}
\end{algorithm}

\begin{restatable}{theorem}{apxratioofouralgo}
\label{thm:apx-ratio-of-our-algo}
Fix an arbitrary tie-breaking ordering $\sigma: [r] \to [r]$.
Assuming the objective function $f$ can be evaluated in constant time given any subset, \OurAlg{} runs in $\cO(\sum_{t=1}^h b^{(t)} \cdot (r + |\bV^{(t)}|))$ time and outputs $\bS = \uplus_{t=1}^h \bS^{(t)}$ such that
\begin{enumerate}
    \item For all $t \in [h]$, $\bS^{(t)} \subseteq \bV^{(t)}$ and $|\bS^{(t)}| = b^{(t)}$.
    \item The output $\bS$ is a $\sigma$-type feasible selection.
    \item For all $t \in [h]$, we have
    \[
    f(\bS^{(1)} \uplus \ldots \uplus \bS^{(t)}) \geq \frac{1}{2} \cdot f(\OPT_\sigma(b^{(1)}, \ldots, b^{(t)}))
    \]
    where $\OPT_\sigma(b^{(1)}, \ldots, b^{(t)})$ is an optimum $\sigma$-type feasible selection restricted to budgets $b^{(1)}, \ldots, b^{(t)}$.
\end{enumerate}
\end{restatable}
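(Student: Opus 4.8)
The plan is to verify the four assertions in turn. The running‑time bound and item~1 are essentially bookkeeping; item~2 follows from a telescoping identity on the quotas $x^{(t)}_q$; and item~3 is the substantive claim, which I would obtain by reducing the first $t$ rounds of \OurAlg{} to local‑greedy monotone submodular maximization under a \emph{single} partition matroid and then invoking the classical $1/(k{+}1)$ analysis \cite{fisher1978analysis,goundan2007revisiting} with $k=1$. For the running time: in round $t$, line~3 builds the quota vector $(x^{(t)}_q)_{q\in[r]}$ in $\cO(r)$ time, the $\#$‑counts telescope so that $\sum_q x^{(t)}_q = b^{(1:t)}-b^{(1:t-1)}=b^{(t)}$, and by the construction of $\bs(\cdot,\cdot)$ each $x^{(t)}_q$ is a nonnegative integer with $x^{(t)}_q\le|\bV^{(t)}\cap\bT_q|$; hence $\cM^{(t)}$ has a basis of size $b^{(t)}$ inside $\bV^{(t)}$, so the loop of lines~5--7 runs exactly $b^{(t)}$ times, never gets stuck, and returns $\bS^{(t)}\subseteq\bV^{(t)}$ with $|\bS^{(t)}|=b^{(t)}$ (item~1). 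Each iteration scans at most $|\bV^{(t)}|$ candidates at one $\cO(1)$ evaluation of $f$ each (the term $f(\bS)$ is fixed within an iteration), plus an $\cO(1)$ membership check against maintained per‑type counts, giving $\cO(b^{(t)}(r+|\bV^{(t)}|))$ per round and the claimed total.

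\emph{Item~2.} The matroid constraint is tight type‑by‑type: since $|\bS^{(t)}|=b^{(t)}=\sum_q x^{(t)}_q$ while $|\bS^{(t)}\cap\bT_q|\le x^{(t)}_q$ for all $q$, equality holds, so $|\bS^{(t)}\cap\bT_q|=x^{(t)}_q$. Telescoping over rounds and unfolding the definition of $x^{(\tau)}_q$ yields, for every $t$ and $q$, $|\bS^{(1:t)}\cap\bT_q|=\sum_{\tau=1}^t x^{(\tau)}_q=\#(q,\bs(b^{(1:t)},b^{(1:t)}))$. Since $\alpha_{\min}(\cdot)$, $\beta(\cdot)$, and $\bQ(\cdot)$ depend on a selection only through its type counts $|\cdot\cap\bT_q|$, and since by the construction of $\bs(\cdot,\cdot)$ the profile $(\#(q,\bs(b^{(1:t)},b^{(1:t)})))_q$ is exactly that of a $\sigma$‑type feasible selection of size $b^{(1:t)}$, the selection $\bS$ realizes this profile at every prefix and is therefore $\sigma$‑type feasible by \cref{def:sigma-type-feasible-selection}.

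\emph{Item~3.} Fix $t$ and write $\bO:=\OPT_\sigma(b^{(1)},\ldots,b^{(t)})$ and $\bV^{(1:t)}:=\bV^{(1)}\uplus\cdots\uplus\bV^{(t)}$. First I would record that $\sigma$‑type feasibility pins down the \emph{cumulative} type‑count profile, so that $|\bO\cap\bV^{(1:\tau)}\cap\bT_q|=\sum_{s\le\tau}x^{(s)}_q$ for all $\tau\le t$ and $q$; subtracting consecutive prefixes gives $|\bO\cap\bV^{(\tau)}\cap\bT_q|=x^{(\tau)}_q$. Hence $\bO$ is a basis of the single partition matroid $\cM$ on $\bV^{(1:t)}$ with independent sets $\{\bX\subseteq\bV^{(1:t)}:|\bX\cap\bV^{(\tau)}\cap\bT_q|\le x^{(\tau)}_q\ \forall\tau\le t,\ q\}$, and $\bS^{(1:t)}$ is a basis of $\cM$ too (items~1 and~2). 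Thus \OurAlg{} restricted to the first $t$ rounds is exactly a partitioned (local) greedy for monotone submodular maximization over $\cM$, processing the blocks $\bV^{(1)},\ldots,\bV^{(t)}$ in order. Since $\bS^{(1:t)}$ and $\bO$ have equal cardinality within every $(\tau,q)$‑block, pick a bijection $\pi:\bO\setminus\bS^{(1:t)}\to\bS^{(1:t)}\setminus\bO$ mapping each $o$ to some $\pi(o)$ in the same round and of the same type. Let $s_1,\ldots,s_{b^{(1:t)}}$ list $\bS^{(1:t)}$ in the order \OurAlg{} adds them, let $\bS_{<j}$ be the prefix before $s_j$ is added, and set $\rho_j:=g(\bS_{<j},\{s_j\})$, so $\sum_j\rho_j=f(\bS^{(1:t)})-f(\emptyset)$. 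If $\pi(o)=s_j$, then at the moment $s_j$ was chosen $o$ was itself admissible (it lies in $\bV^{(\tau)}\setminus\bS_{<j}$ and, sharing $s_j$'s type whose quota was not yet exhausted, keeps $\bS_{<j}$ independent), so greedy optimality and submodularity give $\rho_j\ge g(\bS_{<j},\{o\})\ge g(\bS^{(1:t)},\{o\})$. Summing over $o$, using injectivity of $j(\cdot)$ and $f(\emptyset)\ge0$ (the standard normalization, as for the coverage objective \cref{eq:ethiopia-objective-f}),
\[
f(\bO)\ \le\ f(\bO\cup\bS^{(1:t)})\ \le\ f(\bS^{(1:t)})+\sum_{o\in\bO\setminus\bS^{(1:t)}}g(\bS^{(1:t)},\{o\})\ \le\ f(\bS^{(1:t)})+\sum_{j}\rho_j\ \le\ 2\,f(\bS^{(1:t)}),
\]
which is precisely $f(\bS^{(1)}\uplus\cdots\uplus\bS^{(t)})\ge\tfrac{1}{2}\,f(\OPT_\sigma(b^{(1)},\ldots,b^{(t)}))$.

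\emph{Main obstacle.} The crux is item~3, and within it two points need care: (a) showing that $\sigma$‑type feasibility genuinely forces the per‑round type split to equal $(x^{(\tau)}_q)$ --- this is what makes $\bO$ a basis of the \emph{same} matroid the algorithm greedily optimizes over and lets $\pi$ be chosen block‑by‑block; and (b) that the local, non‑revisiting greedy loses only the usual factor $2$, which is exactly why the exchange partner $o$ must be certified admissible \emph{at the step $\pi(o)$ was selected}, not merely at termination.
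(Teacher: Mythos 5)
Your proposal is correct and follows essentially the same route as the paper: establish that the algorithm's per-round type counts equal the increments of the min-ratio sequence, conclude that the output realizes the $\sigma$-type feasible profile and that $\OPT_\sigma$ is a basis of the very same partition matroid, and then invoke the $1/2$ guarantee of local greedy over a single matroid. Your item~3 is in fact \emph{more} self-contained than the paper's, which simply cites the \textsc{LocalGreedy} analysis of \cite{fisher1978analysis} with $k=1$; your explicit round-and-type-preserving bijection $\pi$ and the certification that each exchange partner was admissible at selection time is exactly the argument that citation hides, and it is carried out correctly (modulo the harmless normalization $f(\emptyset)\geq 0$, which you flag).

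The one place where your write-up is materially thinner than the paper's is the sentence in item~2 asserting that ``by the construction of $\bs(\cdot,\cdot)$ the profile $(\#(q,\bs(b^{(1:t)},b^{(1:t)})))_q$ is exactly that of a $\sigma$-type feasible selection of size $b^{(1:t)}$.'' This is not true by construction: \cref{def:sigma-type-feasible-selection} is an axiomatic requirement (maximize $\beta$ at \emph{every} prefix, then minimize $|\bQ(\cdot)|$, then tie-break by $\sigma$), whereas \cref{def:min-ratio-type-sequence} is a greedy rule, and showing the two coincide is the actual combinatorial content of the paper's proof. The paper spends \cref{lem:minimal-containment}, \cref{lem:sigma-type-feasible-is-extendable}, \cref{lem:min-ratio-type-sequence-is-independent-of-denominator}, and \cref{lem:type-feasible-equal-sequence} on an induction establishing that $\sigma$-type feasible selections of consecutive sizes are nested and that their type multisets match $\bs(b,b)$; \cref{lem:typewise-our-output-equal-sequence} then transfers this to the algorithm's quotas. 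Your telescoping identity correctly reduces item~2 (and the premise of item~3 that $\OPT_\sigma$ has exactly $x^{(\tau)}_q$ elements in each block) to this equivalence, but the equivalence itself still needs the nestedness/extendability argument; as written, that step is asserted rather than proved.
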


\subsection{Our theoretical bounds are tight in general}
\label{sec:impossibility}

\cref{prop:type-feasible-for-futureproof} shows that \emph{any} solution to \OurProblem{} violating \cref{def:type-feasible-selection} is brittle against future budget uncertainties while \cref{prop:impossibility-tight} show that the approximation ratio terms for \OurAlg{} are essentially tight.

\begin{restatable}{proposition}{typefeasibleforfutureproof}
\label{prop:type-feasible-for-futureproof}
Let \ALG{} be any deterministic algorithm to \OurProblem{} aiming to produce selections $\bS^{(1)}, \ldots, \bS^{(h)}$ that maximizes $\beta(b^{(1:t)})$ at each time step $t \in [h]$.
If \ALG{} does not minimize $|\bQ(\bS^{(t)})|$ for each $t \in [h]$, then there exists instances where it fails to minimize $\beta(b^{(1:t)})$ at each $t \in [h]$.
\end{restatable}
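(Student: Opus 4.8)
The plan is to prove this as an impossibility result via an explicit adversary. Since $\ALG$ is deterministic, the adversary can watch it commit to a cumulative selection $\bS^{(1:t)}$ that is ratio-optimal (i.e.\ $\alpha_{\min}(\bS^{(1:t)})=\beta(b^{(1:t)})$) yet wasteful --- with $|\bQ(\bS^{(1:t)})|$ strictly larger than the minimum $m^{\star}$ attained by some type-feasible selection $\bS^{\star}$ for the same prefix of budgets and ground sets --- and then reveal a round-$(t+1)$ budget calibrated so that a competitor following $\bS^{\star}$ stays ratio-optimal while $\ALG$ cannot. By hypothesis such a prefix and such a round $t$ exist; and if $\ALG$ already fails to be ratio-optimal at round $t$ on that instance, there is nothing left to prove.

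\emph{The arithmetic obstruction.} Write $\beta=\beta(b^{(1:t)})$. For every ``lagging'' type $q$, meaning $\alpha_q(\bS^{(1:t)})=\beta$, adding no element of $\bT_q$ in round $t+1$ keeps $|\bS\cap\bT_q|$ fixed while $p_q\cdot|\bS|$ grows, so $\alpha_q$ strictly drops below $\beta$; hence any continuation that fails to pick at least one element of $\bT_q$ has $\alpha_{\min}(\bS^{(1:t+1)})<\beta$, and this is below $\beta(b^{(1:t+1)})$ whenever the latter is at least $\beta$. Thus if $\ALG$'s selection has strictly more lagging types with candidates available in round $t+1$ than $b^{(t+1)}$, it is stuck, while a budget of $m^{\star}$ units lets the competitor repair all of $\bS^{\star}$'s lagging types.

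\emph{The construction.} Take $h=t+1$, keep the prefix fixed, let $\bV^{(t+1)}$ contain just enough candidates --- essentially one per type that $\bS^{\star}$ still needs to repair, with a minimal number of filler elements of already-satisfied types --- so that $\bS^{\star}$ extends to a size-$b^{(1:t+1)}$ selection of ratio $\ge\beta$ and $b^{(t+1)}\le|\bV^{(t+1)}|$, and set $b^{(t+1)}$ to the (small) number of budget units that extension needs. Then $\beta(b^{(1:t+1)})\ge\beta$ is witnessed by a legal online continuation, whereas $\ALG$, already committed to $\bS^{(1:t)}$, faces strictly more than $b^{(t+1)}$ lagging, newly available types (this is exactly where $|\bQ(\bS^{(1:t)})|>m^{\star}$ is invoked), so by the arithmetic obstruction every choice of $\bS^{(t+1)}$ leaves a lagging type starved and $\alpha_{\min}(\bS^{(1:t+1)})<\beta(b^{(1:t+1)})$. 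A concrete witness: $r=3$ with $p_q=\frac{1}{3}$, $h=2$, $b^{(1)}=2$, $b^{(2)}=1$; let $\bV^{(1)}$ hold two candidates of type $1$ and two of type $2$ only (so $\beta(2)=0$, the $\bQ$-minimizing first move picks one of each with $\bQ=\emptyset$, but ``both of type $1$'' has $\bQ=\{2\}$), and let $\bV^{(2)}$ hold a single candidate of type $3$; the former completes to one of each type with $\alpha_{\min}=1=\beta(3)$, while the latter is forced to a selection with $\alpha_{\min}=0<\beta(3)$.

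\emph{Main obstacle.} The delicate part is the last paragraph in full generality: one must show that a strictly non-minimal $|\bQ(\bS^{(1:t)})|$ really does force strictly more round-$(t+1)$ repairs than $\bS^{\star}$ needs --- which requires relating $\bQ$ (defined through availability in $\bV^{(t)}$) to the set of types that actually must be fixed using $\bV^{(t+1)}$ --- and that the filler elements neither lift $\beta(b^{(1:t+1)})$ above what $\bS^{\star}$ can match nor hand $\ALG$ an escape route. Keeping $\bV^{(t+1)}$ minimal and letting its fresh availability be precisely the types $\bS^{\star}$ is missing is the device that controls both; the $\beta=0$ regime, where a lagging type sits at ratio $0$ and so cannot ``drop'' but $\beta(b^{(1:t+1)})$ can still rise, is the case most in need of this care.
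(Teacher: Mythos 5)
Your proposal takes essentially the same route as the paper's proof: exploit determinism to pin $\ALG$ to its wasteful prefix, then append one extra round whose budget equals the minimal achievable $|\bQ|$ so that the $\bQ$-minimizing competitor can raise $\beta$ while $\ALG$, with too many lagging types, cannot. Your concrete three-type witness and your explicit flagging of the availability condition in the definition of $\bQ$ are in fact more careful than the paper's own (quite terse) argument, but the underlying construction is identical.
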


\begin{restatable}{proposition}{impossibilitytight}
\label{prop:impossibility-tight}
There exists instances where \OurAlg{} achieves an approximation ratio of $\leq \frac{1}{2}$ due to budget uncertainties and $\leq \frac{k}{k+1}$ due to $\sigma$-ordering.
\end{restatable}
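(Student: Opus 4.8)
Combining \cref{thm:apx-ratio-of-our-algo} and \cref{thm:no-sigma} yields $f(\bS) \geq \frac{1}{2} \cdot \frac{k}{k+1} \cdot f(\OPT)$ whenever \OurAlg{} selects at least $k$ elements of each type in each round, so it suffices to exhibit two families of instances: one showing that the $\frac12$ cannot be improved, and one showing that the $\frac{k}{k+1}$ cannot be improved. The first will use a single type ($r = 1$, $p_1 = 0$, no proportional constraint), so that the proportional machinery is inactive and \OurAlg{} reduces to running greedy selection independently in each round; the second will use a single round ($h = 1$), so that there is no budget uncertainty at all and the only loss comes from the forced tie-break $\sigma$.

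\emph{Tightness of the $\frac12$ factor.} I would take $h = 2$, $r = 1$, $p_1 = 0$, $b^{(1)} = b^{(2)} = 1$, and a coverage objective $f$ on a universe $\cU$. Let $\bV^{(1)} = \{a, b\}$ where $\covered(a)$ and $\covered(b)$ are two disjoint blocks of $m$ cells, and let $\bV^{(2)} = \{c\}$ with $\covered(c) = \covered(a)$. In round $1$ the marginal gains of $a$ and $b$ are both $m$; under a worst-case tie-break \OurAlg{} picks $a$, is then forced to take the only available element $c$ in round $2$ for zero additional gain, and outputs $\bS$ with $f(\bS) = m$. But selecting $b$ in round $1$ and $c$ in round $2$ achieves value $2m$, so $f(\bS) = \frac12 f(\OPT)$; perturbing the block sizes gives strict inequalities and a family with ratio tending to $\frac12$. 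The point is that the shortfall is exactly the cost of committing to $\bS^{(1)}$ before the round-$2$ options are revealed, i.e., of the incremental/online nature of the budget.

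\emph{Tightness of the $\frac{k}{k+1}$ factor.} I would take $h = 1$, $r = 2$, $p_1 = p_2 = \frac12$, and $b^{(1)} = 2k+1$, with $|\bT_1| = |\bT_2| = k+1$. Because the two proportions are equal, the best attainable satisfaction ratio, $\beta(2k+1) = \frac{2k}{2k+1}$, is achieved \emph{only} by a $(k+1, k)$ split across the two types, and both orientations of this split have $|\bQ| = 1$; hence \cref{def:sigma-type-feasible-selection} (and the quota $x^{(t)}_q$ in line 3 of \OurAlg{}) leaves it to $\sigma$ alone to decide which type receives the extra slot. Choose $\sigma$ (labelling the types accordingly) so that the forced quota is $x_1 = k+1$, $x_2 = k$. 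Now let every element of $\bT_1$ cover nothing and let the $k+1$ elements of $\bT_2$ cover pairwise-disjoint singletons. Then the only $\sigma$-type feasible selection, and the output of \OurAlg{}, picks $k+1$ worthless elements and $k$ useful ones, so $f(\OPT_\sigma) = f(\bS) = k$, whereas the type-feasible selection that uses the $(k, k+1)$ orientation has $f(\OPT) = k+1$. Since $|\OPT_\sigma \cap \bV^{(1)} \cap \bT_q| \geq k$ for both $q$, this is exactly the regime of \cref{thm:no-sigma}, and the ratio $f(\bS)/f(\OPT) = \frac{k}{k+1}$ is attained.

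\emph{Main obstacle.} The subtle step is verifying the second instance against \cref{def:type-feasible-selection} and \cref{def:sigma-type-feasible-selection}: I must confirm that devoting the extra slot to the valuable type $\bT_2$ is not merely suboptimal but actually \emph{forbidden} for any $\sigma$-type feasible selection, and that line 3 of \OurAlg{} really produces the quota $(k+1, k)$. By the symmetry of the proportions this amounts to choosing the tie-break $\sigma$ (and the type labels) consistently, but it requires carefully unwinding the definition of the ideal type-sequence used to compute $x^{(t)}_q$. A minor secondary point is whether to state the $\frac12$ bound as an exact equality (via a worst-case tie-break, as above) or as the limit of a perturbed family; I would phrase it to match the convention used elsewhere in the paper.
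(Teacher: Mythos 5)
Your proposal is correct and matches the paper's proof in essence: the paper likewise splits the claim into two separate constructions, one with $r=1$, $h=2$, $b^{(1)}=b^{(2)}=1$ and a round-2 element duplicating the greedily preferred round-1 choice (using an explicit $\eps$-perturbation rather than a worst-case tie-break), and one with $h=1$, budget $rk+1$, and an objective $f(\bS) = |\bS \cap \bT_r|$ that is worthless on the type favored by $\sigma$. Your second instance is just the $r=2$ specialization of the paper's, and your verification that the $(k+1,k)$ quota is forced by the min-ratio sequence is exactly the check the paper's argument relies on.
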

\section{Experiments}
\label{sec:experiments}

We empirically evaluate the performance of the individual components of our proposed \OurTool{} tool.
See \cref{sec:appendix-experiment} for additional experimental details and access to source code and scripts.

\subsection{Experimental setup}

\subsubsection{Data.}

To model facility coverage, we use point-to-point walking distance estimates between 1km-by-1km grid cells across Ethiopia, based on the global friction surface from \cite{weiss2020global}, and population forecasts from WorldPop projections for the years 2026–2030 \cite{worldpop_eth_2015_2030}.\footnote{Population forecasts are inherently uncertain and the true objective function $f$ is not directly accessible.
In \cref{sec:appendix-proxy-function}, we show that the impact on facility coverage is not substantial.}
We are also given a set of existing facilities $\bS_{\text{existing}}$ and available budgets were chosen based on expert consultation.

\subsubsection{Regions analyzed.}

We focus on rural regions where access remains limited, i.e., major cities such as Addis Ababa are excluded. 
Specifically, we evaluate our algorithms on the regions of Afar, Benishangul Gumuz, and Somali in the main paper, and on the Sidama region in \cref{sec:appendix-experiment-sidama}.
This way we reflect Ethiopia's two dominant rural livelihood types.\footnote{In the main paper, we focus on less populated regions to better highlight the spatial coverage gains achieved by our \OurTool{}.}

There are around 30 districts in Afar and Sidama, around 20 in Benishangul Gumuz, and around 70 districts in Somali.
A larger budget was considered for Somali as it is much larger than the other regions.

\subsubsection{Distributional Policies (DP).}
\begin{figure}[htb]
    \centering
    \includegraphics[width=1.0\linewidth]{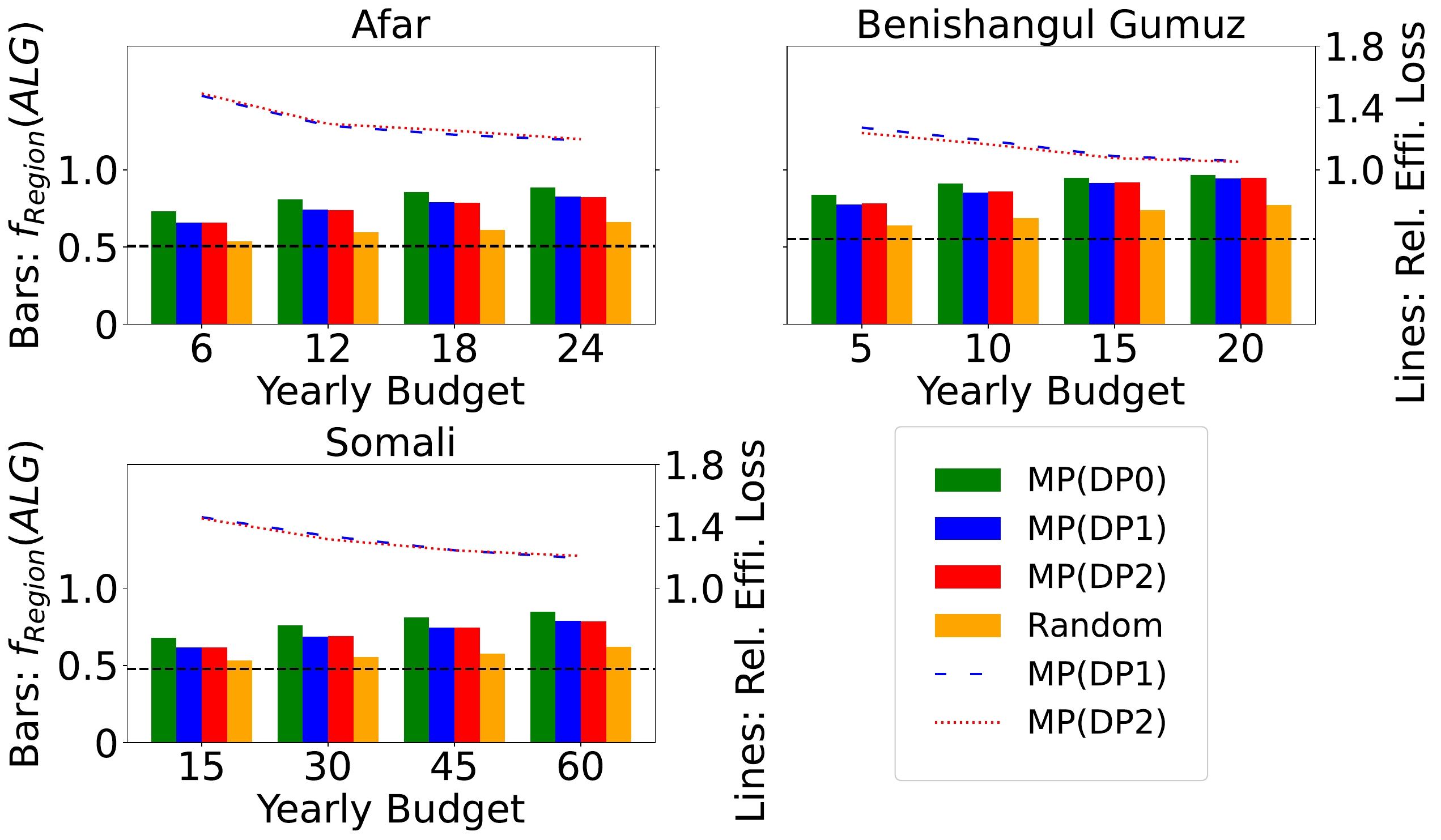}
    \caption{Coverage under varying annual budgets across the three regions of Afar, Benishangul Gumuz, and Somali. Bars show total population coverage by each policy, with the black line indicating existing coverage before any yearly budget was spent to build additional health posts. The dashed lines above indicate relative efficiency loss from enforcing proportional constraints for DP$1$ and DP$2$, compared to the unconstrained baseline (DP$0$).}
    \label{fig:budget}
\end{figure}
To ensure equitable allocation, each region may define a distributional policy by specifying target proportions $p_1, \ldots, p_r$ across $r$ districts.
These type-level constraints shape the facility selection process by prioritizing areas with greater unmet needs.
In our experiments, we evaluate three policy scenarios:
\begin{description}
    \item[DP0] No distributional constraint (i.e., $p_1 = \ldots = p_r = 0$), corresponding to unconstrained greedy selection.
    \item[DP1] Proportions $p_q$ are set based on unassisted home birth rates, favoring districts with poor maternal care access.
    \item[DP2] Proportions $p_q$ are set by the inverse of postnatal care coverage, favoring districts lacking early childhood services.
    \item[DP3] Proportions $p_q$ are set by the inverse of the average between home-birth rates and postnatal-care coverage, thereby balancing the two scenarios.
\end{description}
  
Both DP1 and DP2 target existing healthcare service gaps that can be provided by comprehensive health posts.
While our framework supports multiple active constraints simultaneously, we focus on one distributional policy at a time to preserve theoretical guarantees and interpretability.
In our experiments below, we write $\textsc{MP}(\text{DP}i)$, for $i \in \{0, 1, 2\}$, to denote the output of \cref{alg:ouralg} applied with the proportional constraints imposed by each policy.

\subsection{Experiment 1: Impact of budget on coverage}

In \cref{fig:budget}, the bars show how total coverage improves under different allocation strategies and budget levels over a five-year planning horizon.
As all regions are large relative to the available budget, coverage increases approximately linearly with budget in all cases.

For $i \in \{1,2\}$ in Afar, Benishangul Gumuz, and Somali, the dashed and dotted lines in \cref{fig:budget} show the ratios
\[
\frac{f_{\text{region}}(\textsc{MP}(\text{DP}0) \;\cup\; \bS_{\text{existing}}) - f_{\text{region}}(\bS_{\text{existing}})}{f_{\text{region}}(\textsc{MP}(\text{DP}i) \;\cup\; \bS_{\text{existing}}) - f_{\text{region}}(\bS_{\text{existing}})} \;,
\]
quantifying the trade-off incurred by distributional constraints, where $f$ is the coverage function defined in \cref{eq:ethiopia-objective-f}.

Note that we subtracted away the coverage from existing health posts $\bS_{\text{existing}}$ to measure the relative gain in coverage as existing health posts are assumed to remain operational throughout.
This ratio is similar to the notion of price of fairness \cite{caragiannis2012efficiency}: a ratio close to 1 indicates that enforcing proportional fairness incurs minimal efficiency loss, while higher values reflect stronger trade-offs.
As expected, these ratios improve with increasing budgets, suggesting that the proportional constraints come at a lower cost when more resources are available. 
Note that as the results of \textsc{MP}(DP$1$) and \textsc{MP}(DP$2$) were very similar, we did not report the performance of their average, \textsc{MP}(DP$3$).

\subsection{Experiment 2: Evaluating district-level equity}
\begin{figure}[htb]
    \centering
    \includegraphics[width=\columnwidth]{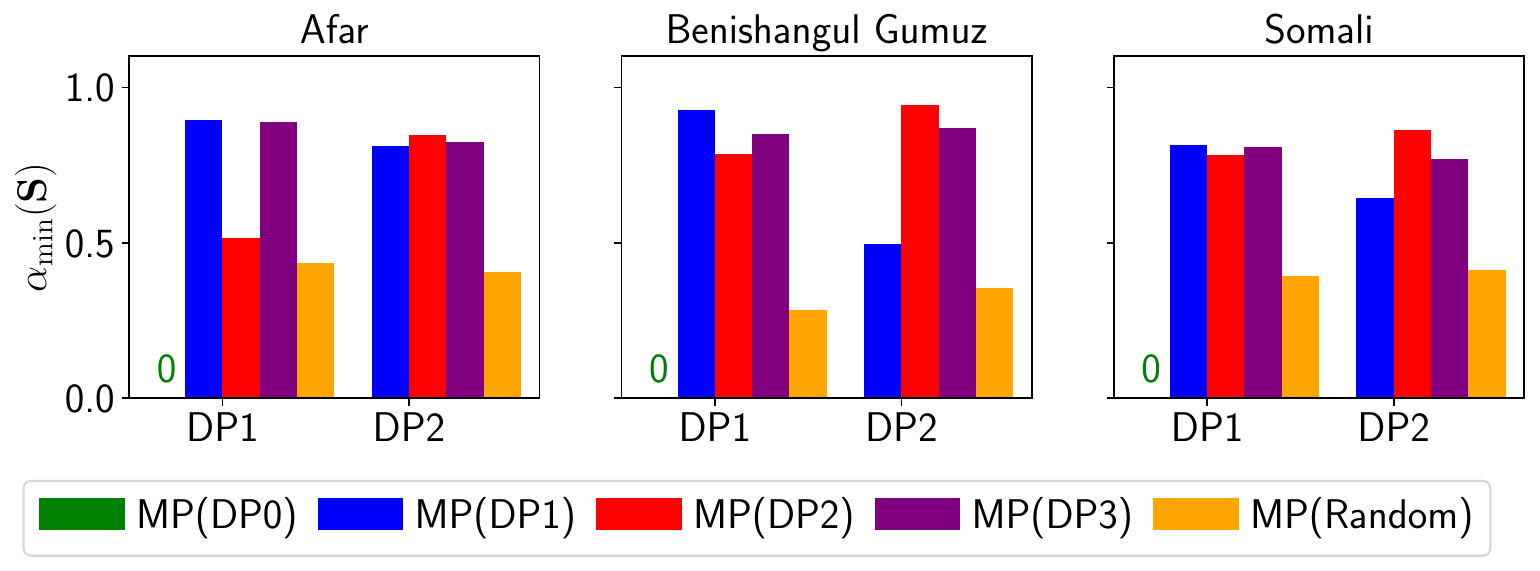}
    \caption{Minimum satisfaction ratio $\alpha_{\min}$ attained under each policy, with proportions defined by DP$1$ or DP$2$. Over five-year horizon, annual budgets of $12$ facilities were allocated for Afar, $10$ for Benishangul Gumuz, and $30$ for Somali.}
\label{fig:types}
\end{figure}
In Experiment 1, we demonstrated that imposing proportional constraints does not significantly degrade coverage. We now show that these constraints improve adherence to the intended proportions.
To this end, we examine the minimum satisfaction ratio, $\alpha_{\min}(\bS)$, achieved under each policy.
As shown in \cref{fig:types}, \textsc{MP}(DP$1$) and \textsc{MP}(DP$2$) outperform all other methods in the DP$1$ and DP$2$ metrics respectively, while \textsc{MP}(DP$3$) is somewhere in between and \textsc{MP}(DP$0$) consistently attains $\alpha_{\min} = 0$ .
Interestingly, in  Afar, \textsc{MP}(DP$1$) yields significant gain under DP$1$ than \textsc{MP}(DP$2$) does under DP$2$, whereas the reverse trend is observed in the Benishangul Gumuz and Somali regions.

\subsection{Experiment 3: High quality advice can help}
\label{sec:experiment-learning-augmented}

To evaluate the practical utility of our learning-augmented approach, we conduct a \emph{retrospective analysis} using the set of comprehensive health posts $\bA$ that were previously constructed by regional planners: we simulate the original decision-making scenario in which $|\bA|$ health posts must be selected from scratch.
We compare three strategies:\\
\textbf{1. Expert selection:} Use expert-curated selection $\bA$.\\
\textbf{2. Greedy baseline:} Apply \cref{alg:ouralg} to get $\bG$.\\
\textbf{3. Learning-augmented:} Use \cref{alg:LA-one-type} to produce $\bU$.

Although planning decisions are made at the regional level, our analysis is conducted at the district level to capture localized effects.
Across all districts in the Afar, Benishangul Gumuz, and Somali regions, the greedy baseline $\bG$ consistently outperforms the expert selection $\bA$.
Interestingly, we found $25$ districts within these regions in which our learning-augmented selection $\bU$ outperforms both, i.e., $f(\bU) > f(\bG) > f(\bA)$.
These results were obtained by selecting the best performing $\bU$ from ten runs of \cref{alg:LA-one-type}, each using a different random permutation of the original advice selection $\bA$.

\cref{fig:case} presents a representative example, showing a zoomed-in subarea of a district where all three strategies select two health posts (the rest of the district is relatively sparse), with $\frac{f(\bU) - f(\bG)}{f(\bG)} \approx 2\%$ and $\frac{f(\bU) - f(\bA)}{f(\bA)} \approx 20\%$.
The greedy algorithm, having made its first choice (the left location in $\bG$) based solely on immediate marginal gain, is subsequently limited in its second selection.
In contrast, \cref{alg:LA-one-type} could leverage on the leftward choice in $\bA$ to produce a more globally effective selection $\bU$.

\begin{figure}[htb]
    \centering
    \begin{subfigure}[b]{0.24\linewidth}
        \centering
        \includegraphics[width=\linewidth]{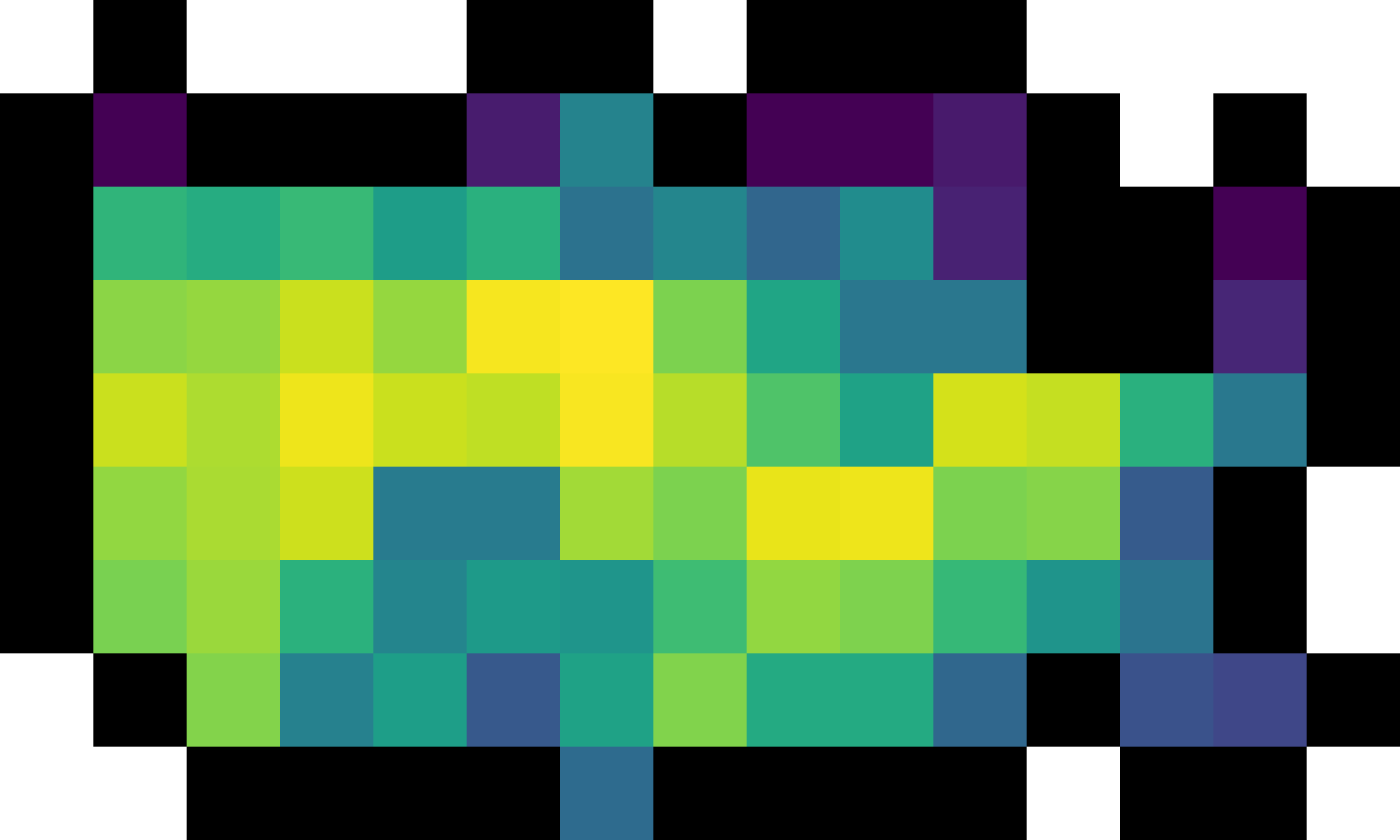}
        \caption{Population}
    \end{subfigure}
    \hfill
    \begin{subfigure}[b]{0.24\linewidth}
        \centering
        \includegraphics[width=\linewidth]{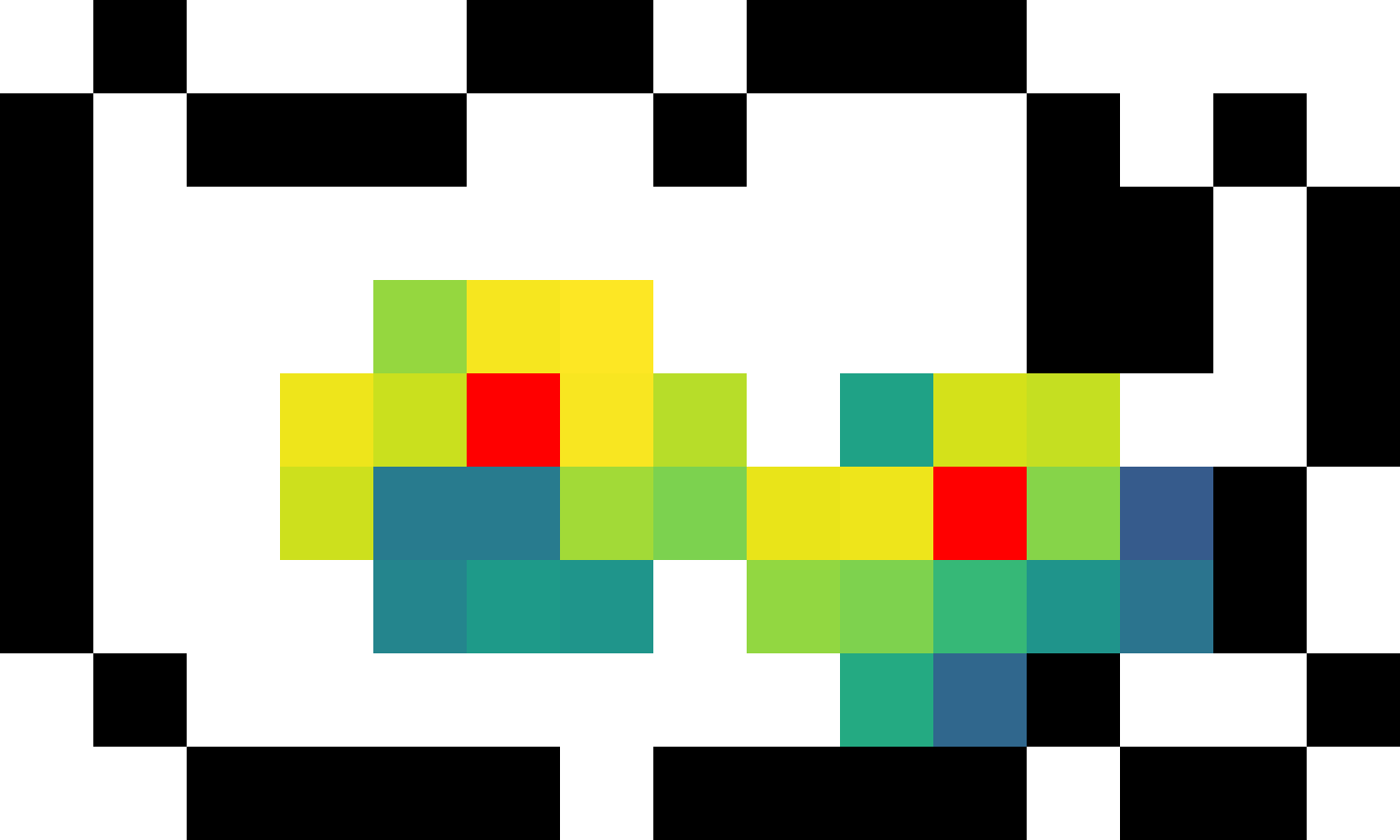}
        \caption{$\bG$}
    \end{subfigure}
    \hfill
    \begin{subfigure}[b]{0.24\linewidth}
        \centering
        \includegraphics[width=\linewidth]{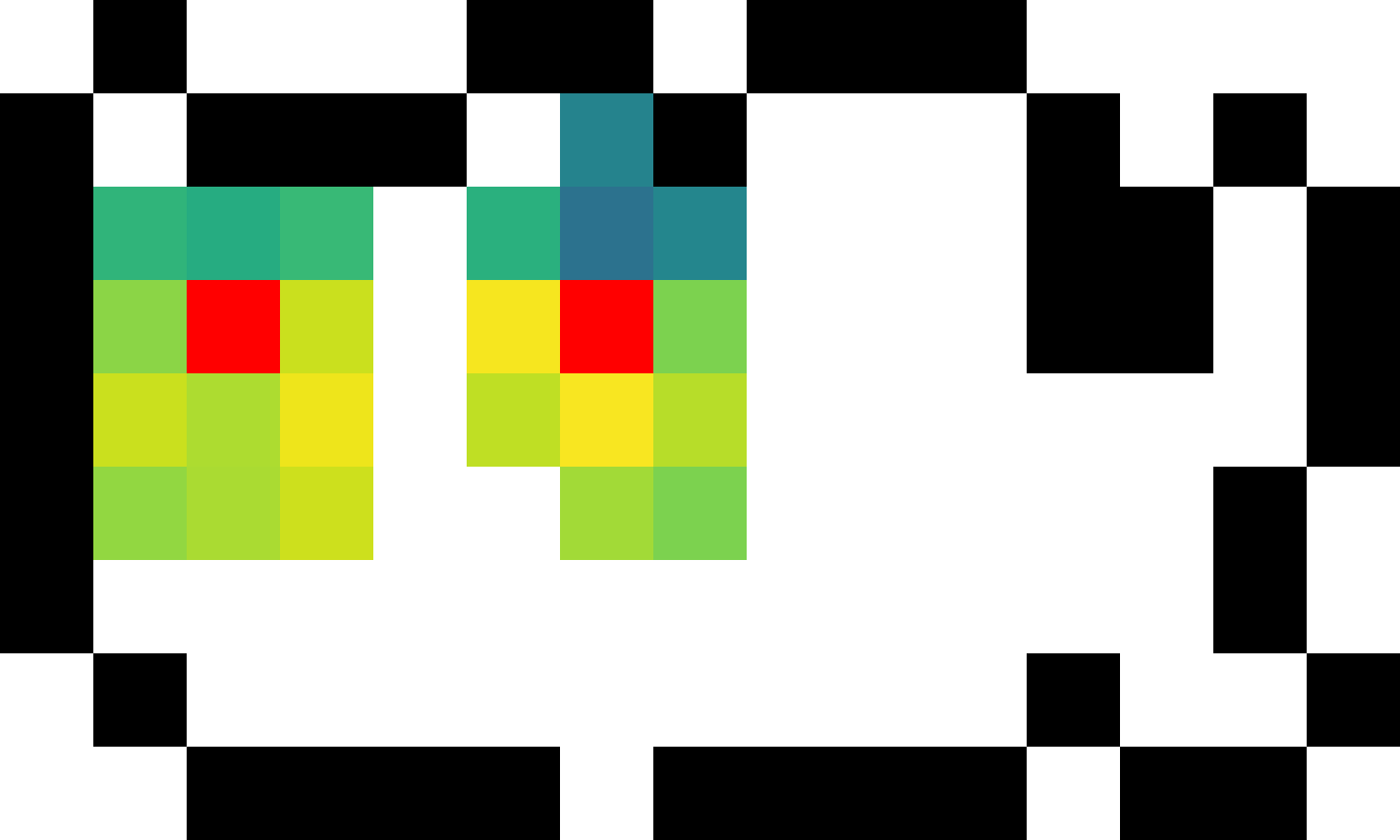}
        \caption{$\bA$}
    \end{subfigure}
    \hfill
    \begin{subfigure}[b]{0.24\linewidth}
        \centering
        \includegraphics[width=\linewidth]{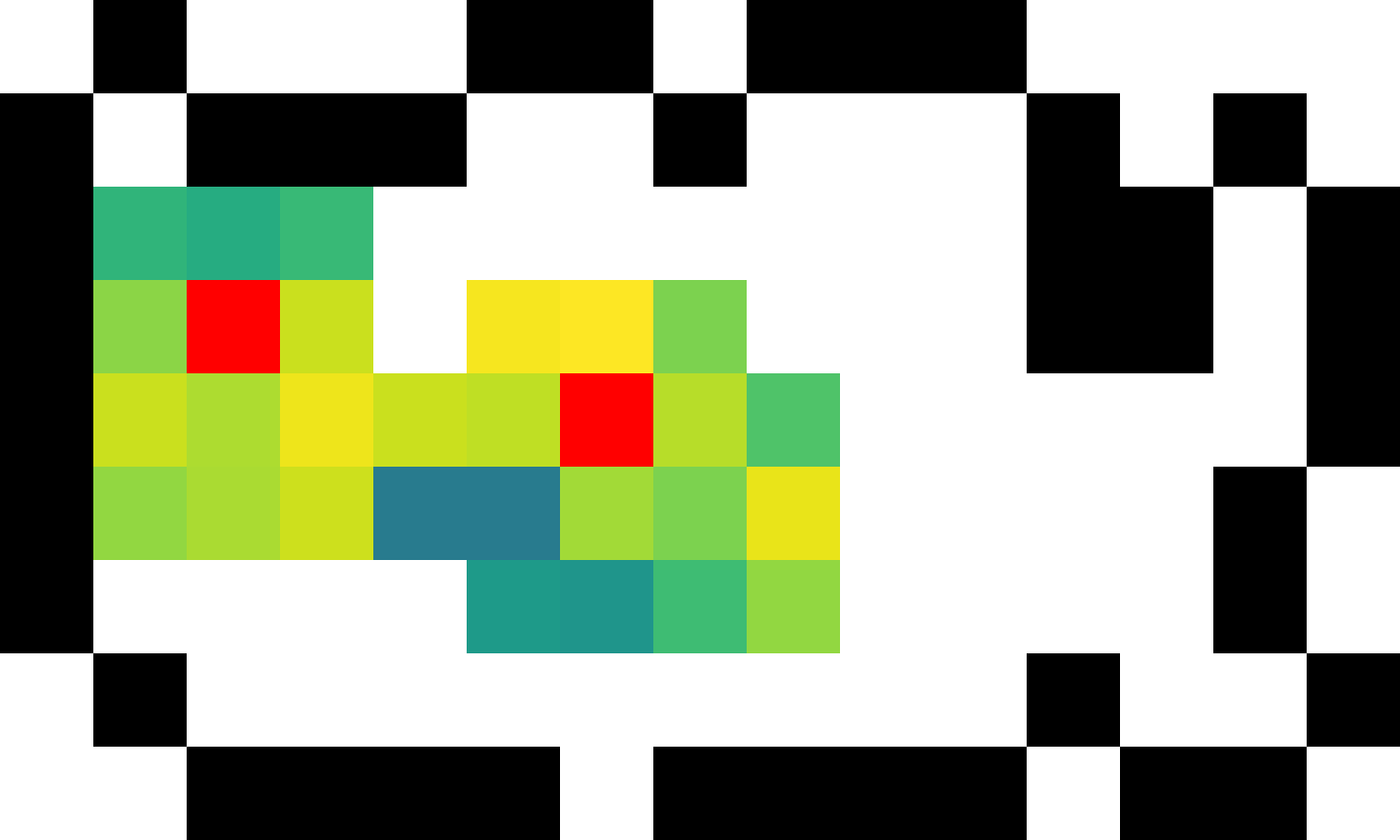}
        \caption{$\bU$}
    \end{subfigure}
    \caption{
    (a) 1km-by-1km population grid (log-scale): yellow indicates high population density, darker shades indicate lower density.
    (b,c,d) Red dots indicate the selected health post locations and the colored cells indicate the areas covered by these health posts (on the same log-scale rendering).
    }
    \label{fig:case}
\end{figure}

\section{Conclusion}
\label{sec:conclusion}

In partnership with Ethiopia's Ministry of Health and Public Health Institute, we developed a decision-support tool (\OurTool{}) to guide health facility planning.
\OurTool{} allows regional planners to solve their own instance of \OurProblem{} within a unified, principled framework that models the planning task as a constrained submodular optimization with online budget and global proportionality constraints.
On the technical front, we are the first to integrate learning-augmented algorithms with multi-step submodular optimization, enabling principled refinement of expert plans with worst-case guarantees.
Using real-world health and demographic data from Ethiopia, we show that our methods are effective in supporting equitable, data-driven infrastructure decisions across diverse regional contexts.

\section*{Acknowledgements}
The findings and conclusions in this report are those of the authors and do not represent any recommendations or official positions.

\bibliography{refs}
\bibliographystyle{alpha}

\newpage
\appendix
\section{Further related work}
\label{sec:appendix-related-work}

In this section, we provide additional related work that was not mentioned in \cref{sec:related-work} due to space constraints.

\subsection{Submodular maximization under constraints}

In \cref{sec:related-work}, we also mentioned a locally greedy algorithm of \cite{fisher1978analysis} which yields a tight $\frac{1}{k+1}$ approximation to the problem of maximizing a non-decreasing submodular function subject to $k$ matroid intersections.
We provide the pseudocode of this locally greedy algorithm in \cref{alg:localgreedy}.

\begin{algorithm}[htb]
\caption{\textsc{LocalGreedy}}
\label{alg:localgreedy}
\begin{algorithmic}[1]
\Statex \textbf{Input}: Objective function $f$, time horizon $h$, elements $\bV = \bV^{(1)} \uplus \ldots \uplus \bV^{(h)}$, $k$ matroid constraints $\cM_1 = (\bV, \cI_1), \ldots, \cM_k = (\bV, \cI_k)$ over $\bV$
\Statex \textbf{Output}: $\bS \subseteq \bV$ s.t. $\bS \cap \bV^{(t)} \in \cI_1 \cap \ldots \cap \cI_k, \forall t \in [h]$
\State Initialize $\bS = \emptyset$
\For{$t = 1, \ldots, h$}
    \While{$\exists e \in \bV^{(t)} \setminus \bS$ s.t. $\bS \cup \{e\} \in \cap_{i=1}^k \cI_i$}
        \State Define $e^* = \argmax\limits_{\bS \cup \{e\} \in \cap_{i=1}^k \cI_i} \bigg\{ f(\bS \cup \{e\}) - f(\bS) \bigg\}$
        \State Add $e^*$ to $\bS$
    \EndWhile
\EndFor
\State \Return $\bS$
\end{algorithmic}
\end{algorithm}

Maximizing monotone submodular functions under matroid constraints has a rich history in offline settings \cite{nemhauser1978analysis, calinescu2011maximizing, chekuri2011submodular}.
While work on submodular maximization under multiple matroid constraints such as \cite{lee2010maximizing} hints at possible modeling of our setting (one matroid for budgets and one for distributional constraints), these are offline formulations with static feasibility and do not capture our problem's adaptivity or temporal uncertainty.
In the online setting, one canonical problem that of the submodular secretary problem \cite{babaioff2007matroids, bateni2013submodular}, but it assumes that the matroid is fixed and known, while elements arrive sequentially in random or adversarial order.
In contrast, our setting inverts the uncertainty: the ground set is known in advance, but feasibility constraints (budgets) are revealed over time.
Moreover, secretary-style methods typically do not support global type-based constraints across time and assumes that arrivals are in random order.
More recent work such as those by \cite{celis2019classification} and \cite{tsang2019group} introduce global representation constraints into this framework in the form of fairness constraints.
The state-of-the-art approaches for solving the above-mentioned problems typically rely on full knowledge of the input, employ continuous relaxations, or local exchange techniques.
As such, one cannot directly apply their strategies in our settings where decisions are irrevocable and constraints evolve over time.

\subsection{Facility location under temporal and representation constraints}

Classical and robust facility location problems are extensively studied (see e.g., surveys by \cite{arabani2012facility,owen1998strategic}), including in dynamic and uncertain environments \cite{snyder2006facility,cheng2021robust, atamturk2007two,baron2011facility}.
However, most of these models aim to optimize a final objective after uncertainty is resolved.
Our approach departs from this paradigm by seeking robust decisions at each time step, enabling responsiveness to time-varying demand.
Healthcare-specific studies, such as \cite{gupta2022l_p}, introduce incremental facility placement under multiple normative criteria, including fairness.
However, their work also lacks support for temporally evolving feasibility constraints.
Additionally, they assume static facility needs and a metric distance space, which may not hold in contexts like rural Ethiopia.
Another work by \cite{ghaderi2013modeling} proposes a budgeted healthcare access model in Iran, but does not consider representation or proportionality constraints across population types. 

\subsection{Measure distance to facilities}

\cite{tanser2006modelling} proposed a model for measuring the actual distances to health facilities. \cite{noor2003defining} incorporated equity considerations into their model to enhance the planning and monitoring of health facilities.  
Using the AccessMod \cite{ray2008accessmod} tool for computing the coverage of health facilities, \cite{hierink2023geospatial} analyze the coverage in Ethiopia, and specifically in the Somali region. They suggest that the coverage in the Somali region is insufficient.
Recently, \cite{weiss2020global} proposed a method to quantify access to health facilities globally. Using 
However, these works emphasize the importance of locating the facilities appropriately, but do not provide algorithmic methods for doing so.
Our model explicitly incorporates time-sensitive population variation and provides algorithmic support for adaptive placement subject to budget and distributional considerations.

\subsection{Representation-constrained coverage and online constraints}

Representation constraints have also been studied in coverage problems.
For example, \cite{asudeh2023maximizing} consider a max-$k$-coverage problem with group fairness constraints but enforce rigid group-wise coverage parity, which can lead to infeasibility.
In contrast, our model allows flexible type-level bounds, maintaining feasibility and realism.
\cite{dehghankar2025fair} explore proportionality in set cover, applying constraints to the sets rather than the elements.
Their formulation targets minimum-cost full coverage, which diverges from our goal of budget-constrained partial coverage under temporal uncertainty.
More recently, \cite{cristi2024online} and \cite{santiago2025constant} explore online problems where constraints, rather than elements, arrive over time, similar to our evolving feasibility model.
However, these papers focus on maximizing simple linear (weight-based) objectives and do not directly extend to submodular objectives or type-based distributional constraints.

\subsection{AI For Social Impact}
Recently, many works are focused on developing AI tools for social impact (see e.g., \cite{shi2020artificial} for a broad survey). 
The aforementioned survey emphasizes that the healthcare domain has received the most attention among other application domains. 
In the context of vaccination, \cite{nair2022adviser} introduced a framework that optimizes the allocation of health interventions under uncertainty, with the aim of improving vaccination uptake in Nigeria.
Another work by \cite{ou2022networked}, proposed an algorithm for optimizing mobile interventions in India.
However, many of these works focus on dynamic/reactive intervention allocation problems while we focus on proactive, infrastructure-level investment planning. 
Another line of research examines health-facility location. \cite{shariff2012location, ghaderi2013modeling} develop models for optimizing health-care coverage in Malaysia and Iran, respectively. However, these studies do not incorporate representation or proportionality constraints across different population groups.

\subsection{Learning-augmented algorithms}

Since the seminal work of \cite{lykouris2021competitive}, there has been growing interest in designing algorithms that incorporate unreliable advice, with performance guarantees that degrade gracefully with advice quality.
This framework has been especially influential in online optimization, where advice can serve as a proxy for unknown future inputs.
Notable applications include ski-rental \cite{gollapudi2019online, wang2020online, angelopoulos2020online}, non-clairvoyant scheduling \cite{purohit2018improving}, job scheduling \cite{lattanzi2020online, bamas2020learning, antoniadis2022novel}, data structures with predictions (e.g., learned indexes \cite{kraska2018case}, Bloom filters \cite{mitzenmacher2018model}), online selection and matching \cite{antoniadis2020secretary, dutting2021secretaries, choo2024online, choo2025learning}, and online TSP \cite{bernardini2022universal, gouleakis2023learning}.
More recent work has expanded this paradigm to graph algorithms \cite{dinitz2021faster, chen2022faster}, causal discovery \cite{choo2023active}, mechanism design \cite{agrawal2022learning, gkatzelis2022improved}, improving approximation guarantees in offline NP-hard problems \cite{ergun2022learning, braverman2024learning, cohen2024learning}, and distribution learning \cite{bhattacharyya2025learning, bhattacharyya2025product}.
For a broader overview, we refer the reader to the survey by \cite{mitzenmacher2022algorithms} and the website \url{https://algorithms-with-predictions.github.io/} which tracks papers on this research topic.

\section{Deferred technical details for \texorpdfstring{\cref{sec:formulation}}{Section 2}}

\subsection{Showing that objective function \texorpdfstring{\cref{eq:ethiopia-objective-f}}{Eq. (1)} is non-decreasing and submodular}
\label{sec:appendix-ethiopia-objective-f}

\begin{proposition}
\label{prop:ethiopia-f-is-monotone-submodular}
The objective function defined in \cref{eq:ethiopia-objective-f} is non-decreasing and submodular.
\end{proposition}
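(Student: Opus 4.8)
The plan is to decompose $f$ into a non-negative sum of weighted coverage functions, each precomposed with a restriction map, and to observe that both operations preserve monotonicity and submodularity.

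First I would isolate the core building block: for any index set of candidate facility locations, any assignment $x \mapsto \covered{}(x)$ of covered cells, and any non-negative weights $\{w_c\}_c$, the set function $\bA \mapsto \sum_{c \in \covered{}(\bA)} w_c$, with $\covered{}(\bA) = \bigcup_{x \in \bA} \covered{}(x)$, is non-decreasing and submodular --- this is the classical weighted maximum-coverage function. Monotonicity is immediate because $\bA \subseteq \bB$ implies $\covered{}(\bA) \subseteq \covered{}(\bB)$ and the weights are non-negative. For submodularity, the marginal gain of adding an element $e$ to $\bA$ equals $\sum_{c \in \covered{}(e) \setminus \covered{}(\bA)} w_c$; since $\covered{}(\bA) \subseteq \covered{}(\bB)$ whenever $\bA \subseteq \bB$, we have $\covered{}(e) \setminus \covered{}(\bB) \subseteq \covered{}(e) \setminus \covered{}(\bA)$, so this marginal gain can only shrink as the conditioning set grows, which is exactly the submodularity inequality.

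Next I would record a short closure lemma: if $g : 2^{\bW} \to \R$ is non-decreasing and submodular and $\bW \subseteq \bV$, then $\widehat{g}(\bS) := g(\bS \cap \bW)$ is non-decreasing and submodular on $2^{\bV}$. Monotonicity holds since $\bA \subseteq \bB$ forces $\bA \cap \bW \subseteq \bB \cap \bW$. For submodularity, take $\bA \subseteq \bB \subseteq \bV$ and $e \in \bV \setminus \bB$: if $e \notin \bW$, both marginal gains $\widehat{g}(\bA \cup \{e\}) - \widehat{g}(\bA)$ and $\widehat{g}(\bB \cup \{e\}) - \widehat{g}(\bB)$ vanish; if $e \in \bW$, then $(\bA \cup \{e\}) \cap \bW = (\bA \cap \bW) \cup \{e\}$ and likewise for $\bB$, and since $\bA \cap \bW \subseteq \bB \cap \bW$ with $e \notin \bB \cap \bW$, submodularity of $g$ yields the inequality directly.

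Finally I would write $f(\bS) = \sum_{t=1}^{h} f_t(\bS)$ with $f_t(\bS) = \sum_{c \in \covered{}(\bS^{(1:t)})} w^{(t)}_c$, and note that $\bS^{(1:t)} = \bS \cap \bW_t$ for $\bW_t = \bV^{(1)} \uplus \cdots \uplus \bV^{(t)}$. By the building block (using $w^{(t)}_c \geq 0$, as population forecasts are non-negative), $\bA \mapsto \sum_{c \in \covered{}(\bA)} w^{(t)}_c$ is non-decreasing and submodular on $2^{\bW_t}$, hence $f_t$ is non-decreasing and submodular on $2^{\bV}$ by the closure lemma. Since the class of non-decreasing submodular functions is closed under addition (and non-negative scaling), $f = \sum_{t=1}^h f_t$ is non-decreasing and submodular. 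I do not expect a genuine obstacle here; the only step needing care is the case analysis in the closure lemma (whether the newly added element lies in $\bW$), together with keeping the weights non-negative so monotonicity of the coverage terms is not lost.
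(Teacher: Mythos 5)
Your proof is correct and follows essentially the same route as the paper's: both decompose $f$ over the time steps into non-negatively weighted coverage terms and rely on the fact that the set of newly covered cells (hence the marginal gain) can only shrink as the conditioning set grows. The only difference is presentational --- you factor the argument through two explicit closure lemmas (precomposition with the restriction $\bS \mapsto \bS \cap \bW_t$, then summation over $t$), whereas the paper carries out the same marginal-gain computation directly in a single chain of inequalities.
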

\begin{proof}
Recall from \cref{eq:ethiopia-objective-f} that
\[
f(\bS) = \sum_{t=1}^h \sum_{c \in \textsc{covered}(\bS^{(1:t)})} w^{(t)}_c
\]
Since population counts are non-negative, we see that $f(\bA) \leq f(\bB)$ for any two selections $\bA \subseteq \bB$, so $f$ is non-decreasing.
Meanwhile, to see that $f$ is submodular, fix any $\bA \subseteq \bB \subseteq \bV$ and $e \in \bV \setminus \bB$.
Let $\tau \in [h]$ be the time step for which $e \in \bV^{(\tau)}$.
Then,
\begin{align*}
f(\bA \cup \{e\}) - f(\bA)
&= \sum_{\tau \leq t \leq h} \bigg( \sum_{c \in \textsc{covered}(\{e\} \setminus \bA^{(1:t)})} w^{(t)}_c \bigg) \tag{Definition of $f$}\\
&\geq \sum_{\tau \leq t \leq h} \bigg( \sum_{c \in \textsc{covered}(\{e\} \setminus \bB^{(1:t)})} w^{(t)}_c \bigg) \tag{Since $\bA \subseteq \bB$}\\
&\geq f(\bB \cup \{e\}) - f(\bB) \tag{Definition of $f$}
\end{align*}
Since $f(\bA \cup \{e\}) - f(\bA) \geq f(\bB \cup \{e\}) - f(\bB)$, we see that $f$ is submodular.
\end{proof}

\subsection{Noisy estimate of the true objective function}
\label{sec:appendix-proxy-function}

Suppose we do not have access to the true objective function $f$ in \OurProblem{} but are instead given a proxy function $\wt{f}$ such that
$
(1 - \eps) \cdot f(\bS) \leq \wt{f}(\bS) \leq (1 + \eps) \cdot f(\bS)
$
for all $\bS \subseteq \bV$, for some multiplicative error $\eps \in [0,1]$.

Given an algorithm $\ALG$ that has provable guarantees for \OurProblem{}, the next theorem shows that one can obtain provable approximation guarantees with respect to $\eps$ by running $\ALG$ in a blackbox manner while treating $\wt{f}$ as the true objective function.
In particular, the $\alpha$ in \cref{thm:proxy-function} is $\frac{1}{2} \cdot \frac{k}{k+1}$ if at least $k$ elements of each type is chosen per round when applying \OurAlg{}.

\begin{restatable}{theorem}{proxyfunction}
\label{thm:proxy-function}
Suppose we do not have access to $f$ for an $\OurProblem{}(f, h, r, \bV, \bb, \bp)$ instance.
Instead, we are given a proxy function $\wt{f}: 2^{\bV} \to \R$ with \emph{known} bounded multiplicative error $\eps \in [0,1]$ such that $(1 - \eps) \cdot f(\bS) \leq \wt{f}(\bS) \leq (1 + \eps) \cdot f(\bS)$ for all $\bS \subseteq \bV$.
If $\ALG(f)$ is an algorithm that achieves $\alpha$-approximation to $\OurProblem{}(f, h, r, \bV, \bb, \bp)$, for some $\alpha \in [0,1]$, then $\ALG(\frac{\wt{f}}{1 - \eps})$ achieves an approximation ratio of $\alpha \cdot \frac{1 - \eps}{1 + \eps}$ to $\OurProblem{}(f, h, r, \bV, \bb, \bp)$.
\end{restatable}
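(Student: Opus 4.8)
The plan is to reduce the problem to a purely multiplicative bookkeeping argument. Write $\wh{f} = \wt{f}/(1-\eps)$ for the scaled proxy that $\ALG$ actually optimizes. The first step is to record how $\wh{f}$ sandwiches $f$: from $(1-\eps) f(\bS) \le \wt{f}(\bS) \le (1+\eps) f(\bS)$ we get, dividing by $1-\eps > 0$ (the case $\eps = 1$ needs a separate trivial remark, or we assume $\eps < 1$),
\[
f(\bS) \;\le\; \wh{f}(\bS) \;\le\; \frac{1+\eps}{1-\eps}\, f(\bS)
\qquad \text{for all } \bS \subseteq \bV.
\]
So $\wh{f}$ is always at least $f$ and overestimates it by a factor at most $\frac{1+\eps}{1-\eps}$. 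A small point to check here is that $\wh{f}$ inherits non-decreasingness and submodularity (indeed, $\wh{f}$ need not be submodular!) — but this is a non-issue because $\ALG$'s $\alpha$-guarantee is stated as a guarantee relative to the optimum of the \OurProblem{} instance fed to it, and scaling the objective by the positive constant $1/(1-\eps)$ does not change feasibility, type feasibility, or the $\argmax$ in any greedy step; so $\ALG(\wh f)$ and $\ALG(\wt f)$ produce the same selection, and it suffices that $\ALG$'s guarantee holds for the instance $\OurProblem(\wh f, \ldots)$. Strictly, one should note that $\ALG$ is assumed to work for \OurProblem{} instances with a general non-decreasing submodular objective; if $\wt f$ itself is promised to be non-decreasing and submodular (as will be the case in the application, where $\wt f$ is a coverage function on perturbed weights), this is immediate.

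The second step is the chain of inequalities. Let $\bS = \ALG(\wh{f})$ be the returned selection, let $\bS^{*}$ be an optimal type-feasible selection for $\OurProblem(f,h,r,\bV,\bb,\bp)$, and let $\wh{\bS}^{*}$ be an optimal type-feasible selection for $\OurProblem(\wh f, h, r, \bV, \bb, \bp)$ (same feasible set, different objective). Then
\[
f(\bS) \;\ge\; \tfrac{1-\eps}{1+\eps}\,\wh f(\bS)
\;\ge\; \tfrac{1-\eps}{1+\eps}\cdot \alpha \cdot \wh f(\wh{\bS}^{*})
\;\ge\; \tfrac{1-\eps}{1+\eps}\cdot \alpha \cdot \wh f(\bS^{*})
\;\ge\; \tfrac{1-\eps}{1+\eps}\cdot \alpha \cdot f(\bS^{*}),
\]
where the first inequality is the right half of the sandwich, the second is the $\alpha$-approximation guarantee of $\ALG$ on the $\wh f$-instance, the third is optimality of $\wh{\bS}^{*}$ for the $\wh f$-objective over the common feasible set (so $\wh f(\wh{\bS}^*) \ge \wh f(\bS^*)$), and the last is the left half of the sandwich ($\wh f \ge f$). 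This yields $f(\bS) \ge \alpha \cdot \frac{1-\eps}{1+\eps} \cdot f(\bS^{*})$, which is exactly the claim.

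There is no serious obstacle; the only subtlety — and the step I would be most careful to spell out — is the legitimacy of the third inequality, i.e. that $\wh f$ and $f$ share the \emph{same} feasible region. This is where the structure of \OurProblem{} matters: type feasibility (\cref{def:type-feasible-selection}) and the online budget constraints are defined purely in terms of $\bV$, the partitions, $\bb$, and $\bp$ — they do not reference the objective at all — so replacing $f$ by $\wh f$ leaves the constraint side untouched and only rescales what is being maximized. Hence $\bS^{*}$ is feasible for the $\wh f$-instance and $\wh f(\wh{\bS}^{*}) \ge \wh f(\bS^{*})$ holds. I would state this explicitly as a one-line lemma or inline remark before running the chain. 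The closing sentence can then instantiate $\alpha = \frac12 \cdot \frac{k}{k+1}$ via \cref{thm:apx-ratio-of-our-algo} and \cref{thm:no-sigma} when $\ALG = \OurAlg$ and at least $k$ elements of each type are chosen per round, recovering the bound quoted in the paragraph preceding \cref{thm:proxy-function}.
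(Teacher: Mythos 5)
Your proof is correct and is essentially the paper's own argument run in reverse: the same sandwich $f \le \wt f/(1-\eps) \le \frac{1+\eps}{1-\eps} f$, the same appeal to the $\alpha$-guarantee on the proxy instance, and the same use of the fact that the feasible (type-feasible) region is objective-independent. You additionally flag, correctly, that one must implicitly assume $\ALG$'s guarantee applies to the proxy instance (since $\wt f$ need not inherit submodularity from $f$), a caveat the paper's proof leaves unstated.
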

\begin{proof}
Besides the objective function $f$, let us fix all other parameters $h$, $q$, $\bV$, $\bb$, and $\bp$ arbitrarily.
So, for notational convenience, we simply write $\OurProblem{}(f)$ to mean $\OurProblem{}(f, h, q, \bV, \bb, \bp)$ in the rest of this proof.

For any function $f$, let $\bS_{\OPT}(f)$ denote the optimum type-feasible selection for $\OurProblem{}(f)$ and $\bS_{\ALG}(f)$ denote type-feasible selection produced by $\ALG(f)$.
For any type-feasible selection $\bS \subseteq \bV$, we write $\obj(\bS, \OurProblem{}(f))$ to denote the objective value attained by $\bS$ with respect to the objective function $f$.
Using this notation, we have
\begin{align*}
\frac{1}{\alpha} \cdot \obj(\bS_{\ALG}(f), \OurProblem{}(f))
&\geq \obj(\bS_{\OPT}(f), \OurProblem{}(f)) \tag{Since $\ALG$ is $\alpha$-approximate}\\
&= \max_{\substack{\bS \subseteq \bV\\ \text{$\bS$ is type-feasible}}} \obj(\bS, \OurProblem{}(f)) \tag{By optimality of $\bS_{\OPT}$}
\end{align*}
Therefore, using the proxy function $\wt{f}$, we see that
\begin{align*}
\alpha \cdot \obj(\bS_{\OPT}(f), \OurProblem{}(f))
&\leq \alpha \cdot \obj \left( \bS_{\OPT} (f), \OurProblem{} \left( \frac{\wt{f}}{1 - \eps} \right) \right) \tag{Since $(1 - \eps) \cdot f \leq \wt{f}$}\\
&\leq \alpha \cdot \obj \left( \bS_{\OPT} \left( \frac{\wt{f}}{1 - \eps} \right), \OurProblem{} \left( \frac{\wt{f}}{1 - \eps} \right) \right) \tag{Definition of optimum selection}\\
&\leq \obj \left( \bS_{\ALG} \left( \frac{\wt{f}}{1 - \eps} \right), \OurProblem{} \left( \frac{\wt{f}}{1 - \eps} \right) \right) \tag{$\ALG$ is $\alpha$-approximate}\\
&\leq \obj \left( \bS_{\ALG} \left( \frac{\wt{f}}{1 - \eps} \right), \OurProblem{} \left( \frac{1 + \eps}{1 - \eps} \cdot f \right) \right) \tag{Since $\wt{f} \leq (1 + \eps) \cdot f$}\\
&= \obj \left( \bS_{\ALG} (\wt{f}), \OurProblem{} \left( \frac{1 + \eps}{1 - \eps} \cdot f \right) \right) \tag{$\eps$ is an unknown \emph{constant}}\\
&= \frac{1 + \eps}{1 - \eps} \cdot \obj \left( \bS_{\ALG} (\wt{f}), \OurProblem{} \left( \frac{1 + \eps}{1 - \eps} \cdot f \right) \right) \tag{Since $\wt{f} \leq (1 + \eps) \cdot f$}
\end{align*}
That is, running $\ALG$ on the proxy function $\wt{f}$ yields an approximation ratio of $\alpha \cdot \frac{1 - \eps}{1 + \eps}$.
\end{proof}

\section{Deferred technical details for \texorpdfstring{\cref{sec:single-step}}{Section 4.1}}
\label{sec:appendix-single-step}

In this section, we first prove \cref{thm:LA-one-type-guarantee} where there is only $k = 1$ type, i.e., we are essentially solving size-constrained submodular maximization, and then show how to adapt to the setting with $k \geq 1$ types \cref{thm:LA-many-type-guarantee}.
For convenience, we reproduce \cref{alg:LA-one-type} below.

\begin{algorithm}[htb]
\caption{Learning-augmented algorithm for size-constrained non-decreasing submodular maximization}
\label{alg:LA-one-type-reproduced}
\begin{algorithmic}[1]
\Statex \textbf{Input}: Elements $\bV$, non-decreasing submodular set function $f: \bV \to \R$ to maximize for, budget $b \geq 1$, advice selection $\bA \subseteq \bV$ of size $|\bA| = b$
\Statex \textbf{Output}: A selection $\bU \subseteq \bV$
\State Define subsets $\bA_0, \bA_1, \ldots, \bA_b$ of $\bA$ where $|\bA_i| = i$ for sizes $i \in \{0, 1, \ldots, b\}$ \Comment{$\bA_0 = \emptyset$ and $\bA_b = \bA$}
\For{$i \in \{0, 1, \ldots, b\}$}
    \State Initialize $\bB_i = \emptyset$
    \While{$|\bB_i| < b - i$}
        \State Add $e^* = \argmax\limits_{e \in \bV \setminus (\bA_i \cup \bB_i)} g(\bA_i \cup \bB_i, \{e\})$ to $\bB_i$
    \EndWhile
    \State Define $\bU_i = \bA_i \uplus \bB_i$ \Comment{$|\bU_i| = |\bA_i| + |\bB_i| = b$}
\EndFor
\State \Return $\bU_{i^*}$, where $i^* = \argmax_{i \in \{0, 1, \ldots, b\}} f(\bU_i)$
\end{algorithmic}
\end{algorithm}

The proof of \cref{thm:LA-one-type-guarantee} follows from the following two lemmas: \cref{lem:LA-one-type-optimality} and \cref{lem:LA-one-type-guarantee-for-any-subset}.

\begin{lemma}
\label{lem:LA-one-type-optimality}
Fix any optimal selection $\OPT \subseteq \bV$ of size $|\OPT| = k$ and any arbitrary selection $\bS \subseteq \bV$ of size $|\bS| \leq k$.
Define $\bB$ as the best selection in $\OPT \setminus \bS$ such that
\[
f(\bB \cup \bS) = \max_{\substack{\bZ \subseteq \OPT \setminus \bS\\ |\bZ| = k - |\bS|}} f(\bZ \cup \bS)
\]
Then, for any non-empty subset $\bC \subseteq \OPT \setminus \bS$, we have
\[
f(\bC \cup \bS) - f(\bS) \leq \left\lceil \frac{|\bC|}{|\bB|} \right\rceil \cdot \left( f(\bB \cup \bS) - f(\bS) \right)
\]
\end{lemma}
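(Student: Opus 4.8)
The plan is a ``cover-by-greedy-sized-chunks'' argument: split $\bC$ into $\ell := \lceil |\bC|/|\bB| \rceil$ disjoint blocks, each no larger than $\bB$, bound the marginal value of each block (over $\bS$) by that of $\bB$, and add up. Set $m = |\bB| = |\OPT| - |\bS|$ (we may assume $m \geq 1$, since otherwise $\bB = \emptyset$, the greedy portion is empty, and there is nothing to prove) and $c = |\bC| \geq 1$. Because $\bC \subseteq \OPT \setminus \bS$ and $|\OPT \setminus \bS| = |\OPT| - |\OPT \cap \bS| \geq |\OPT| - |\bS| = m$, we have $1 \leq c \leq |\OPT \setminus \bS|$ and $|\OPT \setminus \bS| \geq m$. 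Partition $\bC = \bC_1 \uplus \cdots \uplus \bC_\ell$ into $\ell$ nonempty parts with $|\bC_j| \leq m$ for every $j$; this is possible precisely because $\ell \cdot m \geq c$.

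First I would telescope the left-hand side using submodularity. Write $\bX_j = \bS \cup \bC_1 \cup \cdots \cup \bC_j$ for $j = 0, 1, \dots, \ell$, so $\bX_0 = \bS$ and $\bX_\ell = \bS \cup \bC$. Each $\bC_j$ is disjoint from $\bX_{j-1}$ (disjoint from $\bS$ since $\bC \subseteq \OPT \setminus \bS$, and disjoint from $\bC_1, \dots, \bC_{j-1}$ by construction), so the diminishing-returns form of submodularity applied with $\bS \subseteq \bX_{j-1}$ gives $f(\bX_j) - f(\bX_{j-1}) \leq f(\bS \cup \bC_j) - f(\bS)$. Summing over $j = 1, \dots, \ell$ telescopes the left side, yielding
\[
f(\bC \cup \bS) - f(\bS) \;\leq\; \sum_{j=1}^{\ell} \bigl( f(\bS \cup \bC_j) - f(\bS) \bigr).
\]
Next I would bound each summand by $f(\bB \cup \bS) - f(\bS)$: since $|\bC_j| \leq m \leq |\OPT \setminus \bS|$, extend $\bC_j$ to some $\bC_j' \subseteq \OPT \setminus \bS$ with $|\bC_j'| = m$; monotonicity of $f$ gives $f(\bS \cup \bC_j) \leq f(\bS \cup \bC_j')$, and the defining optimality of $\bB$ over size-$m$ subsets of $\OPT \setminus \bS$ gives $f(\bS \cup \bC_j') \leq f(\bS \cup \bB)$. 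Plugging this into the display gives $f(\bC \cup \bS) - f(\bS) \leq \ell \cdot (f(\bB \cup \bS) - f(\bS)) = \lceil |\bC|/|\bB| \rceil \cdot (f(\bB \cup \bS) - f(\bS))$, which is the claim.

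The two points that need care — rather than the places I expect real difficulty — are: (i) getting the direction of submodularity right in the telescoping, where we need the marginal of a block over the \emph{larger} accumulated set $\bX_{j-1}$ to be at most its marginal over $\bS$, which holds because every block is disjoint from $\bS$ and from the previously added blocks; and (ii) verifying the counting inequality $|\OPT \setminus \bS| \geq |\bB| = |\OPT| - |\bS|$ that legitimizes the extension of $\bC_j$ to $\bC_j'$. Neither is a genuine obstacle; the lemma is a routine chunking argument, and the only ``loss'' is the ceiling arising from rounding $|\bC|/|\bB|$ up to an integer number of blocks.
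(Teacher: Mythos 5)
Your proof is correct and follows essentially the same route as the paper's: partition $\bC$ into $\lceil |\bC|/|\bB|\rceil$ blocks of size at most $|\bB|$, bound the total marginal gain by the sum of the blocks' marginals over $\bS$ via submodularity, and bound each block's marginal by that of $\bB$. The only difference is that you explicitly extend each block $\bC_j$ to a size-$|\bB|$ subset of $\OPT\setminus\bS$ and invoke monotonicity before applying the maximality of $\bB$ (and you check $|\OPT\setminus\bS|\geq|\bB|$), a small step the paper's ``by maximality of $\bB$'' leaves implicit.
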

\begin{proof}
Break up $\bC$ into $\ell = \left\lceil \frac{|\bC|}{|\bB|} \right\rceil$ partitions $\bC = \bC_1 \uplus \ldots \uplus \bC_{\ell}$, each of size at most $|\bB|$.
Then,
\begin{align*}
f(\bC \cup \bS) - f(\bS)
&\leq \sum_{i=1}^{\ell} \bigg( f(\bC_i \cup \bS) - f(\bS) \bigg) \tag{By submodularity of $f$}\\
&\leq \sum_{i=1}^{\ell} \bigg( f(\bB \cup \bS) - f(\bS) \bigg)\tag{By maximality of $\bB$}\\
&= \left\lceil \frac{|\bC|}{|\bB|} \right\rceil \cdot \bigg( f(\bB \cup \bS) - f(\bS) \bigg) \tag{Since $\ell = \left\lceil \frac{|\bC|}{|\bB|} \right\rceil$}
\end{align*}
The claim follows by rearranging the terms.
\end{proof}

\begin{lemma}
\label{lem:LA-one-type-guarantee-for-any-subset}
Fix any optimal selection $\OPT \subseteq \bV$ and advice selection $\bA \subseteq \bV$ of size $|\OPT| = |\bA| = k$.
Fix an arbitrary subset $\bA' \subseteq \bA$ of $\bA$.
Let $\bB' \subseteq \bV \setminus \bA'$ be the output produced from greedy selection and $\OPT' \subseteq \bV \setminus \bA'$ be the optimum selection with respect to the set function $f(\bS \cup \bA') - f(\bA')$ for any subset $\bS \subseteq \bV \setminus \bA'$, where $|\bB'| = |\OPT'| = k - |\bA'|$.
Then,
\[
f(\bB' \cup \bA')
\geq f(\bA')
+ \frac{1 - \frac{1}{e}}{\left\lceil \frac{|\OPT \setminus \bA'|}{k - |\bA'|} \right\rceil} \cdot \bigg( f(\OPT \cup \bA') - f(\bA') \bigg)
\]
\end{lemma}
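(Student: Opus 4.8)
The plan is to reduce the claim to the classical greedy guarantee for cardinality-constrained monotone submodular maximization applied to the residual (conditioned) objective, and then to absorb the fact that the target set $\OPT \setminus \bA'$ may be larger than the residual budget $m := k - |\bA'|$ by invoking \cref{lem:LA-one-type-optimality}. Throughout, I would work with the conditioned function $g(\bA', \bS) = f(\bA' \uplus \bS) - f(\bA')$ for $\bS \subseteq \bV \setminus \bA'$; since $f$ is non-decreasing and submodular, so is $g(\bA', \cdot)$, with $g(\bA', \emptyset) = 0$. The degenerate case $|\bA'| = k$ (so $m = 0$, $\bB' = \emptyset$, and the bound reads $f(\bA') \geq f(\bA')$ once the vanishing coefficient is read as $0$) is immediate, so I would assume $m \geq 1$ from here on; note that $|\OPT \setminus \bA'| = k - |\OPT \cap \bA'| \geq k - |\bA'| = m \geq 1$, so every size-$m$ subset referenced below exists.

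The first step records the greedy guarantee. By definition $\bB'$ is the output of the standard greedy algorithm on the non-decreasing submodular function $g(\bA', \cdot)$ under the cardinality constraint $m$, and $\OPT'$ is an optimal size-$m$ set for $g(\bA', \cdot)$ over $\bV \setminus \bA'$, so \cite{nemhauser1978analysis} gives $g(\bA', \bB') \geq \left(1 - \tfrac{1}{e}\right) g(\bA', \OPT')$.

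The second step --- which I expect to be the only non-routine part --- relates $g(\bA', \OPT')$ to the gain of the \emph{full} set $\OPT \setminus \bA'$, which in general is too large to be a feasible size-$m$ selection, so it cannot be compared with $\OPT'$ directly. Here I would apply \cref{lem:LA-one-type-optimality} with its ``$\bS$'' taken to be $\bA'$: the resulting set $\bB$ is a size-$m$ subset of $\OPT \setminus \bA' \subseteq \bV \setminus \bA'$ maximizing $f(\bB \cup \bA')$ (equivalently $g(\bA', \bB)$) over size-$m$ subsets of $\OPT \setminus \bA'$, and since $\OPT'$ maximizes $g(\bA', \cdot)$ over \emph{all} size-$m$ subsets of $\bV \setminus \bA'$, we get $g(\bA', \OPT') \geq g(\bA', \bB)$. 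Taking ``$\bC$'' $= \OPT \setminus \bA'$ in \cref{lem:LA-one-type-optimality} and using $(\OPT \setminus \bA') \cup \bA' = \OPT \cup \bA'$ then gives
\[
f(\OPT \cup \bA') - f(\bA') \;=\; g(\bA', \OPT \setminus \bA') \;\leq\; \left\lceil \tfrac{|\OPT \setminus \bA'|}{m} \right\rceil g(\bA', \bB) \;\leq\; \left\lceil \tfrac{|\OPT \setminus \bA'|}{m} \right\rceil g(\bA', \OPT').
\]
Combining this with the greedy inequality from the first step and rearranging yields
\[
f(\bB' \cup \bA') - f(\bA') \;=\; g(\bA', \bB') \;\geq\; \frac{1 - \frac{1}{e}}{\left\lceil |\OPT \setminus \bA'| / (k - |\bA'|) \right\rceil}\Bigl( f(\OPT \cup \bA') - f(\bA') \Bigr),
\]
which is exactly the asserted inequality. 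The whole argument is the textbook greedy analysis plus bookkeeping once \cref{lem:LA-one-type-optimality} is used to convert the ``size overflow'' of $\OPT \setminus \bA'$ into the ceiling loss factor.
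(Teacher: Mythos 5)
Your proposal is correct and follows essentially the same route as the paper's proof: invoke \cref{lem:LA-one-type-optimality} with $\bS = \bA'$ and $\bC = \OPT\setminus\bA'$ to bound $f(\OPT\cup\bA') - f(\bA')$ by the ceiling factor times the gain of the best size-$(k-|\bA'|)$ subset $\bB$, compare $\bB$ to $\OPT'$ by optimality, and finish with the Nemhauser--Wolsey--Fisher greedy guarantee. Your additional handling of the degenerate case $|\bA'| = k$ and the check that $|\OPT\setminus\bA'| \geq k - |\bA'|$ are minor tidiness improvements over the paper's presentation, not a different argument.
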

\begin{proof}
Define $\bB \subseteq \OPT \setminus \bA'$ such that
\[
f(\bB \cup \bA') = \max_{\substack{\bZ \subseteq \OPT \setminus \bA'\\ |\bZ| = k - |\bA'|}} f(\bZ \cup \bA')
\]
Then, we have
\begin{align*}
f(\OPT \cup \bA') - f(\bA')
&\leq \left\lceil \frac{|\OPT \setminus \bA'|}{k - |\bA'|} \right\rceil \cdot \bigg( f(\bB \cup \bA') - f(\bA') \bigg) \tag{$\ast$}\\
&\leq \left\lceil \frac{|\OPT \setminus \bA'|}{k - |\bA'|} \right\rceil \cdot \bigg( f(\OPT' \cup \bA') - f(\bA') \bigg) \tag{Since $\OPT \setminus \bA' \subseteq \bV \setminus \bA'$ and by optimality of $\OPT'$}\\
&\leq \frac{1}{1 - \frac{1}{e}} \cdot \left\lceil \frac{|\OPT \setminus \bA'|}{k - |\bA'|} \right\rceil \cdot \bigg( f(\bB' \cup \bA') - f(\bA') \bigg) \tag{$\ddag$}
\end{align*}
where $(\ast)$ is via \cref{lem:LA-one-type-optimality} while $(\ddag)$ is due to the greedy construction of $\bB'$ while satisfying the size constraint $k$; see \cite{nemhauser1978analysis}.
The claim follows by rearranging the terms.
\end{proof}

\LAonetypeguarantee*
\begin{proof}
By \cref{lem:LA-one-type-guarantee-for-any-subset}, we have
\[
f(\bU_i)
\geq f(\bA_i)
+ \frac{1 - \frac{1}{e}}{\left\lceil \frac{|\OPT \setminus \bA_i|}{k-i} \right\rceil} \cdot \bigg( f(\OPT \cup \bA_i) - f(\bA_i) \bigg)
\]
for all $i \in \{0, 1, \ldots, k\}$.
The claim follows since $\bU$ is the best selection out of $\bU_0, \bU_1, \ldots, \bU_k$.
\end{proof}

To extend \cref{thm:LA-one-type-guarantee} to $r \geq 1$ types, consider an alternative objective set function $f'$ which computes $f$ additively across each type: $f'(\bS) = \sum_{q=1}^r f(\bS \cap \bT_q)$ for any $\bS \subseteq \bV$.
By construction, $f'$ never underestimates $f$.
That is, $f'(\bS) \geq f(\bS) \geq \beta_{f, \bS} \cdot f'(\bS)$ for some $\beta_{f, \bS} \in [0,1]$ that depends on $f$ and $\bS$.
Operating under the assumption that $f$ decomposes with respect to the types simplifies analysis while worsening the approximation ratio by a multiplicative factor proportional to $\max_{\bS \subseteq \bV} \beta_{f, \bS}$; we can invoke the result of \cref{sec:appendix-proxy-function} to $f'$.
Using $f'$ as a proxy of $f$ is justified because each facility primarily serves the population of that same district.
For instance, in the Sidama region of Ethiopia, an average of 81\% of the population within a two-hour walking distance from a cell belongs to the same district.

\begin{restatable}{theorem}{LAmanytypeguarantee}
\label{thm:LA-many-type-guarantee}
Suppose the objective $f$ is type-decomposable and we need to produce a selection subject to a partition matroid over the types with $\cI = \{ \bS \subseteq \bV: | \bS \cap \bT_q | \leq x_q, \forall q \in [r]\}$, for some $x_1, \ldots, x_q \in \N$.
Let $\OPT \subseteq \bV$ and $\bA \subseteq \bV$ be any optimal and advice selection such that $|\OPT \cap \bT_q| = |\bA \cap \bT_q| = x_q$ for all types $q \in [r]$.
Then, for any subset $\bA' \subseteq \bA$, there is a polynomial time algorithm that producing $\bB' \subseteq \bV \setminus \bA'$ such that $\bB' \cup \bA' \in \cI$ and
\[
f(\bB' \cup \bA')
\geq f(\bA')
+ \frac{1 - \frac{1}{e}}{\max\limits_{q \in [r]} \left\lceil \frac{|(\OPT \setminus \bA') \cap \bT_q|}{x_q - |\bA' \cap \bT_q|} \right\rceil} \cdot \bigg( f(\OPT \cup \bA') - f(\bA') \bigg)
\]
\end{restatable}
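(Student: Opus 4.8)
The plan is to exploit type-decomposability to split the instance into $r$ independent single-type problems and then apply \cref{lem:LA-one-type-guarantee-for-any-subset} to each. Writing $\bA'_q = \bA' \cap \bT_q$ and $k_q = x_q - |\bA'_q|$, the algorithm is: for each type $q \in [r]$ with $k_q \geq 1$, run the greedy fill used inside \cref{alg:LA-one-type} restricted to the ground set $\bT_q \setminus \bA'_q$ with target size $k_q$ and conditioning set $\bA'_q$ (i.e., repeatedly add the element of largest marginal gain for the conditioned objective $f(\cdot \cup \bA'_q) - f(\bA'_q)$), obtaining $\bB'_q$; for types with $k_q = 0$ set $\bB'_q = \emptyset$; and output $\bB' = \uplus_{q \in [r]} \bB'_q$. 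Since $|\bB'_q \cup \bA'_q| = x_q$ for every $q$ and the partition matroid $\cI$ factors into independent per-type cardinality bounds, we have $\bB' \cup \bA' \in \cI$; the running time is polynomial, being a sum of per-type greedy costs.

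For the analysis, fix a type $q$ with $k_q \geq 1$. First I would observe that $\OPT \cap \bT_q$ is an optimal size-$x_q$ subset of $\bT_q$ under $f$: if some $\bZ \subseteq \bT_q$ with $|\bZ| = x_q$ had $f(\bZ) > f(\OPT \cap \bT_q)$, then $(\OPT \setminus \bT_q) \uplus \bZ \in \cI$ would have strictly larger $f$-value than $\OPT$ by type-decomposability, a contradiction. Hence $\OPT \cap \bT_q$, the advice $\bA \cap \bT_q$ (of size $x_q$), and the chosen subset $\bA'_q$ instantiate the hypotheses of \cref{lem:LA-one-type-guarantee-for-any-subset} over ground set $\bT_q$, giving
\[
f(\bB'_q \cup \bA'_q) \geq f(\bA'_q) + \frac{1 - \frac{1}{e}}{\left\lceil \frac{|(\OPT \cap \bT_q) \setminus \bA'_q|}{x_q - |\bA'_q|} \right\rceil} \cdot \Big( f\big((\OPT \cap \bT_q) \cup \bA'_q\big) - f(\bA'_q) \Big),
\]
after which I would rewrite $(\OPT \cap \bT_q) \setminus \bA'_q = (\OPT \setminus \bA') \cap \bT_q$ and $(\OPT \cap \bT_q) \cup \bA'_q = (\OPT \cup \bA') \cap \bT_q$. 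For a degenerate type with $k_q = 0$ (so $\bA'_q = \bA \cap \bT_q$) we simply have $f(\bB'_q \cup \bA'_q) = f(\bA'_q)$; adopting the convention $\lceil n / 0 \rceil = \infty$ for $n \geq 1$, such a type contributes $0$ to the improvement term, and if moreover $(\OPT \setminus \bA') \cap \bT_q \neq \emptyset$ it forces the denominator $\max$ to $\infty$, in which case the claimed bound degrades to $f(\bB' \cup \bA') \geq f(\bA')$, which holds by monotonicity; when $(\OPT \setminus \bA') \cap \bT_q = \emptyset$ the per-type gain is already $0$.

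Finally, I would sum the per-type inequalities over $q \in [r]$. Type-decomposability gives $\sum_q f(\bB'_q \cup \bA'_q) = f(\bB' \cup \bA')$, $\sum_q f(\bA'_q) = f(\bA')$, and $\sum_q f\big((\OPT \cup \bA') \cap \bT_q\big) = f(\OPT \cup \bA')$. Since $f$ is non-decreasing, each per-type gain $f\big((\OPT \cup \bA') \cap \bT_q\big) - f(\bA'_q)$ is non-negative, so replacing each denominator $\lceil \cdot \rceil$ by the larger quantity $\max_{q' \in [r]} \left\lceil \frac{|(\OPT \setminus \bA') \cap \bT_{q'}|}{x_{q'} - |\bA' \cap \bT_{q'}|} \right\rceil$ only weakens the right-hand side; pulling this common factor out of the sum then yields exactly the stated inequality. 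The main obstacle is not any single deep step but the bookkeeping in this aggregation — in particular, using the non-negativity of the per-type gains to justify extracting the $\max$ from the sum and handling the degenerate $k_q = 0$ types via the $\lceil n/0 \rceil = \infty$ convention; the one other point needing a (short) argument is that $\OPT \cap \bT_q$ is per-type optimal.
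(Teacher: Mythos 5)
Your proposal is correct and follows essentially the same route as the paper: decompose by type, apply the partitioning bound of \cref{lem:LA-one-type-optimality} (equivalently, \cref{lem:LA-one-type-guarantee-for-any-subset} restricted to $\bT_q$) to each type, pull out the maximum ceiling factor using non-negativity of the per-type gains, and invoke the $(1-\frac{1}{e})$ greedy guarantee, noting that greedy over the partition matroid with a type-decomposable objective coincides with per-type greedy. Your explicit handling of the degenerate $x_q = |\bA' \cap \bT_q|$ case is a small point the paper glosses over, but otherwise the arguments match.
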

\begin{proof}
Define $\bB \subseteq \OPT \setminus \bA$ such that $\bB \cup \bA \in \cI$ and
\[
f(\bB \cup \bA') = \max_{\substack{\bZ \subseteq \OPT \setminus \bA'\\ \bZ \cup \bA' \in \cI}} f(\bZ \cup \bA')
\]
Since $f$ decomposes additively across types, we get
\[
f \big( (\bB \cup \bA') \cap \bT_q \big) = \max_{\substack{\bZ \subseteq \OPT \setminus \bA'\\ |\bZ \cap \bT_q| \leq x_q - |\bA' \cap \bT_q|}} f \big( (\bZ \cup \bA') \cap \bT_q \big)
\]
for each type $q \in [r]$.
We define $\bB'$ to be the greedy selection produced by \textsc{LocalGreedy} (\cref{alg:localgreedy}) over the matroid $\cI$ having already selected $\bA'$.
For readability, let us define
\[
\Box_q = \left\lceil \frac{|(\OPT \setminus \bA') \cap \bT_q|}{x_q - |\bA' \cap \bT_q|} \right\rceil
\]
for each type $q \in [r]$.
Then, we have
\begin{align*}
f(\OPT \cup \bA') - f(\bA')
&= \sum_{q=1}^r f \big( (\OPT \cup \bA') \cap \bT_q \big) - f \big(\bA' \cap \bT_q \big) \tag{Since $f$ decomposes additively across types}\\
&\leq \sum_{q=1}^r \Box_q \cdot \bigg( f \big( (\bB \cup \bA') \cap \bT_q \big) - f \big(\bA' \cap \bT_q \big) \bigg) \tag{$\ast$}\\
&\leq \left( \max_{q \in [r]} \Box_q \right) \cdot \bigg( f(\bB \cup \bA') - f(\bA') \bigg) \tag{Since $f$ decomposes additively across types}\\
&\leq \left( \max_{q \in [r]} \Box_q \right) \cdot \bigg( f(\OPT' \cup \bA') - f(\bA') \bigg) \tag{Since $\OPT \setminus \bA' \subseteq \bV \setminus \bA'$ and by optimality of $\OPT'$}\\
&\leq \frac{1}{1 - \frac{1}{e}} \cdot \left( \max_{q \in [r]} \Box_q \right) \cdot \bigg( f(\bB' \cup \bA') - f(\bA') \bigg) \tag{$\ddag$}
\end{align*}
where $(\ast)$ is via \cref{lem:LA-one-type-optimality} while $(\ddag)$ is due to the greedy construction of $\bB'$ while satisfying the matroid constraint $\cI$; see \cite{fisher1978analysis}.
The claim follows by rearranging the terms.
\end{proof}

Motivated by our analyses, one can define the following learning-augmented variant of \OurAlg{} in \cref{alg:ouralg-with-advice}.
While one can use similar approaches to obtain theoretical guarantees for \cref{alg:ouralg-with-advice}, the resulting guarantees are unwieldly and do not provide any intuition.
As such, we do not make any theoretical claims and simply present \cref{alg:ouralg-with-advice} as a heuristic algorithm.

\begin{algorithm}[htb]
\caption{\OurAlgWithAdvice{}}
\label{alg:ouralg-with-advice}
\begin{algorithmic}[1]
\Statex \textbf{Input}: Objective function $f$, time horizon $h$, elements $\bV$ over $r$ types, budgets $b^{(1)}, \ldots, b^{(h)} \in \N$ that arrive over time, advice selections $\bA^{(1)}, \ldots, \bA^{(h)}$ that arrive over time, proportions $p_1, \ldots, p_r \in [0,1]$ for each type, tie-breaking ordering $\sigma: [r] \to [r]$
\Statex \textbf{Output}: A selection $\bS = \bS^{(1)} \uplus \ldots \uplus \bS^{(h)}$
\State Define $\bS^{(1)} = \ldots = \bS^{(h)} = \emptyset$ \Comment{Initialize selections}
\For{$t = 1, \ldots, h$}
    \State For $q \in [r]$, define $\#(q, \bs(b^{(1:0)}, b^{(1:0)})) = 0$ and
    \[
    x^{(t)}_q = \#(q, \bs(b^{(1:t)}, b^{(1:t)})) - \#(q, \bs(b^{(1:t-1)}, b^{(1:t-1)}))
    \]
    \State Let $\cM^{(t)} = (\bV^{(t)}, \cI^{(t)})$ be the partition matroid at
    \Statex\hspace{\algorithmicindent}time $t$, where $\cI^{(t)} = \{ \bS \subseteq \bV^{(t)}: | \bS \cap \bT_q | \leq x^{(t)}_q \}$.
    \State Define subsets $\bA^{(t)}_0, \bA^{(t)}_1, \ldots, \bA^{(t)}_{b^{(t)}}$ of $\bA^{(t)}$ where
    \Statex\hspace{\algorithmicindent}$|\bA^{(t)}_i| = i$ for sizes $i \in \{0, 1, \ldots, b^{(t)}\}$
    \For{$i \in \{0, 1, \ldots, b^{(t)}\}$}
        \State Initialize $\bB^{(t)}_i = \emptyset$
        \While{$|\bB^{(t)}_i| < b^{(t)} - i$}
            \State Add $e^*$ to $\bB^{(t)}_i$, where
            \[
            e^* =
            \argmax\limits_{\substack{e \in \bV^{(t)} \setminus (\bA^{(t)}_i \cup \bB^{(t)}_i)\\ \bS^{(1:t-1)} \cup \bA^{(t)}_i \cup \bB^{(t)}_i \cup \{e\} \in \cI^{(t)}}} g(\bS^{(1:t-1)} \cup \bA^{(t)}_i \cup \bB^{(t)}_i, \{e\})
            \]
        \EndWhile
        \State Define $\bU^{(t)}_i = \bA^{(t)}_i \uplus \bB^{(t)}_i$ \Comment{$|\bU^{(t)}_i| = b^{(t)}$}
    \EndFor
    \State Define $\bS^{(t)} = \bU^{(t)}_{i^*}$, where
    \[
    i^* = \argmax\limits_{i \in \{0, 1, \ldots, b^{(t)}\}} f(\bU^{(t)}_i)
    \]
\EndFor
\State Output $\bS^{(1)} \uplus \ldots \uplus \bS^{(h)}$
\end{algorithmic}
\end{algorithm}

\section{Deferred technical details for \texorpdfstring{\cref{sec:multi-step}}{Section 4.2}}
\label{sec:appendix-multi-step}

In this section, we work towards proving \cref{thm:apx-ratio-of-our-algo}.
We begin by proving \cref{thm:no-sigma}, along with some additional results about $\sigma$-type feasible selections (\cref{def:sigma-type-feasible-selection}) that will be useful for proving algorithmic guarantees subsequently.

Note that any $\sigma$-type feasible selection (\cref{def:sigma-type-feasible-selection}) is also type feasible (\cref{def:type-feasible-selection}), though the latter can tie-break arbitrarily when selecting items from the set of types with minimal satisfaction ratio.
We now show that a $\sigma$-type feasible optimum selection $\OPT_\sigma$ is competitive against a type feasible optimum selection $\OPT$ if at least $k$ elements of each type is chosen per round when applying \OurAlg{}.

The following lemma about $\sigma$-type feasible subsets of a type feasible selections will be useful for proving our main result for this section (\cref{thm:no-sigma}).

\begin{restatable}{lemma}{tiebrokentypefeasiblesubsetoftypefeasibleselection}
\label{lem:sigma-type-feasible-subset-of-type-feasible-selection}
Fix an \OurProblem{} problem instance $\cP$.
Let $\OPT_\sigma$ be a $\sigma$-type feasible optimal selection to $\cP$ and $\OPT$ be a type feasible optimal selection to $\cP$.
Let $\cI \subseteq 2^{\bV}$ be the independence set corresponding to the tie-broken type feasibility of $\OPT_\sigma$ and define $\bW \subseteq \OPT$ to minimize the marginal gain:
\[
g(\OPT \setminus \bW, \bW)
= \min_{\substack{\bA \subseteq \OPT \subseteq \bV\\ (\OPT \setminus \bA) \in \cI}} g(\OPT \setminus \bA, \bA)
\]
Then, $|\bW \cap \bV^{(t)} \cap \bT_q| \leq 1$ for any $t \in [h]$ and $q \in [r]$.
\end{restatable}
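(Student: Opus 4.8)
I would split the claim into a matroid-structure reduction and a purely combinatorial bound on how far two type-feasible selections can disagree inside one block $\bV^{(t)} \cap \bT_q$. For the reduction: since $\OPT_\sigma$ is $\sigma$-type feasible, its per-round per-type counts are exactly the quotas $x^{(t)}_q = |\OPT_\sigma \cap \bV^{(t)} \cap \bT_q|$ computed in \cref{alg:ouralg} (uniquely fixed once $\sigma$ is), so $\cI$ is the partition matroid on the partition $\{\bV^{(t)} \cap \bT_q\}_{t \in [h],\, q \in [r]}$ with capacities $x^{(t)}_q$. Because $f$ is non-decreasing, $g(\OPT \setminus \bW, \bW) = f(\OPT) - f(\OPT \setminus \bW)$ is weakly decreasing as $\bW$ shrinks, so we may take $\bW$ to be an inclusion-minimal feasible set (i.e.\ one with $\OPT \setminus \bW \in \cI$); since a partition matroid decomposes over its parts, such a $\bW$ satisfies
\[
|\bW \cap \bV^{(t)} \cap \bT_q| \;=\; \max\!\bigl\{\, 0,\; |\OPT \cap \bV^{(t)} \cap \bT_q| - x^{(t)}_q \,\bigr\}
\]
in every block $(t,q)$. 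Hence it suffices to show $|\OPT \cap \bV^{(t)} \cap \bT_q| \le x^{(t)}_q + 1$ for all $t,q$, i.e.\ that the per-block counts of the two type-feasible selections $\OPT$ and $\OPT_\sigma$ differ by at most one.

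For the combinatorial bound I would invoke the structural description of type-feasible selections underlying \cref{def:type-feasible-selection}, from which the quotas $x^{(t)}_q$ (via the canonical sequences $\bs(\cdot,\cdot)$) are read off. Both $\OPT$ and $\OPT_\sigma$ pick exactly $b^{(\tau)}$ elements per round, and by \cref{def:type-feasible-selection} their cumulative selections attain the same minimum satisfaction ratio $\beta(b^{(1:t)})$ and the same minimum value of $|\bQ(\cdot)|$ at each $t$. Consequently the cumulative type-$q$ count of each is pinned to a value fixed by $\beta(b^{(1:t)})$, the proportions, and type availability, up to the placement of at most one ``surplus'' unit per type: a surplus of two on a single type wastes a unit that could instead remove another type from the bottleneck set $\bQ(\cdot)$, contradicting minimality of $|\bQ(\cdot)|$, while a large total surplus would force $\beta(b^{(1:t)})$ strictly upward. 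So the two cumulative counts agree up to a $\pm 1$ slack, and a per-block count is just the difference of two consecutive cumulative counts.

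The crux -- and where I expect the real work to be -- is that differencing two quantities each known only to within $\pm 1$ bounds the block discrepancy by $2$, not $1$; shaving off the last unit is exactly what the \emph{fixed} tie-break $\sigma$ provides. Since $\OPT_\sigma$ breaks every tie by the same ordering $\sigma$, its surplus-placement pattern is regular across rounds: a type carrying a surplus in round $t-1$ retains it in round $t$ unless it becomes unavailable, in which case $\bV^{(t)} \cap \bT_q = \emptyset$ and the block discrepancy is $0$ anyway. This rules out the only bad configuration -- $\OPT$ acquiring a surplus in block $(t,q)$ in precisely the round where $\OPT_\sigma$ gives one up -- and so $|\OPT \cap \bV^{(t)} \cap \bT_q| - x^{(t)}_q \le 1$, giving $|\bW \cap \bV^{(t)} \cap \bT_q| \le 1$. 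The matroid reduction and monotonicity argument are routine; establishing the regularity of the $\sigma$-canonical surplus pattern (equivalently, the right structural lemma about $\bs(\cdot,\cdot)$) is the part that needs care.
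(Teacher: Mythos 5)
Your reduction is sound and matches the paper's first step: the minimal $\bW$ satisfies $|\bW \cap \bV^{(t)} \cap \bT_q| = \max\{0,\, |\OPT \cap \bV^{(t)} \cap \bT_q| - |\OPT_\sigma \cap \bV^{(t)} \cap \bT_q|\}$ blockwise, and you correctly isolate the crux: the cumulative $\pm 1$ bounds alone only cap the per-block discrepancy at $2$, so one unit must still be shaved off. The gap is in how you shave it. Your mechanism --- ``regularity of the $\sigma$-canonical surplus pattern,'' i.e.\ that a type carrying a surplus in round $t-1$ retains it in round $t$ unless it becomes unavailable --- is false as stated: with $r=2$, $p_1=p_2=1/2$ and $\sigma$ favoring type $1$, the min-ratio sequence is $1,2,1,2,\ldots$, so type $1$ carries a surplus at every odd cumulative budget and none at every even one, with both types available throughout. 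More importantly, the argument is aimed at the wrong object: the degrees of freedom that could produce the bad configuration (where $\OPT$'s cumulative type-$q$ count goes from one \emph{below} $\OPT_\sigma$'s at time $t-1$ to one \emph{above} at time $t$) come from $\OPT$'s arbitrary tie-breaking, and no amount of structure on $\OPT_\sigma$'s tie-breaking alone constrains what $\OPT$ does.

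The paper closes this gap with a local exchange on $\OPT$ itself. Assuming a block surplus of $2$ and invoking the cumulative bound at $t-1$, one gets $|\OPT \cap \bV^{(1:t)} \cap \bT_q| \geq |\OPT_\sigma \cap \bV^{(1:t)} \cap \bT_q| + 1$, so removing one type-$q$ element $e$ from $\OPT \cap \bV^{(1:t)}$ does not worsen $\alpha_{\min}$; swapping in an available element whose type lies in $\bQ(\OPT \cap \bV^{(1:t)})$ then either strictly increases $\alpha_{\min}$ (contradicting clause 1 of \cref{def:type-feasible-selection} for $\OPT$) or strictly decreases $|\bQ|$ (contradicting clause 2). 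This is the same exchange idea you already invoke to justify the cumulative $\pm 1$ bound; it needs to be deployed once more at the block level in place of the appeal to $\sigma$-regularity. As written, the step you yourself flag as ``where the real work is'' remains unproven, and the sketch offered for it does not go through.
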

\begin{proof}
Note that the minimality of $\bW$ would try to make $\OPT \setminus \bW$ overlap with $\OPT_\sigma$, in terms of element types at each time step, as much as possible.
More formally, for any time step $t \in [h]$ and type $q \in [r]$, we have
\begin{equation}
\label{eq:minimality-of-W}
|(\OPT \setminus \bW) \cap \bV^{(t)} \cap \bT_q|\\
= \min \bigg\{|\OPT_\sigma \cap \bV^{(t)} \cap \bT_q|, |\OPT \cap \bV^{(t)} \cap \bT_q| \bigg\}
\end{equation}

\medskip
\noindent
\textbf{Case 1:} $|\OPT_\sigma \cap \bV^{(t)} \cap \bT_q| > |\OPT \cap \bV^{(t)} \cap \bT_q|$.\\
By \cref{eq:minimality-of-W}, we have $|(\OPT \setminus \bW) \cap \bV^{(t)} \cap \bT_q| = |\OPT \cap \bV^{(t)} \cap \bT_q|$.
So,
\begin{align*}
|\bW \cap \bV^{(t)} \cap \bT_q|
&= |\OPT \cap \bV^{(t)} \cap \bT_q| - |(\OPT \setminus \bW) \cap \bV^{(t)} \cap \bT_q|\\
&= |\OPT \cap \bV^{(t)} \cap \bT_q| - |\OPT \cap \bV^{(t)} \cap \bT_q| \tag{By \cref{eq:minimality-of-W}}\\
&= 0 \leq 1
\end{align*}

\medskip
\noindent
\textbf{Case 2:} $|\OPT_\sigma \cap \bV^{(t)} \cap \bT_q| \leq |\OPT \cap \bV^{(t)} \cap \bT_q|$.\\
Suppose, for a contradiction, that $|\bW \cap \bV^{(t)} \cap \bT_q| \geq 2$.
By \cref{eq:minimality-of-W}, we have $|(\OPT \setminus \bW) \cap \bV^{(t)} \cap \bT_q| = |\OPT_\sigma \cap \bV^{(t)} \cap \bT_q|$.
So,
\begin{align*}
|\OPT \cap \bV^{(t)} \cap \bT_q|
&= |\bW \cap \bV^{(t)} \cap \bT_q| + |(\OPT \setminus \bW) \cap \bV^{(t)} \cap \bT_q|\\
&\geq 2 + |(\OPT \setminus \bW) \cap \bV^{(t)} \cap \bT_q| \tag{Since $|\bW \cap \bV^{(t)} \cap \bT_q| \geq 2$}\\
&= 2 + |\OPT_\sigma \cap \bV^{(t)} \cap \bT_q| \tag{By \cref{eq:minimality-of-W}}
\end{align*}

Let $e$ be an arbitrary element in $\bW \cap \bV^{(t)} \cap \bT_q$.
Note that $\bW \subseteq \OPT$, $\alpha_{\min}((\OPT \cap \bV^{(1:t)}) \setminus \{e\}) \leq \alpha_{\min}(\OPT \cap \bV^{(1:t)})$, and the inequality is strict only when removing $e$ causes the min-ratio to worsen.
As $\OPT_\sigma$ and $\OPT$ are both type feasible, we have $\alpha_{\min}(\OPT_\sigma \cap \bV^{(1:t)}) = \alpha_{\min}(\OPT \cap \bV^{(1:t)})$ for any time step $t \in [h]$, and so
\[
\bigg| |\OPT_\sigma \cap \bV^{(1:t)} \cap \bT_q| - |\OPT \cap \bV^{(1:t)} \cap \bT_q| \bigg| \leq 1
\]
for all $t \in [h]$ and $q \in [r]$.
In particular, we have
\[
\bigg| |\OPT_\sigma \cap \bV^{(1:t-1)} \cap \bT_q| - |\OPT \cap \bV^{(1:t-1)} \cap \bT_q| \bigg| \leq 1
\]
at time step $t-1$ and any $q \in [r]$.
Now, since we have $|\OPT \cap \bV^{(t)} \cap \bT_q| \geq 2 + |\OPT_\sigma \cap \bV^{(t)} \cap \bT_q|$ from above, so it must be the case that removing $e$ of $\type(e) = q$ does \emph{not} worsen the min-ratio.
So,
\begin{equation}
\label{eq:alpha-min-unchanged-after-removal-of-one-element}
\alpha_{\min}((\OPT \cap \bV^{(1:t)}) \setminus \{e\})
= \alpha_{\min}(\OPT \cap \bV^{(1:t)})
\end{equation}

Let $e_{\min} \in \bV^{(1:t)} \setminus \OPT$ be an element with $\type(e_{\min}) \in \bQ(\OPT \cap \bV^{(1:t)})$.
Then, the set $\bS = (\OPT \cap \bV^{(1:t)}) \setminus \{e\} \cup \{e_{\min}\}$ satisfies $|\bS| = |\OPT \cap \bV^{(1:t)}|$ and
\begin{align*}
\alpha_{\min}(\bS)
&\; \geq \alpha_{\min}((\OPT \cap \bV^{(1:t)}) \setminus \{e\}) \tag{Since $(\OPT \cap \bV^{(1:t)}) \setminus \{e\} \subset \bS$}\\
&\; = \alpha_{\min}(\OPT \cap \bV^{(1:t)}) \tag{By \cref{eq:alpha-min-unchanged-after-removal-of-one-element}}
\end{align*}
We should not have $\alpha_{\min}(\bS) > \alpha_{\min}(\OPT \cap \bV^{(1:t)})$ since $\OPT$ is supposed to be type feasible.
However, if $\alpha_{\min}(\bS) = \alpha_{\min}(\OPT \cap \bV^{(1:t)})$, then we have $|\bQ(\bS)| < |\bQ(\OPT \cap \bV^{(1:t)})|$ because $\bS$ has the additional element $e_{\min}$ with $\type(e_{\min}) \in \bQ(\OPT \cap \bV^{(1:t)})$.
This contradicts the lemma assumption that $\OPT$ is type feasible.
So, we \emph{cannot} have $|\bW \cap \bV^{(t)} \cap \bT_q| \geq 2$, i.e., $|\bW \cap \bV^{(t)} \cap \bT_q| \leq 1$.
\end{proof}

\nosigma*
\begin{proof}
Let $\cI \subseteq 2^{\bV}$ be the independence set corresponding to the type feasibility of $\OPT_\sigma$ and define $\bW \subseteq \OPT$ to minimize the marginal gain:
\[
g(\OPT \setminus \bW, \bW)
= \min_{\substack{\bA \subseteq \OPT \subseteq \bV\\ (\OPT \setminus \bA) \in \cI}} g(\OPT \setminus \bA, \bA)
\]
Since $f$ is non-decreasing, the optimization will try to pick a $\bW$ that is as small as possible, subject to $\OPT \setminus \bW$ satisfying the independence set constraints of $\cI$.

Fix a time step $t \in [h]$ and type $q \in [r]$.
By minimality of $\bW$, we know that $|\bW \cap \bV^{(t)} \cap \bT_q| \leq |\OPT \cap \bV^{(t)} \cap \bT_q| - |\OPT_\sigma \cap \bV^{(t)} \cap \bT_q|$.
Putting together, we get
\begin{align*}
|(\OPT \setminus \bW) \cap \bV^{(t)} \cap \bT_q|
&= |\OPT \cap \bV^{(t)} \cap \bT_q| - |\bW \cap \bV^{(t)} \cap \bT_q|\\
&\geq |\OPT_\sigma \cap \bV^{(t)} \cap \bT_q| \tag{Minimality of $\bW$}\\
&\geq k \tag{By assumption}
\end{align*}
By \cref{lem:sigma-type-feasible-subset-of-type-feasible-selection}, we know that $|\bW \cap \bV^{(t)} \cap \bT_q| \leq 1$, and therefore there exist $k$ disjoint subsets $\bY_1, \ldots, \bY_k \subseteq (\OPT \setminus \bW) \cap \bV^{(t)} \cap \bT_q$ such that $|\bY_i \cap \bV^{(t)} \cap \bT_q| = |\bW \cap \bV^{(t)} \cap \bT_q|$ and $\OPT \setminus \bY_i \in \cI$ for each index $i \in [k]$.
Writing $\bZ = \OPT \setminus \bW$, observe that
\begin{align*}
f(\bZ)
&\; \geq f(\uplus_{i=1}^k \bY_i) \tag{$\ast$}\\
&\; = \sum_{i=1}^k g(\uplus_{j=1}^{i-1} \bY_j, \bY_i) \tag{By definition of $g$}\\
&\; \geq \sum_{i=1}^k g(\OPT \setminus \bY_i, \bY_i) \tag{$\ddag$}\\
&\; \geq \sum_{i=1}^k g(\bZ, \bW) \tag{Minimality of $\bW$}\\
&\; = k \cdot g(\bZ, \bW)
\end{align*}
where $(\ast)$ is because $\uplus_{i=1}^k \bY_i \subseteq \bZ$ and $f$ is non-decreasing, and $(\ddag)$ is because $\uplus_{j=1}^{i-1} \bY_j \subseteq \OPT \setminus \bY_i$ and $f$ is submodular.
Therefore,
\begin{align*}
f(\OPT)
&\; = f(\bZ) + g(\bZ, \bW) \tag{By definition of $g$ and $\bZ$}\\
&\; \leq f(\bZ) + \frac{1}{k} \cdot f(\bZ) \tag{From above}\\
&\; \leq \frac{k+1}{k} \cdot f(\OPT_\sigma)
\end{align*}
where the last inequality is due to optimality of $\OPT_\sigma$ and since $\bZ = \OPT \setminus \bW \in \cI$ by choice of $\bW$.
That is, $\bZ$ satisfies the type feasible constraints of $\OPT_\sigma$.
\end{proof}

We now prove some additional properties about $\sigma$-type feasible selections (\cref{def:sigma-type-feasible-selection}) that will be useful in our proofs in later sections.

\begin{definition}[$b$-minimal $\sigma$-type feasible set]
For any $b \in \N$, we say that a subset $\bS \subseteq \bV$ is a $b$-minimal $\sigma$-type feasible set if $1 \leq |\bS| \leq b$ and
\[
\min_{q \in [r]} \frac{|\bS \cap \bT_q|}{p_q \cdot b}
> \max_{e \in \bS} \min_{q \in [r]} \frac{|(\bS \setminus \{e\} \cap \bT_q|}{p_q \cdot b}
\geq 0
\]
\end{definition}

\begin{lemma}
\label{lem:minimal-containment}
Let $\bS$ be a $\sigma$-type feasible set of size $|\bS| = b$.
Then, we have $\bS' \subseteq \bS$ for any $b$-minimal $\sigma$-type feasible set $\bS'$ such that
\begin{equation}
\label{eq:choice-of-s-prime}
\min_{q \in [r]} \frac{|\bS' \cap \bT_q|}{p_q \cdot b}
\leq \min_{q \in [r]} \frac{|\bS \cap \bT_q|}{p_q \cdot b}
\end{equation}
\end{lemma}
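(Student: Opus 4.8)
The plan is to reduce the claim to the \emph{type-composition vectors} of the two sets, since everything in sight depends only on the counts $n_q = |\bS \cap \bT_q|$ and $c_q = |\bS' \cap \bT_q|$. Indeed, the defining inequality of a $b$-minimal $\sigma$-type feasible set, namely $\min_q \tfrac{|{\cdot}\cap\bT_q|}{p_q b} > \max_{e}\min_q \tfrac{|({\cdot}\setminus\{e\})\cap\bT_q|}{p_q b} \ge 0$, involves only the counts, so it is preserved when a set is replaced by any other set with the same composition. Hence it suffices to prove the coordinatewise bound $c_q \le n_q$ for all $q$: one then realizes (a copy of) $\bS'$ inside $\bS$ by choosing $c_q$ of the $n_q$ available type-$q$ elements of $\bS$ for each $q$, obtaining a $b$-minimal $\sigma$-type feasible subset of $\bS$ with the same min-ratio. (If $\bS'$ is read as an externally fixed set, the computation below still gives $c_q \le n_q$, and one replaces $\bS'$ by this equivalent representative inside $\bS$.)

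The first --- and, I expect, hardest --- step is to determine the composition of a $b$-minimal set. Write $m = \min_q \tfrac{c_q}{p_q b}$, treating a type with $p_q = 0$ as never attaining the minimum. I would first show $m > 0$, whence $c_q \ge 1$ for every $q$ with $p_q > 0$ and $c_q = 0$ for every $q$ with $p_q = 0$: a type with $p_q > 0$ and $c_q = 0$ would make $m = 0$ and kill the strict inequality, while a type with $p_q = 0$ and $c_q \ge 1$ would admit an element whose deletion leaves $m$ unchanged, contradicting $b$-minimality. Then I would show $c_q = \lceil p_q b\, m\rceil$ for every $q$ with $p_q > 0$: deleting a type-$q$ element replaces the minimum by $\min\!\big(\tfrac{c_q-1}{p_q b},\,\min_{q'\ne q}\tfrac{c_{q'}}{p_{q'} b}\big)$, and a short case split (according to whether $q$ is the unique minimizer of the original count vector) shows this is $< m$ iff $\tfrac{c_q-1}{p_q b} < m$, which together with $c_q \ge p_q b\, m$ (immediate from $m = \min_{q'}\tfrac{c_{q'}}{p_{q'}b}$) and $c_q \in \N$ forces $c_q = \lceil p_q b\, m\rceil$; since $b$-minimality requires \emph{every} deletion to strictly lower the minimum, this holds for all such $q$. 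The real work is in this step: getting the case analysis and the ceiling/integrality bookkeeping right, and cleanly handling the $p_q = 0$ types and the discreteness of the admissible levels $m$ (those with $p_q b\, m \in \N$ for some $q$).

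Given the characterization, the conclusion drops out. The hypothesis $\min_q \tfrac{c_q}{p_q b} \le \min_q \tfrac{n_q}{p_q b}$ gives $m \le \tfrac{n_q}{p_q b}$, i.e.\ $n_q \ge p_q b\, m$, for every $q$ with $p_q > 0$; since $n_q \in \N$ this yields $n_q \ge \lceil p_q b\, m\rceil = c_q$. For $q$ with $p_q = 0$ we have $c_q = 0 \le n_q$ trivially. Hence $c_q \le n_q$ for all $q$, and the first paragraph lets us take $\bS' \subseteq \bS$. (The argument uses nothing about $\bS$ beyond $|\bS| = b$; it is vacuous precisely when all $p_q = 0$, or when $\alpha_{\min}(\bS) = 0$ so that no $b$-minimal $\bS'$ can satisfy the hypothesis.)
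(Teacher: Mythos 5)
Your proof is correct, but it takes a genuinely different route from the paper's. The paper argues by contradiction in three lines: if $\bS' \not\subseteq \bS$, pick a type $q^*$ that is over-represented in $\bS'$ relative to $\bS$ and delete one element $e$ of that type from $\bS'$; then $\min_q \tfrac{|\bS\cap\bT_q|}{p_q b} \le \tfrac{|\bS\cap\bT_{q^*}|}{p_{q^*} b} \le \tfrac{|(\bS'\setminus\{e\})\cap\bT_{q^*}|}{p_{q^*} b} < \min_q \tfrac{|\bS'\cap\bT_q|}{p_q b}$, where the last inequality is exactly the $b$-minimality of $\bS'$ (deleting $e$ only changes the $q^*$ coordinate, so the strict drop in the minimum must come from $\tfrac{c_{q^*}-1}{p_{q^*}b}$), contradicting the hypothesis. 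You instead fully characterize the composition of a $b$-minimal set as $c_q = \lceil p_q b\, m\rceil$ (with $c_q = 0$ when $p_q = 0$) and deduce coordinatewise domination $c_q \le n_q$ from integrality of $n_q$; this costs you the ceiling/case bookkeeping that the paper avoids, but it buys an explicit and reusable description of $b$-minimal compositions, and in particular establishes that the $b$-minimal set with a given min-ratio is unique up to the choice of representatives within each type. You are also more careful than the paper on one point both proofs must confront: the lemma is literally false for an arbitrary external $\bS'$ (two disjoint sets can share a composition), so ``$\bS' \subseteq \bS$'' only makes sense at the level of type counts; the paper silently conflates set containment with count domination (its very first step, extracting $e \in \bS'\setminus\bS$ with $|\bS'\cap\bT_{q^*}| > |\bS\cap\bT_{q^*}|$, is only valid under that reading), whereas you flag the reduction to composition vectors explicitly. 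Both arguments prove the statement in the form in which it is actually used later (in the extendability lemma), so there is no gap.
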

\begin{proof}
Suppose, for a contradiction, that there is some $(a', b)$-minimal $\sigma$-type feasible set $\bS'$ such that $\bS' \not\subseteq \bS$.
Then, there exists some element $e \in \bS' \setminus \bS$ such that $q^* = \type(e)$ and $|\bS' \cap \bT_{q^*}| > |\bS \cap \bT_{q^*}|$.
So, $|(\bS' \setminus \{e\}) \cap \bT_{q^*}| \geq |\bS \cap \bT_{q^*}|$.
Then, we see that
\begin{align*}
\min_{q \in [r]} \frac{|\bS \cap \bT_q|}{p_q \cdot b}
&\leq \frac{|\bS \cap \bT_{q^*}|}{p_{q^*} \cdot b} \tag{By definition of minimum}\\
&\leq \frac{|(\bS' \setminus \{e\}) \cap \bT_{q^*}|}{p_{q^*} \cdot b} \tag{From above}\\
&< \min_{q \in [r]} \frac{|\bS' \cap \bT_q|}{p_q \cdot b} \tag{Since $\bS'$ is $b$-minimal}
\end{align*}
However, this is a contradiction to \cref{eq:choice-of-s-prime}.
\end{proof}

\cref{lem:sigma-type-feasible-is-extendable} implies that any $\sigma$-type feasible solution of length $b$ is an extension of another $\sigma$-type feasible solution of length $b-1$.

\begin{lemma}
\label{lem:sigma-type-feasible-is-extendable}
For any length $b \in \N_{> 0}$, let $\bS_b$ be a $\sigma$-type feasible selection of length $b$ and $\bS_{b-1}$ be a $\sigma$-type feasible selection of length $b-1$.
Then, $|\bS_b \setminus \bS_{b-1}| = 1$.
\end{lemma}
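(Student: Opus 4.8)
The plan is to reduce the claim to a set-containment statement and then derive that containment from the constraints defining $\sigma$-type feasibility. The inequality $|\bS_b \setminus \bS_{b-1}| \geq |\bS_b| - |\bS_{b-1}| = 1$ is automatic, so the substance is the reverse bound $|\bS_b \setminus \bS_{b-1}| \leq 1$, which (since $|\bS_{b-1}| = b-1$) is exactly $\bS_{b-1} \subseteq \bS_b$. I would prove this by induction on $b$, isolating two structural facts about $\sigma$-type feasibility.

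First I would pin down the \emph{type profile} of a $\sigma$-type feasible set as a function of its size. By \cref{def:sigma-type-feasible-selection}, at every prefix such a set must achieve $\alpha_{\min} = \beta(\cdot)$, then minimize $|\bQ(\cdot)|$, and finally break all remaining ties by the total order $\sigma$; I would argue that these requirements, applied round by round, determine the count vector $(|\bS \cap \bT_1|, \ldots, |\bS \cap \bT_r|)$ uniquely in terms of $|\bS| = b$ --- call it $(n_1(b), \ldots, n_r(b))$ --- and that it is \emph{monotone}, i.e.\ $n_q(b-1) \leq n_q(b)$ for every type $q$. Monotonicity is where the notion of a $b$-minimal $\sigma$-type feasible set and \cref{lem:minimal-containment} enter: a $b$-minimal witness achieving $\beta(b)$ is forced to sit inside $\bS_b$, and comparing it with the analogous witness for budget $b-1$ shows that raising the budget never lowers any type's quota. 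Since $\sum_q n_q(b) = b = 1 + \sum_q n_q(b-1)$, monotonicity then leaves exactly one type $q^\ast$ with $n_{q^\ast}(b) = n_{q^\ast}(b-1) + 1$ and $n_q(b) = n_q(b-1)$ for all other $q$; moreover this shows that deleting the single element of type $q^\ast$ from $\bS_b$ produces a set that is itself $\sigma$-type feasible of size $b-1$.

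Second I would lift this profile-level agreement to the element level. Because every test in \cref{def:type-feasible-selection} and \cref{def:sigma-type-feasible-selection} sees a selection only through its per-type counts (together with which types are available in each $\bV^{(t)}$), once the profile is fixed there is no remaining freedom, so any two $\sigma$-type feasible sets of the same size must coincide. Applying this at size $b-1$ to $\bS_{b-1}$ and to the set $\bS_b$ minus its type-$q^\ast$ element gives $\bS_{b-1} = \bS_b \setminus \{e\}$ for that element $e$, hence $\bS_{b-1} \subseteq \bS_b$ and $|\bS_b \setminus \bS_{b-1}| = 1$.

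The hard part will be this second step --- passing from ``the type profiles of $\bS_{b-1}$ and $\bS_b$ differ in exactly one coordinate'' to the genuine inclusion $\bS_{b-1} \subseteq \bS_b$. It requires showing that $\sigma$-type feasibility fixes the selection itself and not merely its type profile, which forces one to argue that the interplay between the $\beta(\cdot)$-achieving witnesses, the $|\bQ(\cdot)|$-minimization, and the $\sigma$-ordering leaves no residual choice at any prefix; degenerate types (e.g.\ $p_q = 0$, or a type absent from some $\bV^{(t)}$) have to be treated separately so that the monotonicity and uniqueness arguments of the first step do not quietly break.
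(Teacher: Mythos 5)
Your reduction to $\bS_{b-1} \subseteq \bS_b$ and your choice of tools ($b$-minimal sets, \cref{lem:minimal-containment}, the $\sigma$ tie-breaking order) match the paper's, but the way you assemble them leaves a genuine gap at the decisive step, and also risks circularity. The circularity: the fact that the count vector $(n_1(b),\ldots,n_r(b))$ of a $\sigma$-type feasible selection is uniquely determined by $b$ is, in the paper, \cref{lem:type-feasible-equal-sequence}, whose inductive case is proved \emph{using} the extendability lemma you are asked to prove. You would need an independent, from-scratch proof of that uniqueness (your ``applied round by round'' remark is not one, since \cref{def:type-feasible-selection} is a condition on whole prefixes, not an element-by-element construction), and you do not supply it. The gap: your second step rests on the claim that ``any two $\sigma$-type feasible sets of the same size must coincide,'' which you yourself flag as the hard part but then leave unresolved; moreover, as stated it over-claims, because every test in \cref{def:type-feasible-selection,def:sigma-type-feasible-selection} sees a selection only through its per-type counts, so two same-size $\sigma$-type feasible sets can differ in \emph{which} elements of a given type they contain, and profile agreement alone does not yield the set inclusion $\bS_{b-1}\subseteq\bS_b$.

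The paper avoids needing full element-level uniqueness. It first proves the preliminary inequality $\min_q \frac{|\bS_{b-1}\cap\bT_q|}{p_q b} \leq \min_q \frac{|\bS_b\cap\bT_q|}{p_q b}$ (by noting that otherwise any size-$b$ superset of $\bS_{b-1}$ would beat $\bS_b$), disposes of the degenerate case where this minimum is zero, and then anchors a \emph{single common} $b$-minimal witness $\bA$ inside both $\bS_{b-1}$ and $\bS_b$ via \cref{lem:minimal-containment}. The containment then follows because both remainders $\bS_{b-1}\setminus\bA$ and $\bS_b\setminus\bA$ are drawn from $\bQ(\bA)$ following the same order $\sigma$, so the shorter remainder is a prefix of the longer one. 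If you want to salvage your outline, you should replace your step 2 with this anchoring argument (or explicitly recast the lemma and your claims at the level of type multisets, which is how the paper ultimately uses it in \cref{lem:type-feasible-equal-sequence}); you would also need to prove the preliminary min-ratio inequality before you can legitimately invoke \cref{lem:minimal-containment}, and handle the $p_q$-degenerate and zero-ratio cases you mention only in passing.
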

\begin{proof}
We first show that
\begin{equation}
\label{eq:extendable-ratio-inequality}
\min_{q \in [r]} \frac{|\bS_{b-1} \cap \bT_q|}{p_q \cdot b} \leq \min_{q \in [r]} \frac{|\bS_b \cap \bT_q|}{p_q \cdot b}
\end{equation}
Suppose, for a contradiction, that \cref{eq:extendable-ratio-inequality} is false.
Then, any strict superset of $\bA \supset \bS_{b-1}$ of size $b$ will have
\[
\min_{q \in [r]} \frac{|\bA \cap \bT_q|}{p_q \cdot b} 
\geq \min_{q \in [r]} \frac{|\bS_{b-1} \cap \bT_q|}{p_q \cdot b} 
> \min_{q \in [r]} \frac{|\bS_b \cap \bT_q|}{p_q \cdot b}
\]
since adding elements can only increase the numerator.
This contradicts the assumption that $\bS_b$ is a $\sigma$-type feasible selection of size $b$, so \cref{eq:extendable-ratio-inequality} must be true.

\bigskip
Now, suppose that $\min_{q \in [r]} \frac{|\bS_{b-1} \cap \bT_q|}{p_q \cdot b} = 0$.
Then,
\[
\alpha_{\min}(\bS_{b-1}) = \frac{b}{b-1} \cdot \min_{q \in [r]} \frac{|\bS_{b-1} \cap \bT_q|}{p_q \cdot b} = 0
\]
Since $\bS_{b-1}$ is $\sigma$-type feasible, it must be the case that $\bS_{b-1} \subseteq \bQ(\emptyset)$ and $\bS_{b-1}$ comprises of elements following the $\sigma$ ordering.
If $\alpha_{\min}(\bS_b) = 0$ as well, then so does $\bS_b$.
Otherwise, if $\alpha_{\min}(\bS_b) > 0$, then we must have $\bQ(\emptyset) \subseteq \bS_b$.
Both cases implies that $\bS_{b-1} \subset \bS_b$ and so $|\bS_b \setminus \bS_{b-1}| = 1$.
Note that we cannot have $\alpha_{\min}(\bS_{b-1}) > \alpha_{\min}(\bS_b)$ due to \cref{eq:extendable-ratio-inequality}.

Given the argument above, we may assume that $\min_{q \in [r]} \frac{|\bS_{b-1} \cap \bT_q|}{p_q \cdot b} > 0$ in the remaining of the proof.
This implies that there will exist non-empty $b$-minimal $\sigma$-type feasible sets which is a necessary condition for us to invoke \cref{eq:extendable-ratio-inequality}.

\bigskip
Suppose $\bS_{b-1}$ is a $b$-minimal $\sigma$-type feasible set.
Then, under \cref{eq:extendable-ratio-inequality}, \cref{lem:minimal-containment} yields $\bS_{b-1} \subseteq \bS$, so $|\bS_b \setminus \bS_{b-1}| = 1$.

\bigskip
Now, suppose that $\bS_{b-1}$ is \emph{not} a $b$-minimal $\sigma$-type feasible set.

\noindent
\textbf{Case 1}:
Suppose $\min_{q \in [r]} \frac{|\bS_{b-1} \cap \bT_q|}{p_q \cdot b} = \min_{q \in [r]} \frac{|\bS_b \cap \bT_q|}{p_q \cdot b}$.

Let $\bA$ be a $b$-minimal $\sigma$-type feasible subset such that
\[
\min_{q \in [r]} \frac{|\bA \cap \bT_q|}{p_q \cdot b} = \min_{q \in [r]} \frac{|\bS \cap \bT_q|}{p_q \cdot b}
\stackrel{\text{Case 1}}{=}
\min_{q \in [r]} \frac{|\bS' \cap \bT_q|}{p_q \cdot b}
\]
By \cref{lem:minimal-containment}, we have both $\bA \subseteq \bS_{b-1}$ and $\bA \subseteq \bS_b$.
Since $\bS_{b-1}$ is $\sigma$-type feasible, it maximizes $\beta(b-1)$ where
\[
\beta(b-1)
= \min_{q \in [r]} \frac{|\bS_{b-1} \cap \bT_q|}{p_q \cdot (b-1)}
= \frac{b}{b-1} \cdot \min_{q \in [r]} \frac{|\bA \cap \bT_q|}{p_q \cdot b}
\]
Since $\frac{b}{b-1}$ is a constant independent of $q \in [r]$, we see that $\bA$ also maximizes $\beta(b-1)$.
Since a $\sigma$-type feasible selection must minimize $|\bQ(\bS)|$ and $\bA$ is a minimal subset, we see that $\bS' \setminus \bA \subset \bQ(\bA)$.
Crucially, note that $\bQ(\bA) \setminus \bS' \neq \emptyset$, otherwise $\min_{q \in [r]} \frac{|\bA \cap \bT_q|}{p_q \cdot b} < \min_{q \in [r]} \frac{|\bS' \cap \bT_q|}{p_q \cdot b}$.
Following a similar argument, one can show that $\bS_b \setminus \bA \subset \bQ(\bA)$.
Now, since both $\bS_{b-1}$ and $\bS_b$ are $\sigma$-type feasible, they must select from $\bQ(\bA)$ in the ordering of $\sigma$.
In other words, we have $\bS_{b-1} \setminus \bA \subseteq \bS_b \setminus \bA$, so $\bS_{b-1} \subseteq \bS_b$, and thus $|\bS_b \setminus \bS_{b-1}| = 1$.
\\

\noindent
\textbf{Case 2}:
Suppose $\min_{q \in [r]} \frac{|\bS_{b-1} \cap \bT_q|}{p_q \cdot b} < \min_{q \in [r]} \frac{|\bS_b \cap \bT_q|}{p_q \cdot b}$.

Let $\bB$ be a $b$-minimal $\sigma$-type feasible subset such that
\[
\min_{q \in [r]} \frac{|\bB \cap \bT_q|}{p_q \cdot b} = \min_{q \in [r]} \frac{|\bS' \cap \bT_q|}{p_q \cdot b}
\stackrel{\text{Case 2}}{<}
\min_{q \in [r]} \frac{|\bS \cap \bT_q|}{p_q \cdot b}
\]
By \cref{lem:minimal-containment}, we have both $\bB \subseteq \bS_{b-1}$ and $\bB \subseteq \bS_b$.
Following a similar argument as Case 1, we see that $\bS_{b-1} \setminus \bB \subset \bQ(\bB)$.
However, since $\min_{q \in [r]} \frac{|\bB \cap \bT_q|}{p_q \cdot b} < \min_{q \in [r]} \frac{|\bS_b \cap \bT_q|}{p_q \cdot b}$, we must have $\bQ(\bB) \subseteq \bS$.
In other words, we get $\bS_{b-1} \subseteq \bS_b$ and thus $|\bS_b \setminus \bS_{b-1}| = 1$.
\end{proof}

To operationalize the abstract requirements of $\sigma$-type feasible selections (\cref{def:sigma-type-feasible-selection}), let us define the notion of min-ratio type sequences (\cref{def:min-ratio-type-sequence}).

\begin{definition}[Min-ratio type sequence]
\label{def:min-ratio-type-sequence}
Let $\bp = (p_1, \ldots, p_r) \in \R_{\geq 0}^r$ be a vector of lower distributional bounds, $\sigma: [r] \to [r]$ be a total order over types, $b$ be a size and let $d \in \N$ be a fixed denominator. 
The \emph{min-ratio type sequence} $\bs(\bp, \sigma,b, d) \in [r]^\N$ is defined by starting with the empty sequence and repeating the following steps $b$ times: append to the end of $\bs(\bp, \sigma,b, d)$ the type $q \in [r]$ that minimizes $\frac{\#(q, \bs(\bp, \sigma,b, d))}{p_q \cdot d}$, breaking ties according to the preference ordering $\sigma$.
Since $p$ and $\sigma$ are fixed inputs to any problem instance, we simply write $\bs(b,d)$ to mean $\bs(\bp, \sigma, b, d)$.
\end{definition}

\cref{lem:min-ratio-type-sequence-is-independent-of-denominator} shows that \cref{def:min-ratio-type-sequence} is independent of the denominator, which will be useful in our analysis later.

\begin{restatable}{lemma}{minratiotypesequenceisindependentofdenominator}
\label{lem:min-ratio-type-sequence-is-independent-of-denominator}
For fixed $\bp \in \R_{\geq 0}^r$, $\sigma: [r] \to [r]$, and $b \in \N$, we have $\bs(b, d_1) = \bs(b, d_2)$ for any $d_1, d_2 \in \N_{> 0}$.
\end{restatable}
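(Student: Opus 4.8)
The plan is to show that the greedy selection process defining $\bs(b, d)$ in \cref{def:min-ratio-type-sequence} makes identical choices at every step regardless of whether the denominator is $d_1$ or $d_2$, and to do this by induction on the number of elements appended (from $0$ up to $b$). The key observation is that at each step, the rule appends the type $q$ minimizing $\frac{\#(q, \bs)}{p_q \cdot d}$, and the denominator $d$ enters only as a \emph{global} positive scaling factor $\frac{1}{d}$ applied uniformly to every type $q \in [r]$. Since $\argmin_{q \in [r]} \frac{\#(q, \bs)}{p_q \cdot d} = \argmin_{q \in [r]} \frac{\#(q, \bs)}{p_q}$ for any fixed $d > 0$ (scaling all quantities being compared by the same positive constant $\frac{1}{d}$ does not change which is smallest), the minimizing set of types is the same for $d_1$ and $d_2$. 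The tie-breaking rule $\sigma$ depends only on the types, not on $d$, so ties among the minimizers are resolved identically as well.

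Concretely, I would set up the induction as follows. Let $\bs_1^{(j)}$ and $\bs_2^{(j)}$ denote the length-$j$ prefixes produced by the construction with denominators $d_1$ and $d_2$ respectively, for $j \in \{0, 1, \ldots, b\}$. The base case $j = 0$ is immediate: both are the empty sequence. For the inductive step, assume $\bs_1^{(j)} = \bs_2^{(j)} =: \bs$. Then for each type $q \in [r]$, $\#(q, \bs)$ is a well-defined number not depending on which denominator was used, and the next appended type is
\[
\argmin_{q \in [r]} \frac{\#(q, \bs)}{p_q \cdot d_i} = \argmin_{q \in [r]} \frac{\#(q, \bs)}{p_q}
\]
with $\sigma$-tie-breaking, which is the same for $i = 1$ and $i = 2$. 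Hence $\bs_1^{(j+1)} = \bs_2^{(j+1)}$. Taking $j = b$ gives $\bs(b, d_1) = \bs_1^{(b)} = \bs_2^{(b)} = \bs(b, d_2)$.

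There is essentially no substantive obstacle here — the statement is a routine invariance under positive rescaling — but the one point requiring a small amount of care is the treatment of types $q$ with $p_q = 0$, where the ratio $\frac{\#(q, \bs)}{p_q \cdot d}$ is $+\infty$ (or undefined). I would handle this by noting that such types are never minimizers as long as any type with $p_q > 0$ remains available (and if \emph{all} $p_q = 0$, the ratio is vacuously $+\infty$ for every type and the tie-break by $\sigma$ alone governs the choice); in either case the decision is independent of $d$, so the induction goes through unchanged. It would be worth stating this edge-case convention explicitly at the start of the proof to match whatever convention \cref{def:min-ratio-type-sequence} intends.
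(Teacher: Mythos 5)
Your proof is correct and follows essentially the same route as the paper's: induction on the prefix length, with the key observation that the denominator is a uniform positive scaling factor that leaves the $\argmin$ (and the $\sigma$-tie-breaking) unchanged. Your explicit handling of the $p_q = 0$ edge case is a small addition the paper omits, but it does not change the argument.
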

\begin{proof}
We will prove this by induction over $b \in \N$.

\noindent
\textbf{Base case} ($b = 0$):
When $b = 0$, we trivially have both $\bs(0, d_1)$ and $\bs(0, d_2)$ being the empty sequence.

\noindent
\textbf{Inductive case} ($b > 0$):
By induction hypothesis, we know that $\bs(b-1, d_1) = \bs(b-1, d_2)$, so $\#(q, \bs(b-1, d_1)) = \#(q, \bs(b-1, d_2))$ for any $q \in [r]$.
Since $\frac{\#(q, \bs(b-1, d_1))}{p_q \cdot d_1} = \frac{d_2}{d_1} \cdot \frac{\#(q, \bs(b-1, d_2))}{p_q \cdot d_2}$ for any $q \in [r]$ and $\frac{d_2}{d_1}$ is a constant independent of $q$, we have $\argmin_{q \in [r]} \frac{\#(q, \bs(b-1, d_1))}{p_q \cdot d_1} = \argmin_{q \in [r]} \frac{\#(q, \bs(b-1, d_2))}{p_q \cdot d_2}$, and thus both sequences will append the same $b$-th type, resulting in $\bs(b, d_1) = \bs(b, d_2)$.
\end{proof}

Observe that \cref{def:sigma-type-feasible-selection} and \cref{def:min-ratio-type-sequence} are both defined independently of the budget constraints across time.
The former is an abstract axiomatic definition of the desired properties motivated from our Ethiopian application while the latter is a specific concrete way of choosing a sequence of types.
Interestingly, \cref{lem:type-feasible-equal-sequence} tells us that multisets of types produced by the two definitions are the same.

\begin{restatable}{lemma}{typefeasibleequalsequence}
\label{lem:type-feasible-equal-sequence}
For any length $b \in \N$, there is a unique length $b$ $\sigma$-type feasible selection $\bS \subseteq \bV$ satisfying
\[
\multiset{ \type(e) \in [r]: e \in \bS } = \multiset{ q \in [r]: q \in \bs(b, b) }
\]
\end{restatable}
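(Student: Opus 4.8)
The plan is to induct on the length $b$, showing that the greedy min-ratio rule of \cref{def:min-ratio-type-sequence} and the ``match $\beta$, then minimize $|\bQ|$, then break ties by $\sigma$'' rule underlying \cref{def:sigma-type-feasible-selection} are forced to agree on the type added at each step. For the base case $b = 0$, the empty set is the unique length-$0$ selection, it is vacuously $\sigma$-type feasible, and $\bs(0,0)$ is the empty sequence, so the multiset identity and uniqueness hold trivially.

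For the inductive step, assume the statement for $b-1$ and let $\bS_{b-1}$ be a length-$(b-1)$ $\sigma$-type feasible selection with $\multiset{\type(e): e \in \bS_{b-1}} = \multiset{q : q \in \bs(b-1,b-1)}$. First I would apply \cref{lem:min-ratio-type-sequence-is-independent-of-denominator} to obtain $\bs(b-1,b-1) = \bs(b-1,b)$, so that $\bs(b-1,b)$ is precisely the length-$(b-1)$ prefix of $\bs(b,b)$ and carries the same type multiset as $\bS_{b-1}$. Then, for an arbitrary length-$b$ $\sigma$-type feasible selection $\bS_b$, \cref{lem:sigma-type-feasible-is-extendable} gives $|\bS_b \setminus \bS_{b-1}| = 1$, which together with $|\bS_b| = b$ and $|\bS_{b-1}| = b-1$ forces $\bS_{b-1} \subseteq \bS_b$, say $\bS_b = \bS_{b-1} \uplus \{e\}$. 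It then remains only to identify $\type(e)$ with the type $q_b$ appended at the $b$-th step of building $\bs(b,b)$, namely the $\sigma$-least element of $\argmin_{q \in [r]} |\bS_{b-1} \cap \bT_q| / (p_q \cdot b)$, where we used the induction hypothesis to rewrite $\#(q, \bs(b-1,b)) = |\bS_{b-1} \cap \bT_q|$.

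For that identification I would compare $\bS_b = \bS_{b-1} \uplus \{e\}$ against the alternative $\bS_{b-1} \uplus \{e'\}$ with $e' \in \bT_{q_b}$. Since $|\bS_b| = b$, each $\alpha_q(\bS_b)$ has denominator $b$, and adding a single element of type $q^{*}$ changes only the coordinate $\alpha_{q^{*}}$ to $(|\bS_{b-1} \cap \bT_{q^{*}}| + 1)/(p_{q^{*}} \cdot b)$. If $\type(e) = q^{*} \neq q_b$ then the minimizing coordinate is still at $q_b$, so $\alpha_{\min}(\bS_b) \leq |\bS_{b-1} \cap \bT_{q_b}|/(p_{q_b} \cdot b)$, whereas the alternative with $e' \in \bT_{q_b}$ attains at least this value; if it attains strictly more, the first condition of \cref{def:type-feasible-selection} is violated, and if it merely ties, then moving $q_b$ out of the tight set strictly decreases $|\bQ(\cdot)|$, violating the second condition unless $\type(e)$ is itself one of the tied minimizers --- and among the tied minimizers the $\sigma$-preference in \cref{def:sigma-type-feasible-selection} selects $q_b$. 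A short separate argument covers the degenerate regime $\alpha_{\min} = 0$ where not all types are yet represented (already dissected in the proof of \cref{lem:sigma-type-feasible-is-extendable}). As $\bS_b$ was arbitrary, this shows every length-$b$ $\sigma$-type feasible selection has type multiset $\multiset{q : q \in \bs(b,b)}$, giving uniqueness of the multiset; existence follows by realizing these type counts with concrete elements of $\bV$ and verifying $\sigma$-type feasibility by the same comparison.

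The step I expect to be the main obstacle is reconciling the two rules at the level of denominators and tie-breaking: \cref{def:min-ratio-type-sequence} normalizes by a fixed denominator, whereas \cref{def:type-feasible-selection} normalizes $\alpha_q$ by the running size $|\bS^{(1:t)}|$ and phrases its secondary objective through $|\bQ(\cdot)|$ rather than directly through $\sigma$. Showing that the running normalization is immaterial (again via \cref{lem:min-ratio-type-sequence-is-independent-of-denominator}) and that minimizing $|\bQ(\cdot)|$ collapses, for a single incremental element, to the $\sigma$ ordering is where the care lies; one must also be attentive to availability, so that $\bT_{q_b}$ indeed contains an element not already in $\bS_{b-1}$.
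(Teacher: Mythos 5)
Your proposal is correct and follows essentially the same route as the paper's proof: induction on $b$, using \cref{lem:min-ratio-type-sequence-is-independent-of-denominator} to make $\bs(b-1,\cdot)$ a prefix of $\bs(b,b)$ and \cref{lem:sigma-type-feasible-is-extendable} to force $\bS_{b-1}\subset\bS_b$, then observing that both rules append the $\sigma$-preferred type from $\bQ(\bS_{b-1})$. Your final identification step is in fact spelled out in more detail than the paper's, which simply asserts that both extensions choose from $\bQ(\bS_{b-1})$ with ties broken by $\sigma$.
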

\begin{proof}
The claim trivially holds when $r = 1$, so we shall consider $r > 1$.
We will prove this by induction over $b \in \N$.

\hspace{0pt}\\
\noindent
\textbf{Base case} ($b = 1$):
Let $\bS$ be any type feasible solution of size $|\bS| = 1$ according to \cref{def:type-feasible-selection}.
Since $r > 1$, we have $\beta(1) = 0$, there exists some $q \in [r]$ such that $\bS \cap \bT_q = \emptyset$, and $|\bQ(\bS)| = r - 1$.
Since $\bS$ is type feasible, it has to break ties according to $\sigma$.
Thus, $\bS = \{ q \in [r] : \sigma(q) = 1 \}$ is the unique type feasible solution of size $1$.
Meanwhile, \cref{def:min-ratio-type-sequence} produces $\bs(1,1) = (q^*)$, where $\sigma(q^*) = 1$.
This is because all ratios are $0$ initially and ties are broken according to $\sigma$.
So, $\bS = \bs(1,1)$ as desired.

\hspace{0pt}\\
\noindent
\textbf{Inductive case} ($b > 1$):
Let $\bS_{b-1}$ be any type feasible solution of length $|\bS_{b-1}| = b-1$.
By induction hypothesis, $\bS_{b-1}$ is unique and $\bS_{b-1} = \bs(b-1, b-1)$.
By \cref{lem:min-ratio-type-sequence-is-independent-of-denominator}, we know that $\bs(b-1, b-1) = \bs(b-1, b)$.
Since $\bS_{b-1} = \bs(b-1, b)$, we have $|\bS_{b-1} \cap \bT_q| = \#(q, \bs(b-1, b))$ for all types $q \in [r]$.
By definition, both $\bs(b-1, b)$ will extend by choosing something from $\bQ(\bS_{b-1})$ and break ties according to $\sigma$.
Meanwhile, we know from \cref{lem:sigma-type-feasible-is-extendable} that $\bS_{b-1} \subset \bS_b$, so we can also obtain $\bS_b$ by extending $\bS_{b-1}$ by choosing something from $\bQ(\bS_{b-1})$ and break ties according to $\sigma$.
As such, both sequences will extend by appending the same type, we have $\bS = \bs(b,b)$ as desired.
\end{proof}

For convenience, we reproduce \OurAlg{} below, which was designed with \cref{def:min-ratio-type-sequence} in mind.
\cref{lem:typewise-our-output-equal-sequence} below tells us that any choice of independent set of size $b^{(t)}$ at each time step by \OurAlg{} results in the same multiset of types as the prefix of the min-ratio type sequence of length $\sum_{t=1}^h b^{(t)}$.
Note that \OurAlg{} may not choose the elements whose type sequence agrees exactly with \cref{def:min-ratio-type-sequence}, but the total counts will agree for any given budget.

\begin{algorithm}[htb]
\caption{\OurAlg{}}
\label{alg:ouralg-reproduced}
\begin{algorithmic}[1]
\Statex \textbf{Input}: An $\OurProblem{}(f, h, r, \bV, \bb, \bp)$ instance and a tie-breaking ordering $\sigma: [r] \to [r]$
\Statex \textbf{Output}: A $\sigma$-type feasible selection $\bS = \bS^{(1:h)}$
\State Define $\bS^{(1)} = \ldots = \bS^{(h)} = \emptyset$ \Comment{Initialize selections}
\For{$t = 1, \ldots, h$}
    \State For $q \in [r]$, define $\#(q, \bs(b^{(1:0)}, b^{(1:0)})) = 0$ and
    \[
    x^{(t)}_q = \#(q, \bs(b^{(1:t)}, b^{(1:t)})) - \#(q, \bs(b^{(1:t-1)}, b^{(1:t-1)}))
    \]
    \State Let $\cM^{(t)} = (\bV^{(t)}, \cI^{(t)})$ be the partition matroid at
    \Statex\hspace{\algorithmicindent}time $t$, where $\cI^{(t)} = \{ \bS \subseteq \bV^{(t)}: | \bS \cap \bT_q | \leq x^{(t)}_q \}$.
    \For{$b^{(t)}$ times} \Comment{$\sum_{q \in [r]} x^{(t)}_q = b^{(t)} \leq |\bV^{(t)}|$}
        \State Let $\bS = \bigcup_{t=1}^h \bS^{(t)}$ be the selection so far
        \State Let $e^* = \argmax\limits_{\substack{e \in \bV^{(t)} \setminus \bS\\ \bS^{(t)} \cup \{e\} \in \cI^{(t)}}} g(\bS, \{e\})$ to $\bS^{(t)}$
    \EndFor
\EndFor
\State Output $\bS^{(1)} \uplus \ldots \uplus \bS^{(h)}$
\end{algorithmic}
\end{algorithm}

\begin{restatable}{lemma}{typewiseouroutputequalsequence}
\label{lem:typewise-our-output-equal-sequence}
For any time step $t \in [h]$, let $b^{(1:t)} = \sum_{\tau=1}^t b^{(t)}$ and $\cM^{(t)} = (\bV^{(t)}, \cI^{(t)})$ be the matroid defined in \OurAlg{}.
Then, we have $\bm^{t}_1 = \bm^{t}_2$ for all $t \in [h]$, where multisets $\bm^{t}_1, \bm^{t}_2 \in [r]^{b^{(1:t)}}$ are defined as
\[
\bm^{t}_1 = \multiset{ q \in [r]: q \in \bs(\bp, \sigma, b^{(1:t)}, b^{(1:t)}) }
\]
and
\[
\bm^{t}_2 = \multiset{ \type(e) \in [r]: e \in \bigcup_{\tau=1}^t \bS^{(\tau)}, \text{ where}
\bS^{(\tau)} \in \cI^{(\tau)} \text{ and } |\bS^{(\tau)}| = b_\tau \quad \forall \tau \in [t]}
\]
\end{restatable}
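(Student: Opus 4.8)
The plan is to establish the claimed multiset identity by a direct counting argument over rounds, with no induction required. The key structural fact is that the per-type quotas $x^{(\tau)}_1, \ldots, x^{(\tau)}_r$ defined in \OurAlg{} are exactly tight at each round $\tau \in [h]$. First I would record that $\sum_{q \in [r]} x^{(\tau)}_q = b^{(\tau)}$: by \cref{def:min-ratio-type-sequence} the sequence $\bs(b^{(1:\tau)}, b^{(1:\tau)})$ has exactly $b^{(1:\tau)}$ entries, so $\sum_{q} \#(q, \bs(b^{(1:\tau)}, b^{(1:\tau)})) = b^{(1:\tau)}$, and likewise with $\tau-1$ in place of $\tau$ (using $b^{(1:0)} = 0$ and the convention $\#(q, \bs(b^{(1:0)}, b^{(1:0)})) = 0$), so the two totals differ by $b^{(1:\tau)} - b^{(1:\tau-1)} = b^{(\tau)}$. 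Consequently, for \emph{any} selection $\bS^{(\tau)} \in \cI^{(\tau)}$ with $|\bS^{(\tau)}| = b^{(\tau)}$, summing the partition-matroid constraints $|\bS^{(\tau)} \cap \bT_q| \le x^{(\tau)}_q$ over $q$ and using that $\bT_1 \uplus \ldots \uplus \bT_r = \bV$ gives $b^{(\tau)} = |\bS^{(\tau)}| = \sum_q |\bS^{(\tau)} \cap \bT_q| \le \sum_q x^{(\tau)}_q = b^{(\tau)}$; hence every inequality is an equality, i.e.\ $|\bS^{(\tau)} \cap \bT_q| = x^{(\tau)}_q$ for all $q \in [r]$. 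This already shows that $\bm^{t}_2$ is well defined, i.e.\ independent of which matroid-feasible, budget-saturating selections $\bS^{(1)}, \ldots, \bS^{(t)}$ are used.

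Next I would aggregate across rounds. Because $\bV^{(1)}, \ldots, \bV^{(t)}$ are pairwise disjoint, so are $\bS^{(1)}, \ldots, \bS^{(t)}$, and therefore for each type $q \in [r]$ the number of elements of type $q$ in $\bigcup_{\tau=1}^t \bS^{(\tau)}$ equals $\sum_{\tau=1}^t |\bS^{(\tau)} \cap \bT_q| = \sum_{\tau=1}^t x^{(\tau)}_q$. Substituting the definition of $x^{(\tau)}_q$ makes this sum telescope: the term $\#(q, \bs(b^{(1:\tau)}, b^{(1:\tau)}))$ appears with a plus sign in summand $\tau$ and with a minus sign in summand $\tau+1$, and since the length and denominator parameters coincide in every occurrence these are literally the same sequence, so no appeal to \cref{lem:min-ratio-type-sequence-is-independent-of-denominator} is needed. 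What survives is $\sum_{\tau=1}^t x^{(\tau)}_q = \#(q, \bs(b^{(1:t)}, b^{(1:t)})) - \#(q, \bs(b^{(1:0)}, b^{(1:0)})) = \#(q, \bs(b^{(1:t)}, b^{(1:t)}))$, which is precisely the multiplicity of $q$ in $\bm^{t}_1$. As this holds for every $q \in [r]$, the multisets $\bm^{t}_1$ and $\bm^{t}_2$ agree, proving the lemma. (If one prefers an inductive phrasing on $t$, the same computation works, and there \cref{lem:min-ratio-type-sequence-is-independent-of-denominator} is what identifies $\bs(b^{(1:t-1)}, b^{(1:t-1)})$ with the length-$b^{(1:t-1)}$ prefix of $\bs(b^{(1:t)}, b^{(1:t)})$; the non-inductive version above avoids this.)

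This argument is essentially bookkeeping, so there is no genuinely hard step; the one point that warrants care is the tightness claim $|\bS^{(\tau)} \cap \bT_q| = x^{(\tau)}_q$ --- it is this exact saturation, rather than a mere upper bound, that lets an \emph{arbitrary} matroid-feasible selection realize the target type counts (so, in particular, the tie-break $\sigma$ and the specific elements chosen are irrelevant to the multiset), which in turn is why \OurAlg{} may greedily maximize marginal gain without explicitly tracking type deficits. A minor caveat worth flagging in the write-up is that the identity presupposes at least one admissible family $(\bS^{(1)}, \ldots, \bS^{(t)})$ exists, i.e.\ $|\bV^{(\tau)} \cap \bT_q| \ge x^{(\tau)}_q$ for all $\tau$ and $q$; in the Ethiopian regime (few facilities per round relative to district sizes) this is non-binding, and otherwise it can be treated as a standing feasibility assumption on the instance.
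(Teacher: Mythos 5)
Your proof is correct and follows essentially the same route as the paper's: both hinge on the observation that any matroid-feasible, budget-saturating selection at round $\tau$ must contain \emph{exactly} $x^{(\tau)}_q$ elements of each type $q$ (since the quotas sum to $b^{(\tau)}$), and then aggregate the type counts across rounds. The only difference is organizational --- you telescope the sum $\sum_{\tau} x^{(\tau)}_q$ directly, which lets you bypass \cref{lem:min-ratio-type-sequence-is-independent-of-denominator} for the counting step, whereas the paper phrases the same computation as an induction on $t$ and invokes that lemma to identify $\bs(b^{(1:t-1)}, b^{(1:t-1)})$ with the prefix of $\bs(b^{(1:t)}, b^{(1:t)})$.
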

\begin{proof}
We will prove this claim via an induction over $t \in \{1, \ldots, h\}$.

\hspace{0pt}\\
\noindent
\textbf{Base case} ($t=1$):
By definition of $\cM^{(1)}$, any independent set of size $b^{(1)}$ in $\cI^{(1)}$ involves $x^{(1)}_q = \#(q, \bs(b^{(1)}, b^{(1)}))$ items of type $q \in [r]$ such that $\sum_{q \in [r]} x^{(1)}_q = b^{(1)}$.
So, the multisets $\bm^{(t)}_2 = \bm^{(t)}_1$ both count the types appearing in $\bs(b_{1}, b_{1})$.

\hspace{0pt}\\
\noindent
\textbf{Inductive case} ($t > 1$):
By definition of $\cM^{(t)}$, any independent set of size $b^{(t)}$ in $\cI^{(t)}$ involves
\begin{align*}
x^{(t)}_q
&= \#(q, \bs(b^{(1:t)}, b^{(1:t)})) - \#(q, \bs(b^{(1:t-1)}, b^{(1:t-1)})) \tag{By definition of $x^{(t)}_q$ in \cref{alg:ouralg}}\\
&= \#(q, \bs(b^{(1:t)}, b^{(1:t)})) - \#(q, \bs(b^{(1:t-1)}, b^{(1:t)})) \tag{By \cref{lem:min-ratio-type-sequence-is-independent-of-denominator}}
\end{align*}
items of type $q \in [r]$ such that $\sum_{q \in [r]} x^{(t)}_q = b^{(t)}$.
Let $\bx^{(t)}$ be the multiset of size $b^{(t)}$ such that $q \in [r]$ appears $x^{(t)}_q$ times.
That is, the multiset $\bx^{(t)}$ counts the types appearing in the length $b^{(t)} = b^{(1:t)} - b^{(1:t-1)}$ suffix of $\bs(b^{(1:t)}, b^{(1:t)})$.

By \cref{lem:min-ratio-type-sequence-is-independent-of-denominator}, we also know that $\bs(b^{(1:t-1)}, b^{(1:t-1)}) = \bs(b^{(1:t-1)}, b^{(1:t)})$, so $\bs(b^{(1:t-1)}, b^{(1:t-1)})$ is simply the length $b^{(1:t-1)}$ prefix of $\bs(b^{(1:t)}, b^{(1:t)})$ according to \cref{def:min-ratio-type-sequence}.
By induction hypothesis, this means that $\bm^{(t-1)}_2$ is the multiset counting the types appearing in the length $b^{(1:t-1)}$ prefix of $\bs(b^{(1:t)}, b^{(1:t)})$.
So, the multisets $\bm^{(t)}_2 = \bm^{(t-1)}_2 \cup \bx^{(t)} = \bm^{(t)}_1$ both count the types appearing in $\bs(b^{(1:t)}, b^{(1:t)})$.
\end{proof}

We are now ready to prove our \cref{thm:apx-ratio-of-our-algo} which states the theoretical guarantees of \OurAlg{} with respect to $\sigma$-type feasible selections.

\apxratioofouralgo*
\begin{proof}
In each time step $t \in [h]$, \OurAlg{} can compute the $x^{(t)}_q$ values in $\cO(r \cdot b^{(t)})$ time by appropriate bookkeeping, then perform $b^{(t)}$ greedy selections, each requiring the calculation of at most $|\bV^{(t)}|$ marginal gains.
The overall running time across all time steps is $\cO(\sum_{t=1}^h b^{(t)} \cdot (r + |\bV^{(t)}|))$.

We now show the three items separately.

\medskip
\noindent
1. We always choose an independent set containing elements from $\bV^{(t)}$ at each time step $t \in [h]$, up to size $b^{(t)}$.

\medskip
\noindent
2. \cref{lem:type-feasible-equal-sequence} and \cref{lem:typewise-our-output-equal-sequence} jointly ensures that \emph{any} output of \OurAlg{} is a $\sigma$-type feasible selection.

\medskip
\noindent
3. \cref{lem:type-feasible-equal-sequence} and \cref{lem:typewise-our-output-equal-sequence} jointly ensures that \emph{any} $\sigma$-type feasible solution is a \emph{possible} output of \cref{alg:ouralg}.
As $\bV^{(1)}, \ldots, \bV^{(h)}$ are disjoint, $\cI_1, \ldots, \cI_h$ jointly induce a \emph{single} partition matroid which enforces $\sum_{q=1}^r x^{(t)}_q = b^{(t)}$ at each time step $t \in [h]$.
Since \OurAlg{} is performs greedy selection at each time step over a single matroid, it yields a $\frac{1}{2}$-approximation to the offline optimum selection at any time step $t \in [h]$; just like \textsc{LocalGreedy} with $k = 1$.
\end{proof}

\section{Deferred technical details for \texorpdfstring{\cref{sec:impossibility}}{Section 4.3}}

Here, we provide the full proofs of the impossibility results stated in \cref{sec:impossibility}.

\typefeasibleforfutureproof*
\begin{proof}
Consider an arbitrary \OurProblem{} instance $\cM$ with $r$ types for which there is some timestep $t$ such that $\ALG$ maximizes $\beta(b^{(1:t)})$ but did not minimize $|\bQ(\bS^{(t)})|$.
Suppose $|\bQ(\bS^{(t)})| = k \leq r$ while it could have been of size $\ell < k$.
We now define a new \OurProblem{} instance $\cM'$ obtained by using the prefix of $\cM$ up to time $t$, and adding one additional time step $b^{(t+1)} = \ell$.
Since $\cM$ and $\cM'$ are indistinguishable up to time step $t$, and $\ALG$ is deterministic, $\ALG$ must choose the same selections up to time step $t$, resulting in $|\bQ(\bS^{(t)})| = k > \ell$.
Thus, for the final time step $t+1$ for $\cM'$, $\ALG$ would have $\beta(b^{(1:t+1)}) = \beta(b^{(1:t)})$ instead of being able to improve the satisfaction ratio if it had minimized $|\bQ(\bS^{(t)})|$ at time step $t$ to be of size $\ell$.
\end{proof}

We prove \cref{prop:impossibility-tight} (reproduced below for convenience) separately for the bound of $\frac{1}{2}$ and $\frac{k}{k+1}$ in \cref{prop:half-ratio-is-tight} and \cref{prop:no-sigma-is-tight} respectively.

\impossibilitytight*
\begin{proof}
See \cref{prop:half-ratio-is-tight} and \cref{prop:no-sigma-is-tight}.
\end{proof}

\begin{restatable}{proposition}{halfratioistight}
\label{prop:half-ratio-is-tight}
There exists instances where \OurAlg{} achieves an approximation ratio of at most $\frac{1}{2}$ due to budget uncertainties.
\end{restatable}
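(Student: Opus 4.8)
The plan is to exhibit a tiny parametrised family of \OurProblem{} instances on which the greedy commitments of \OurAlg{} are irrecoverably wasteful, driving the ratio $f(\bS)/f(\OPT_\sigma)$ down to $\tfrac{1}{2}$. I would work in the regime where proportionality plays no role: a single type ($r=1$) with $p_1=0$, so that every selection is trivially type feasible (hence $\sigma$-type feasible), $\OPT_\sigma$ coincides with the unconstrained offline optimum, and \OurAlg{} degenerates to the locally greedy algorithm on the partition matroid induced by the per-round budgets. The claim then reduces to the classical fact that locally greedy on a single matroid cannot beat $\tfrac{1}{2}$, and the remaining work is to realise a tight instance inside the \OurProblem{} template.

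Concretely, I would take horizon $h=2$ with unit budgets $b^{(1)}=b^{(2)}=1$, a coverage objective $f(\bS)=|\bigcup_{e\in\bS}C_e|$ (non-decreasing and submodular by the same argument as in \cref{prop:ethiopia-f-is-monotone-submodular}), and ground set $\bV^{(1)}=\{a_1,a_2\}$, $\bV^{(2)}=\{b\}$ over the universe $\{1,\dots,2N\}\cup\{\star\}$, for a parameter $N\in\N_{>0}$. I would choose $C_{a_1}=\{N+1,\dots,2N\}\cup\{\star\}$, $C_{a_2}=\{1,\dots,N\}$, and $C_b=\{N+1,\dots,2N\}$, so that $a_1$ has \emph{slightly} larger standalone value than $a_2$ yet its coverage is entirely duplicated by the round-$2$ element $b$, while $a_2$'s coverage is disjoint from that of $b$.

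The verification is a short trace of \OurAlg{}: with $r=1$ each round's quota is $x^{(t)}_1=b^{(t)}=1$; at round $1$ it selects $\argmax\{f(\{a_1\}),f(\{a_2\})\}=a_1$ because $|C_{a_1}|=N+1>N=|C_{a_2}|$ (a strict inequality, so the $\argmax$ is unambiguous and no adversarial tie-breaking is needed); at round $2$ it is forced to add $b$. Thus $f(\bS)=|C_{a_1}\cup C_b|=N+1$, whereas the feasible selection $\{a_2,b\}$ achieves $f(\{a_2,b\})=|C_{a_2}\cup C_b|=2N$, so $f(\OPT_\sigma)=2N$ and $f(\bS)/f(\OPT_\sigma)\le(N+1)/(2N)=\tfrac{1}{2}+\tfrac{1}{2N}$. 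Letting $N\to\infty$ (equivalently, $N>1/(2\eps)$ for a target $\eps>0$) shows that no approximation factor exceeding $\tfrac{1}{2}$ can be guaranteed, so the $\tfrac{1}{2}$ appearing in \cref{thm:apx-ratio-of-our-algo} is tight; an instance attaining \emph{exactly} $\tfrac{1}{2}$ is obtained by instead setting $C_{a_1}=C_b=\{N+1,\dots,2N\}$ and breaking the resulting round-$1$ tie in favour of $a_1$.

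The only genuine design decision --- and the step I would be most careful about --- is engineering the instance so that the element greedy prefers at round $1$ (by standalone marginal gain, the only information available before the next budget arrives) is exactly the one whose coverage the later round-$2$ option renders redundant, while the element greedy discards is precisely the one that the optimum pairs with $b$. The ``$\star$'' item exists solely to make greedy's first choice strictly preferred rather than adversarially chosen, and I would double-check that this single padding item cannot be exploited by the optimum to push the ratio above $\tfrac{1}{2}$ in the limit --- it cannot, since it contributes $O(1)$ against an optimum of order $N$. Everything else is routine evaluation of a coverage function on a handful of sets.
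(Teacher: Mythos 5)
Your proposal is correct and is essentially the paper's own construction: both use $r=1$, $h=2$, unit budgets, two round-1 candidates where greedy's marginally-better standalone choice is rendered redundant by the forced round-2 element, and both take a limit to show the ratio tends to $\tfrac{1}{2}$. The only difference is that you realize the objective as an explicit coverage function (making submodularity immediate), whereas the paper tabulates the values of $f$ directly and asserts submodularity.
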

\begin{proof}
Fix values $x, \eps \geq 0$, where $\eps$ is to be treated as a vanishingly small value.
Suppose $r = 1$, $h = 2$, $b^{(1)} = b^{(2)} = 1$, $\bV^{(1)} = \{a, b\}$, $\bV^{(2)} = \{c\}$, and
\[
f(\bS) =
\begin{cases}
0 & \text{if $\bS = \emptyset$}\\
x + \eps & \text{if $\bS = \{a\}$}\\
x & \text{if $\bS = \{b\}$}\\
x + 2 \eps & \text{if $\bS = \{c\}$ or $\bS = \{a,c\}$}\\
2x + 2 \eps & \text{if $\bS = \{b,c\}$}
\end{cases}
\]
One can verify that $f$ is a non-decreasing submodular function.
\OurAlg{} will greedily select $\bS^{(1)} = \{a\}$ and $\bS^{(2)} = \{c\}$, obtaining an objective value of $f(\{a,c\}) = x + 2 \eps$.
Meanwhile, an optimal selection that operates without budget uncertainty will select $\bS^{(1)} = \{b\}$ and $\bS^{(2)} = \{c\}$, obtaining an objective value of $f(\{a,c\}) = 2x + 2 \eps$.
We see that the competitive ratio is $\lim_{\eps \to 0} \frac{x + 2 \eps}{2x + 2 \eps} = \frac{1}{2}$.
\end{proof}

\begin{restatable}{proposition}{nosigmaistight}
\label{prop:no-sigma-is-tight}
There exists instances where \OurAlg{} achieves an approximation ratio of at most $\frac{k}{k+1}$ due to $\sigma$-ordering.
\end{restatable}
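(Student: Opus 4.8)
The plan is to give, for each integer $k \in \N_{>0}$, a single-round instance ($h = 1$, which rules out any budget-uncertainty loss) on which an optimal $\sigma$-type feasible selection --- hence, by \cref{thm:apx-ratio-of-our-algo}, the output of \OurAlg{} --- is worth exactly a $\frac{k}{k+1}$ fraction of an optimal type feasible selection. I would take $r = 2$ types, tie-breaking order $\sigma(1) = 1 < \sigma(2) = 2$, equal proportions $p_1 = p_2 = \frac{1}{2}$, and a single budget $b^{(1)} = 2k+1$. The ground set $\bV = \bV^{(1)}$ has $k+1$ elements of type $1$ and $k+1$ elements of type $2$. For the objective I would use the coverage-type function where each type-$2$ element covers a private population unit of weight $1$ (pairwise disjoint across type-$2$ elements) while type-$1$ elements cover nothing, so that $f(\bS) = |\bS \cap \bT_2|$ for all $\bS \subseteq \bV$; this is non-decreasing and submodular (indeed modular), hence a legitimate \OurProblem{} objective. (If a strictly increasing $f$ is preferred, give each type-$1$ element a private unit of weight $\eps$ and let $\eps \to 0$; this does not change the limiting ratio.)

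Next I would characterize the feasible selections. Because $p_1 = p_2 = \frac{1}{2}$ and $2k+1$ is odd, the best-balanced split of the budget places $k+1$ elements on one type and $k$ on the other, achieving $\alpha_{\min} = \frac{2k}{2k+1} = \beta(2k+1)$; every such split attains $|\bQ(\cdot)| = 1$, which is the minimum possible (it cannot be $0$ since the minority type is present in $\bV^{(1)}$). So a type feasible selection may put the extra element on type $2$: this gives an $\OPT$ with $|\OPT \cap \bT_2| = k+1$ and $f(\OPT) = k+1$. A $\sigma$-type feasible selection, by contrast, must break the remaining tie --- which type to ``fill'' --- in favor of the $\sigma$-smaller type $1$; concretely, by \cref{def:min-ratio-type-sequence} the length-$(2k+1)$ min-ratio type sequence is the round-robin $1,2,1,2,\dots,1$, containing $k+1$ copies of type $1$ and only $k$ of type $2$. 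By \cref{lem:type-feasible-equal-sequence} and \cref{lem:typewise-our-output-equal-sequence}, every $\sigma$-type feasible selection on this instance --- and every output of \OurAlg{} --- therefore has exactly $k$ type-$2$ elements, i.e.\ value exactly $k$. Hence $f(\OPT_\sigma) = k$ and the output of \OurAlg{} also has value $k$.

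Combining the two values yields an approximation ratio of exactly $\frac{k}{k+1}$ for \OurAlg{} on this instance, with no contribution from budget uncertainty ($h = 1$) and none from the greedy $\frac{1}{2}$ factor of \cref{thm:apx-ratio-of-our-algo} (all $\sigma$-type feasible selections are tied in value here). Since $|\OPT_\sigma \cap \bV^{(1)} \cap \bT_q| \geq k$ for both $q \in \{1,2\}$, the same instance simultaneously certifies that the $\frac{k+1}{k}$ bound of \cref{thm:no-sigma} is tight.

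The only delicate point is the tie-breaking bookkeeping around $\sigma$-type feasibility, which I would verify carefully: (i) that with equal proportions the min-ratio type sequence for an odd budget is exactly the stated round-robin, so the ``extra'' slot is forced onto type $1$; (ii) that the split with $k+1$ type-$2$ elements meets both requirements of \cref{def:type-feasible-selection} (matching $\beta$ and minimizing $|\bQ(\cdot)|$); and (iii) that \cref{def:sigma-type-feasible-selection} forces the opposite split. Each is a one-line check against the definitions, but the entire construction hinges on getting the direction of the tie-break right, so that --- rather than any computation --- is where the care goes.
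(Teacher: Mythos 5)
Your construction is correct and is essentially the paper's own proof: the paper uses the same single-round instance with general $r$, budget $b^{(1)} = rk+1$, equal proportions, objective $f(\bS) = |\bS \cap \bT_r|$, and $\sigma$ favoring smaller indices, so that the one ``extra'' slot is forced onto type $1$ (value $k$) while an optimal type feasible selection places it on type $r$ (value $k+1$). Your version simply specializes to $r = 2$ and verifies the tie-breaking bookkeeping in more detail, which does not change the argument.
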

\begin{proof}
Fix $k \in \N$ and the ordering $\sigma: [r] \to [r]$ which favors smaller indexed types.
That is, when tie-breaking for satisfaction ratio, the unselected type(s) have large indices.
Suppose $|\OPT_{\sigma} \cap \bV^{(t)} \cap \bT_q| \geq k$ for all $t \in [h]$ and $q \in [r]$.
Now suppose $h = 1$, $b^{(1)} = rk + 1$, $f(\bS) = |\bS \cap \bT_r|$, and there are more than $b^{(1)}$ elements of each type.
One can check that that $f$ is a non-decreasing submodular function.
Then, \OurAlg{} will select $k$ elements of each type and then one element of type $1$ due to $\sigma$, obtaining an objective value of $k$.
Meanwhile, an optimal selection will select $k$ elements of each type and then one element of type $r$, obtaining an objective value of $k+1$.
Thus, the competitive ratio is $\frac{k}{k+1}$.
\end{proof}

\section{Results for Sidama Region}
\label{sec:appendix-experiment-sidama}

\subsection{Experiment 1: Impact of budget on coverage}
In \cref{fig:budgetapp}, the bars show how total coverage improves under different allocation strategies and budget levels
over a five-year planning horizon. Since the initial coverage is already high, the measured improvement is marginal. 

\begin{figure*}[htb]
    \centering
    \includegraphics[width=1.0\linewidth]{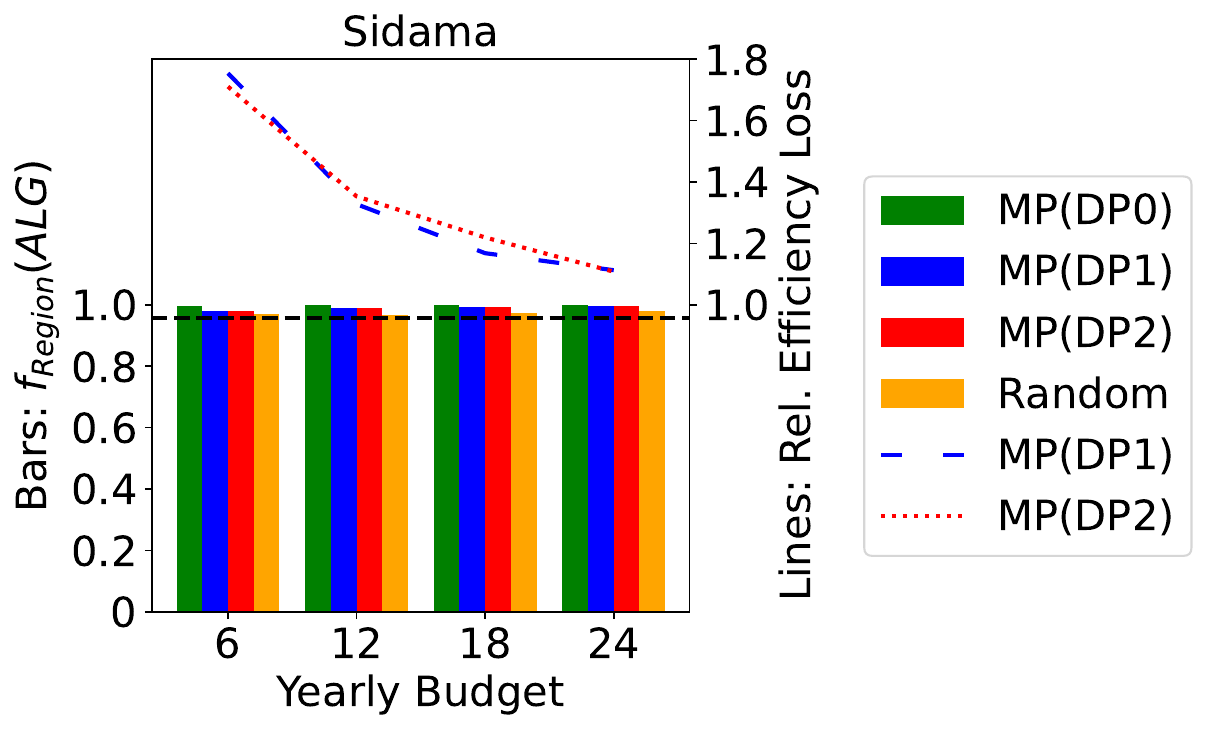}
    \caption{Coverage under varying annual budgets for Sidama region. Bars show total population coverage by each policy, with the black line indicating existing coverage before any yearly budget was spent to build additional health posts. The dashed line above indicate relative efficiency loss from enforcing proportional constraints for DP$1$ and DP$2$, compared to the unconstrained baseline (DP$0$).}
    \label{fig:budgetapp}
\end{figure*}

\subsection{Experiment 2: Evaluating district-level equity}
\begin{figure}[htb]
    \centering
    \includegraphics[width=0.9\columnwidth]{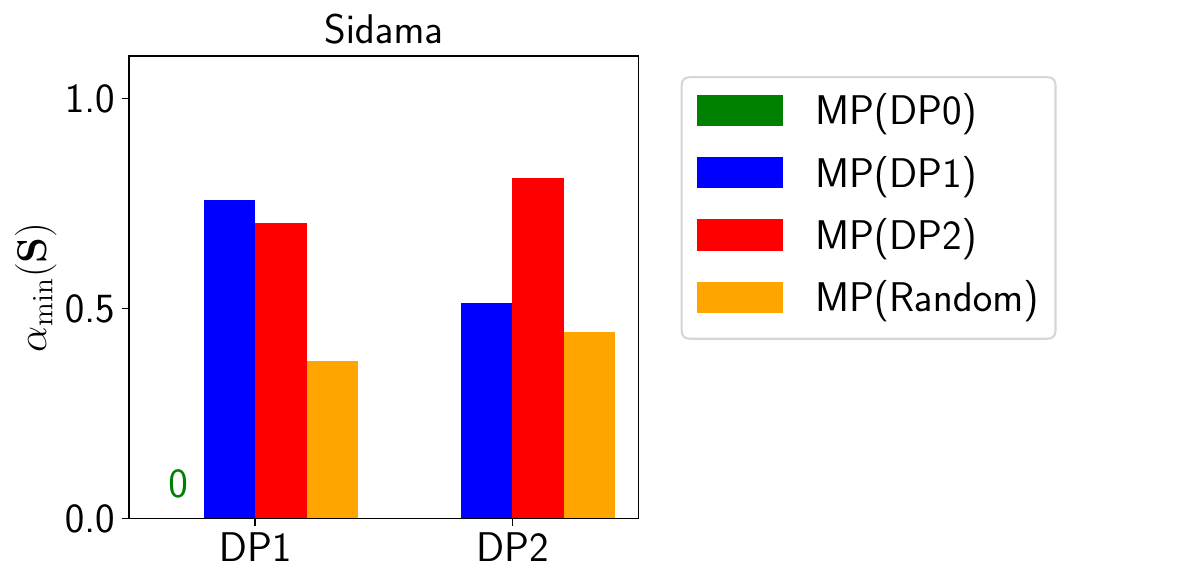}
    \caption{Minimum satisfaction ratio $\alpha_{\min}$ attained under each policy, with proportions defined by DP$1$ or DP$2$ respectively. Over five-year horizon, annual budgets of $12$ facilities were allocated for Sidama.}
\label{fig:typesappendix}
\end{figure}

Here we examine the minimum satisfaction ratio, $\alpha_{\min}(\bS)$, achieved under each policy.
As shown in \cref{fig:typesappendix}, \textsc{MP}(DP$1$) and \textsc{MP}(DP$2$) outperform all other methods in the DP$1$ and DP$2$ metrics respectively, while \textsc{MP}(DP$0$) consistently attains $\alpha_{\min} = 0$.
Interestingly, although in~\cref{fig:budgetapp}, building facilities resulted in mild effect on the coverage, the effect on the district-level equity is more noticeable.

\subsection{Experiment 3: High quality advice can help}

The results in this section are similar to the results in the main paper.
Out of around $30$ districts in Sidama, we found $11$ districts in which our learning-augmented selection $\bU$ outperforms both the greedy baseline
$\bG$ and the expert selection $\bA$, i.e., $f(\bU) > f(\bG) > f(\bA)$.

\section{Further experimental details}
\label{sec:appendix-experiment}

All experiments were conducted using Python 3.12 on two systems:
(i) a Linux-based server with 48-core Intel Xeon CPUs and 190GB of RAM, and
(ii) a Windows laptop running WSL with 8 cores and 16GB of RAM.

Population projections were obtained from the WorldPop dataset\footnote{\url{https://data.worldpop.org/GIS/Population/Global_2015_2030/R2024B/2026/ETH/v1/1km_ua/constrained/eth_pop_2026_CN_1km_R2024B_UA_v1.tif}}.
Friction surface data for walking time computation was sourced from the MalariaAtlas GitHub repository\footnote{\url{https://github.com/malaria-atlas-project/malariaAtlas}}, using an R script we provide.
Administrative shapefiles were downloaded from the United Nations Office for the Coordination of Humanitarian Affairs (OCHA)\footnote{
\url{https://data.humdata.org/dataset/cb58fa1f-687d-4cac-81a7-655ab1efb2d0/resource/63c4a9af-53a7-455b-a4d2-adcc22b48d28/download/eth_adm_csa_bofedb_2021_shp.zip}}.

For Experiment $3$ (retrospective analysis), we fixed a random seed of $0$, and generated $10$ permutations of the original advice set to serve as inputs to our learning-augmented algorithm.

Several data inputs (such as district-level rates of home births and postnatal care, as well as facility registries) request permission and thus cannot  be shared.
To ensure usability, we include code that generates synthetic examples, allowing others to replicate our methods.

\paragraph{Reproducibility focus.}
Our goal is to make our methodological contribution reproducible and reusable.
All core components of our tool are fully data-agnostic and will yield the same visualizations and planning outputs when provided with actual Ethiopia-specific data. 
Once these data become publicly available, all results in the main paper can be replicated by plugging them into our codebase.
The code is available at \url{https://github.com/yohayt/OHCE/}.

\end{document}